\definecolor{lightorange}{HTML}{ff7f2a}
\definecolor{lighterorange}{HTML}{ffe6d5}
\definecolor{lavender}{HTML}{E6E6FA}
\definecolor{watergreen}{HTML}{AFEEEE}
\definecolor{lightpastelpurple}{rgb}{0.69, 0.61, 0.85}
\newtcolorbox{summarybox}{colback=lavender,colframe=lightpastelpurple}
\definecolor{blueviolet}{HTML}{2D2F92}
\setlist{nosep}
\title[On the Identifiability of Quantized Factors]{On the Identifiability of Quantized Factors}
\begin{document}
\maketitle

\begin{abstract}%
Disentanglement  aims to recover meaningful latent ground-truth factors from the observed distribution solely, and is formalized through the theory of identifiability. The identifiability of independent latent factors has been proven to be impossible in the unsupervised i.i.d. setting under a general nonlinear map from factors to observations.
In this work, however, we demonstrate that it is possible to recover \emph{quantized} latent factors under a generic nonlinear diffeomorphism. We only assume that the latent factors have \emph{independent discontinuities} in their density, without requiring the factors to be statistically independent. We introduce this novel form of identifiability, termed \emph{quantized factor identifiability}, and provide a comprehensive proof of the recovery of the quantized factors.
\end{abstract}

\begin{keywords}%
  identifiability, disentanglement, causal representation learning, quantized representations, discrete representations
\end{keywords}

\section{Introduction}

A large part of intelligence is based on the ability to make sense of observed sensory data without explicit supervision. The goal of representation learning is, thus, to detect and model relevant structure in the distribution of observed data, and expose it into useful compact representations, to facilitate generalization and sample-efficient learning of subsequent tasks.
One long-standing goal in that respect has been that of structuring the representation into \emph{disentangled factors}~\citep{bengio2013representation}. These may be conceived of as ``natural'' ground truth, descriptive, or causal variables that underlie the observations.
A vector representation consisting of recovered disentangled factors may be viewed as corresponding to a natural Cartesian coordinate system for the observations, whereby each varying factor is associated with an axis.

Identifiability theory formalizes the foundations of disentanglement by precisely delimiting the conditions under which it is possible. Unsupervised disentanglement of latent factors has been found impossible in the general nonlinear setting in the absence of further inductive bias~\citep{locatello2019challenging}. This result echoes an older identification impossibility result on nonlinear Independent Component Analysis~\citep{hyvarinen1999nonlinear}.
As a result, much subsequent work has sidestepped the issue either via stronger inductive biases, such as more restrictive assumptions on the function that maps latent factors to observations~\citep{buchholz2022function,kivva2022identifiability, ahuja2022interventional, brady2023provably, lachapelle2023additive}, for instance sparsity of its Jacobian~\citep{moran2022identifiable, zheng2022identifiability, zheng2023generalizing},
or by turning to weakly supervised disentanglement, using some form of additional information (see related works in Appendix~\ref{sec:relatedwork}).

Provided that they corresponds to valid assumptions, inductive biases should undoubtedly be used in practice whenever available, as well as any additional supervisory signals.
However, in the present theoretical work, we revisit and tackle the problem of fully unsupervised identifiability of latent factors, the most challenging setting. We assume a generic smooth invertible nonlinear mapping: a diffeomorphism. No additional assumptions are made on the mapping, and the assumption of the factors being mutually independent is also discarded. 

Given the previous theoretical impossibility results for unsupervised identifiability under a diffeomorphism, a shift in our approach was necessary. We relax the notion of identifiability of continuous factors to that of identifiability of quantized continuous factors. 

The promise of quantized, grid-like representations has been argued empirically in both machine learning and neuroscience.
It has been suggested that the brain of humans and other animals organizes spatial knowledge and relational concepts into codes that have an hexagonal grid-like pattern \citep{constantinescu16, Whittington770495}.
In representation learning, vector quantization has shown enormous success in image generation \citep{vqvae}. Concurrent studies investigate empirically this explicit relationship with disentanglement \citep{hsu2023disentanglement}, and further explore quantization in grid-structured representations \citep{mentzer2023finite, irie2023topological, friede2023learning}.

However, none of these provide a supporting identifiability theory for quantized factors. 
In the present work, we first formalize this novel relaxed form of identifiability. We, then, provide a full proof of the identifiability of quantized factors under a general diffeomorphism. This is achieved by assuming, rather than the mutual independence of factors, the presence of \emph{independent discontinuities} in the joint probability density \footnote{{More precisely, these are \textit{non-removable discontinuities} in the PDF, as will be elaborated in Section~\ref{sec:discontinuities-preservation}.}} of the latent factors.

\begin{figure}[!ht]
    \centering
    \includegraphics[scale=0.26]{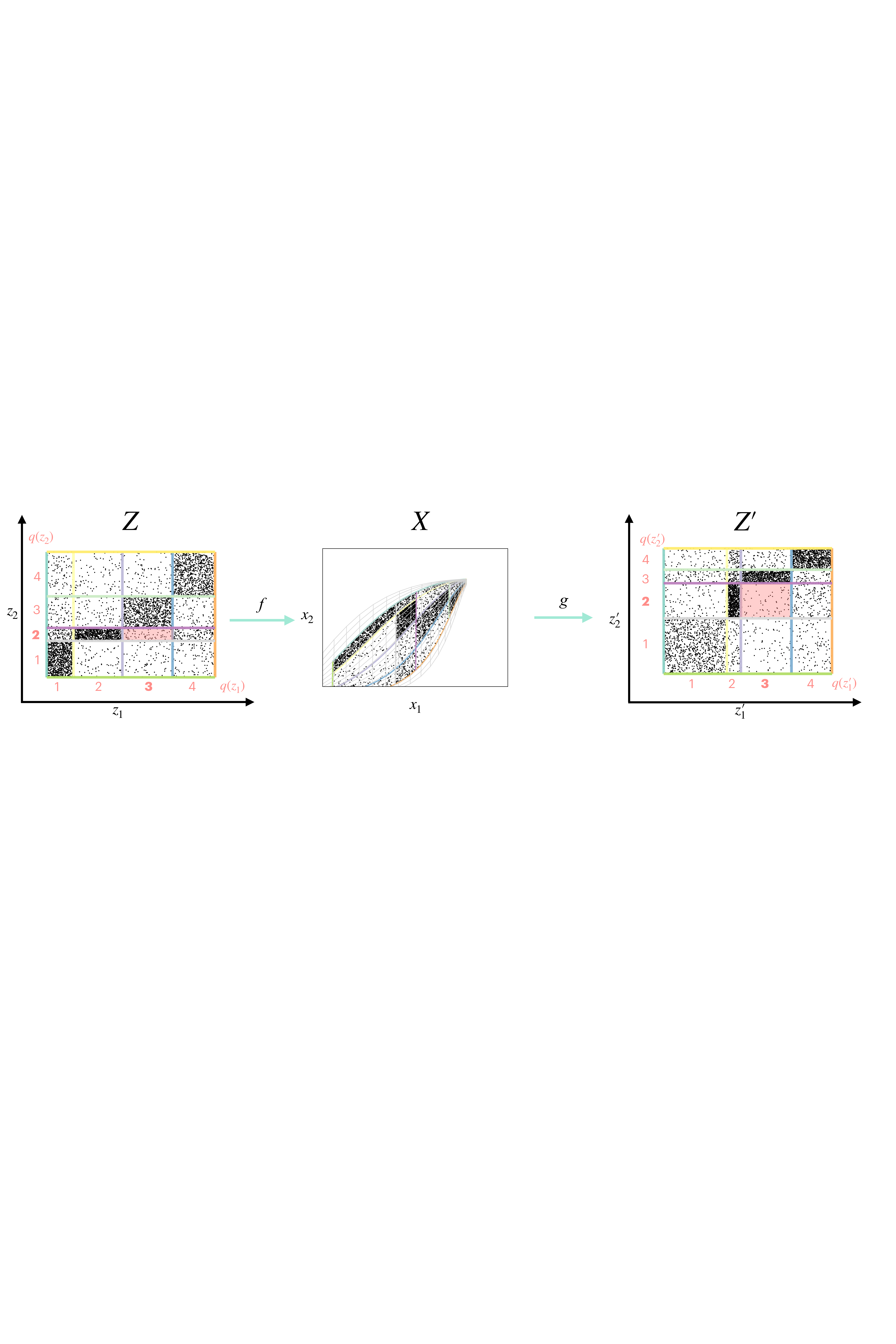}
\caption{
\textbf{ Recovery of quantized factors.
}
\textbf{Left:} The true (continuous) latent factors $Z_1$ and $Z_2$ are not independent, but their joint probability density $p_Z$ has \emph{independent discontinuities}: sharp changes in the density that are aligned with the axes and form a grid. \textbf{Middle:} The factors get warped and entangled by the diffeomorphism $f$ into observations $X$, but the discontinuities in their density survive in the observed space. \textbf{Right:} We can learn a diffeomorphism $g$ that yields a density $p_{Z'}$ having axis-aligned discontinuities. This suffices to recover a grid whose cells match the initial grid's cells (up to possible permutation and axis reversal). 
\textbf{{\color{pink}Pink cell} example:} the points $Z'$ in cell $(3,2)$ originated from the points $Z$ in cell $(3,2)$. To construct these cells, the quantization of each continuous factor to an integer depends on thresholds based on the location of the discontinuities. The quantizations of $Z'_1$ and $Z'_2$ match precisely the quantizations of $Z_1$ and $Z_2$, up to possible permutation and axis reversal. This summarizes the \emph{identifiability of quantized factors} under diffeomorphisms.
}
    \label{fig:main}
\end{figure}

Our contributions are the following:
\begin{itemize}
\item We introduce and formalize a novel relaxed form of representation identifiability: \emph{quantized factor identifiability}.
\item We provide the first proof of representation identifiability under a general diffeomorphic map, which sets itself apart from the impossibility results that dominate the field.
\end{itemize}

We hope that this novel theoretical foundation may provide useful insights to develop algorithms of practical relevance for robustly learning disentangled representations.

\section{From precise factor identifiability to quantized factor identifiability}

In this section, we define and contrast the standard form of factor identifiability, which we term ``precise factor identifiability'', with our new relaxed ``quantized factor identifiability'' paradigm.

\subsection{Setup}

We suppose that we have access to observations in $\mathcal{X}\subset\mathbb{R}^{D}$.
They are realizations of the vector random variable $X=(X_{1},\ldots,X_{D})$,
which is assumed to be a transformation of a real vector of unobserved
latent factors $Z=(Z_{1},\ldots,Z_{d})$, i.e. $X=f(Z)$, via a \emph{bijective} mapping
$f:\mathcal{Z}\rightarrow\mathcal{X}$ where $\mathcal{Z\subset\mathbb{R}}^{d}$.
The mapping $f$ is called the \emph{mixing map}, which is unknown but is assumed to belong
to a broad function class. The latent factors $Z$ follow a distribution represented by the probability density function (PDF) $p_{Z}$, which is also unknown but is typically subject to assumptions.
This induces a distribution for $X$ whose PDF is denoted by $p_{X}$.
The mapping $g:\mathcal{X}\rightarrow\mathcal{Z}$ approximates $f^{-1}$ at the optimum.

The setup is summarized in the following diagram. \vspace{-1mm}

\[
\overset{h=g\circ f}{  \underbrace{Z}_{\underset{\textrm{true factors}}{\subset\mathbb{R}^{d}}} \sim p_{Z} \overrightarrow{\xrightarrow[\mathrm{unknown}]{\;f\;} \underbrace{X}_{\underset{\textrm{observed data}}{\subset\mathbb{R}^{D}}}\xrightarrow[\mathrm{learned}]{g\approx f^{-1}} } 
\underbrace{Z'}_{\underset{\textrm{recovered factors}}{\subset\mathbb{R}^{d}}}  }
\]

In identifiability theory, the distribution of observations $p_X$ is supposedly known.
Alternatively, for this level of precision, observed samples from $p_X$ can be considered with the sample size approaching infinity.
Identifiability theorems also need to make clear assumptions on the mixing map $f$ and on the density of factors $p_X$.

In the remainder of this section, we first formalize the usual factor identifiability as well as our proposed relaxation to quantized factor identifiability in the general case. Subsequently, we focus on the case where $f$ is a diffeomorphism.

\subsection{Precise Factor Identifiability}

The usual, precise factor identifiability theorems amount to statements of the following form:

\begin{summarybox}
\paragraph{Precise Identifiability of Factors:}
Knowledge of $p_X$ is sufficient to determine a reverse mapping $g: \mathbb{R}^D \rightarrow \mathbb{R}^d$ that will yield recovered factors $(Z'_1, \ldots, Z'_d) = g(X)$ that correspond one-to-one to the ground-truth factors $(Z_1, \ldots, Z_d)$, up to permutation and component-wise invertible transformations (ideally monotonic). 
\end{summarybox}

Formally: there exists an indices permutation function $\sigma$ and invertible scalar functions $\gamma_i$ such that $\forall i \in \{1,\ldots,d\}, \gamma_i(Z'_i) = Z_j$ with $j=\sigma(i)$.
Precise factor identifiability theorems require specifying assumptions on $f$ and on $p_Z$.

\subsection{Quantization of factors}
\label{sec:quantizationop}

Let us now specify how the factors can be quantized.
For simplicity, we consider that each factor is a real-valued scalar. A real number $z$ may be quantized to an integer based on a tuple of real thresholds $T$ via the following quantization operation: 

\begin{equation}
    Q(z; T) = \sum_{k=1}^{|T|} \mathbf{1}_{z \geq T_k}. 
    \label{eq:quantization}
\end{equation}

For example, consider $z \in [0,4]$ and $T=(0.5, 2.0)$. Then $Q(z; T) = \mathbf{1}_{z \geq 0.5} + \mathbf{1}_{z \geq 2.0}$. So  $Q(z; T) = \begin{cases}
    0, \quad 0 \leq z < 0.5 \\
    1, \quad 0.5 \leq z < 2 \\
    2, \quad 2 \leq z \leq 4.
\end{cases}$

We also define quantization with order reversal as
$Q^-(z; T) = \sum_{k=1}^{|T|} \mathbf{1}_{z \leq T_k}$. 
For convenience, we will use the notation $Q^{(s)}$ to mean $Q$ if $s=+1$ and $Q^-$ if $s=-1$.

The set of specific thresholds used for quantizing a random variable $Z_i$ is typically derived from some properties of its distribution $p_{Z_i}$ (e.g. a set of $|T|$ specific quantiles).
We will consider the more general case, where the thresholds for quantizing $Z_i$ might be determined not only based on $p_{Z_i}$, but more generally, on $i$ and on the joint probability density of all factors $p_Z$.
The operation returning a set of thresholds to be used for a factor $Z_i$ is denoted by
$\mathcal{T}(p_Z, i)$. Thus, the quantization of $Z_i$ may be written as: 
$q_i(Z_i) = Q(Z_i; \mathcal{T}(p_Z,i))$.

\subsection{Quantized Factor Identifiability}

Quantized factor identifiability theorems will  be statements of the following form:

\begin{summarybox}
\paragraph{Identifiability of Quantized Factors:}
Knowledge of $p_X$ is sufficient to determine a reverse mapping $g: \mathbb{R}^D \rightarrow \mathbb{R}^d$ that will yield recovered factors $(Z'_1, \ldots, Z'_d) = g(X)$ such that their quantization $(q'_1(Z'_1), \ldots, q'_d(Z'_d))$ will correspond one-to-one to the quantized ground-truth factors $(q_1(Z_1), \ldots, q_d(Z_d))$, up to possible permutation of indices and order reversal. 
\end{summarybox}

Formally, there exists an indices permutation function $\sigma$ and order-reversal indicators $s_i \in \{-1,+1\}$ such that: $\forall i \in \{1,\ldots,d\},\; q'_i(Z'_i) = q_j(Z_j)$, with $j=\sigma(i)$, where $q_j$ and $q'_i$ are monotonic quantization functions.
We can, more precisely, define $q_j$ as $q_j(Z_j) = Q(Z_j; \mathcal{T}(p_Z,j))$, and $q'_i(Z'_i) = Q^{s_i}(Z'_i; \mathcal{T}(p_{Z'},i))$.
The precise operation $\mathcal{T}$ that determines how the quantization thresholds are obtained from properties of the distributions remain to be specified by the particular quantized factor identifiability theorem.

Hence, quantized factor identifiability theorems require specifying assumptions on $f$, assumptions on $p_Z$, as well as a precise quantization operation.
For the quantization to be meaningful, it should produce at least two non-empty bins. That is, for any given factor, the respective factor samples will be mapped to at least two different quantized values. Quantization to a single all-encompassing bin is trivially identifiable and useless.

We highlight that quantized factor identifiability does not intend to prove identifiability when the true factors take a discrete set of values.
Instead, we define a relaxed form of identifiability for \emph{continuous} ground-truth factors. Quantization leads to a loss of precision/resolution, resulting in a coarser identification.

\section{What to assume on \texorpdfstring{$p_Z$ when} \texorpdfstring{$f$} is a diffeomorphism}

From now on, we will turn our attention to the case where the mixing map $f$ is assumed to be a general \emph{diffeomorphism}, that is, a continuously differentiable function with a continuously differentiable inverse. For the remaining of the paper, we will assume that all diffeomorphisms considered are smooth, i.e.\ $C^{\infty}$, in order to simplify the statements. The goal is to learn the approximate inverse diffeomorphism $g$. First, let us discuss what assumptions we should make on the distribution of factors $p_Z$ that may yield a positive identifiability result.

\subsection{Disentanglement, independence, and discontinuities}

Disentanglement has been equated to finding statistically independent factors \citep{khemakhem2020variational}, but statistical independence has been criticized as an unrealistic and problematic assumption \citep{ traeuble2021correlation,dittadi2021realistic,roth2022disentanglement} whose association to disentanglement is misleading. For example, the usual \emph{descriptive factors} with which we describe scenes are usually not statistically independent. Consider the variables color, shape, and background: bananas tend to be yellow; cows tend to be on grass backgrounds; camels tend to be on sand. 

Moreover, a primary interest for learning disentangled representations is as an enabler of robust generalization under distribution shifts. From this perspective, we should aim for factor discovery approaches that are stable and insensitive to broad changes in the (unknown) distribution of the factors, such as whether they happen to be independent or correlated in the data. Aiming for extracting statistically independent factors will, by construction, be very sensitive to this, which goes contrary to the desired robustness.

Lastly, and most importantly for our goal of characterizing what form of unsupervised identification is possible under a diffeomorphism, assuming statistical independence is insufficient, as previous impossibility results for nonlinear ICA have shown \citep{hyvarinen1999nonlinear}. 
This is fundamentally due to the extreme flexibility of diffeomorphisms. Even more discouraging, \citet{buchholz2022function} have shown that even if we knew $p_Z$ precisely, we could not achieve precise factor identifiability. This is because a diffeomorphism can move data points along isosurfaces of $p_Z$ while keeping the same $p_Z$, thus rendering entangled representations indistinguishable from disentangled ones.

To prevent this movement along isolines of $p_Z$ from taking points from one region to another of the factor space, there could be barriers of discontinuity separating different regions of $p_Z$.
We develop this justification for the need for discontinuities more precisely in Appendix \ref{app:necessity_landmarks}, based on the result from \citet{buchholz2022function}.

Another perspective to consider is that discontinuities are among the few characteristics of a density that diffeomorphisms can neither erase nor create (Theorem \ref{thm:discontinuities-preservation}). Hence, they are good candidates for holding cues in $p_Z$ guaranteed to survive in some form when mapped to $X$ via any diffeomorphism. Thus, if they were indicative of coordinate axes in $Z$, there is a prospect of recovering them from the resulting discontinuities in $p_X$.

Therefore, to enable the identifiability of quantized factors under a diffeomorphism, we will not assume that $p_Z$ implies statistically independent factors, but rather that it has \emph{independent discontinuities}, which we define precisely in the next section. 

\subsection{Independent discontinuities in the probability density}

Here, we contrast the statistical independence of factors with our approach, the independence of discontinuities. We will assume that there are discontinuities in the PDF of the factors, and that the location of these discontinuities in the density of any given factor is independent of the values of all the other factors. 

\begin{summarybox}
\begin{definition}
Let $\mathcal{S}$ be the support of $p_Z$. We say that $p_Z$ has an \textbf{independent discontinuity} at $Z_i=\tau$ when
every point in the intersection of the \emph{coordinate hyperplane} $\{\mathbf{z}_i = \tau\}$ with $\mathcal{S}$ is a non-removable discontinuity of $p_Z$. Formally, this independent discontinuity at $Z_i=\tau$ is defined as the set $\Gamma_\mathcal{S}(i,\tau)=\{\mathbf{z}\in \mathcal{S}|\mathbf{z}_i=\tau\}$ under the condition that $\forall \mathbf{z} \in \Gamma_\mathcal{S}(i,\tau)$, $p_Z$ has a non-removable discontinuity at $\mathbf{z}$.
\end{definition}
\end{summarybox}

Such discontinuities are ``independent'' in the sense that we have a discontinuity at $Z_i=\tau$ regardless of the values taken by the other factors.
Only the locations of the discontinuities in the density need to be independent of the other factors. This does not impose statistical independence of the factors, nor anything else wherever the density is continuous. Thus, assuming the presence of independent discontinuities can accommodate statistically independent factors as well as correlated factors. 

Geometrically, an independent discontinuity in $p_Z$ corresponds to a \emph{coordinate hyperplane} restricted to the support of $p_Z$.
This hyperplane is orthogonal to the $Z_i$ axis and parallel to all the other axes. We will, thus, interchangeably call it an \textbf{independent discontinuity} or \textbf{axis-aligned discontinuity}.

For our theorems, we will further require that the interior of the support of the density is connected. A connected independent discontinuity that splits this support in two is said to be an \textbf{axis-separator} (formally defined in Section~\ref{sec:grid-structure-recovery}). 
If the set of all non-removable discontinuities of $p_Z$ is the union of a finite set of such axis-separators, with at least one along each axis, then we say that they form an \textbf{axis-aligned grid}.
Figure~\ref{fig:main} (left) gives an example of two factors that are clearly not statistically independent but that have independent discontinuities in their PDF, appearing to the eye as axis-aligned discontinuities along each axis. Altogether, they from an axis-aligned grid.

Independent discontinuities are striking landmarks in the PDF landscape $p_Z$ that remain detectable in $p_{X}$ and $p_{Z'}$. A diffeomorphic map, even though it can warp the space in almost arbitrary ways, will not be able to erase such discontinuities. These are the robust cues that we can rely on to achieve quantized factor disentanglement under a diffeomorphism.

\section{Overview of the main quantized identifiability result}

In a nutshell:

\begin{summarybox}
\label{sec:discretized-coordinates-identifiability}

\paragraph{Assumptions}
\begin{itemize}
    \item $f$ is a diffeomorphism
    \item $(Z_1,\ldots,Z_d) \sim p_Z$ are $d$ continuous random variables.
    \item The interior of the support of $p_Z$ is a connected set.
    \item The set of non-removable discontinuities of $p_Z$ is the union of a finite set of independent discontinuities -- at least one along each dimension -- that together form an \emph{axis-aligned grid}. This grid must also possess a \emph{backbone} (precisely defined in the next section).
\end{itemize}
\end{summarybox}

\begin{summarybox}
\paragraph{Quantized factor identifiability theorem}
Under the above assumptions:
\begin{itemize}
    \item It suffices to learn a diffeomorphism $g$ yielding $Z'=g(X)$ such that the PDF of $p_{Z'}$ has independent discontinuities forming an axis-aligned grid.
    \item Then, the quantized reconstructed factors $(q'_1(Z'_1), \ldots, q'_d(Z'_d))$ will correspond one-to-one to the quantized ground-truth factors $(q_1(Z_1), \ldots, q_d(Z_d))$, up to possible permutation of indices (and order reversal). 
    \item The quantization thresholds used for $q_i$ and $q'_i$ are obtained as the locations of the independent discontinuities.
\end{itemize}
\end{summarybox}

This result is illustrated in Figure~\ref{fig:main}.
The formal \textbf{Quantized factor identifiability theorem} is Theorem~\ref{thm:discretized-coordinates-identifiability}, found in section~\ref{app:proof_quantized} together with its proof. It builds on two other theorems: the \emph{Non-removable discontinuity preservation theorem} (Theorem~\ref{thm:discontinuities-preservation}  in Section~\ref{sec:discontinuities-preservation})
and the \emph{Grid structure recovery theorem} (Theorem~\ref{thm:gridStructure} in Section~\ref{sec:grid-structure-recovery}) and its corollary.
The following section presents these theorems and the required definitions in their logical  order.

\section{Main theorems}

Most of the theory will concern the diffeomorphism $h:=g \circ f$ that maps $Z$ to $Z'$.

\subsection{Non-removable discontinuity preservation theorem}
\label{sec:discontinuities-preservation}

We will show that discontinuities in the PDF are preserved by a a diffeomorphism.
However, one subtlety is that the PDF corresponding
to a given distribution is not unique, as elaborated in Appendix~\ref{app:nonremovable}. The many PDFs representing the same distribution actually form an equivalence class, whose elements may take arbitrarily different values on sets of points of measure zero. So not all the discontinuities in a PDF are meaningful. Since
we care about observable characteristics of the actual distribution, we must focus on aspects of the PDF that are immune to erasure by changes of measure zero. We use the following definitions:

\begin{definition}
\textbf{Removable discontinuity:} A PDF $p$ has a removable discontinuity
at $z$ if $p$ is discontinuous at $z$ but there exists another
 $p'$ in the same equivalence class (i.e. $p$ and $p'$ yield
the exact same probability measure) that is continuous at $z$.
\end{definition}

\begin{definition}
\textbf{Non-removable discontinuity:} A PDF $p$ has a non-removable discontinuity at $z$ if $p$ is discontinuous at $z$ but this discontinuity is not removable. Equivalently, \emph{all} PDFs in the equivalence class
of $p$ are discontinuous at $z$. Note that a non-removable discontinuitiy is a property of an equivalence class of PDFs, thus of the distribution, not just of a single PDF.
\end{definition}

\begin{theorem}
\label{thm:discontinuities-preservation}
    \textbf{Non-removable discontinuity preservation theorem.}
Let $Z$ be a latent random variable with values in $\mathcal{Z}\subset\mathbb{R}^{d}$,
whose distribution is represented by a PDF $p_{Z}$. Let $h:\mathcal{Z}\rightarrow\mathcal{Z}'\subset\mathbb{R}^{d}$ be a diffeomorphism, and let $Z'=h(Z)$ be a transformed random variable whose distribution is represented by a probability density function $p_{Z'}$. Then, $p_{Z'}$ has a non-removable discontinuity at a point $z'$ if and only if $p_{Z}$ has a non-removable discontinuity at the point $z=h^{-1}(z')$.
\end{theorem}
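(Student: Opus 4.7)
The core tool is the standard change-of-variables formula for densities under a diffeomorphism: since $h$ is smooth with smooth inverse, for any PDF representative we have
\[
p_Z(z) \;=\; p_{Z'}\!\bigl(h(z)\bigr)\cdot\bigl|\det J_h(z)\bigr|,
\]
where the Jacobian determinant is smooth and strictly positive on all of $\mathcal{Z}$. Pointwise, $p_Z$ at $z$ is thus the product of the composition of a PDF of $Z'$ with the continuous map $h$, multiplied by a continuous, nonzero factor. Consequently, for any pair of representatives linked by this formula, continuity of one at $z = h^{-1}(z')$ is equivalent to continuity of the other at $z'$.

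The plan is to prove the contrapositive of the theorem: $p_{Z'}$ admits some representative of its equivalence class that is continuous at $z'$ if and only if $p_Z$ admits some representative continuous at $z = h^{-1}(z')$. By symmetry (swapping the roles of $h$ and $h^{-1}$), it suffices to prove one direction. Given a representative $\tilde{p}_{Z'}$ of the distribution of $Z'$ continuous at $z'$, I would set
\[
\tilde{p}_Z(\zeta) \;:=\; \tilde{p}_{Z'}\!\bigl(h(\zeta)\bigr)\cdot\bigl|\det J_h(\zeta)\bigr|.
\]
Since $h$ is continuous at $z$ and $\tilde{p}_{Z'}$ is continuous at $h(z)=z'$, the composition $\tilde{p}_{Z'} \circ h$ is continuous at $z$; multiplying by the continuous factor $|\det J_h|$ preserves this, so $\tilde{p}_Z$ is continuous at $z$. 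What remains is to show that $\tilde{p}_Z$ is itself a legitimate representative of the distribution of $Z$, i.e.\ that it lies in the equivalence class of $p_Z$.

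This last verification is the delicate step. The set where $\tilde{p}_Z$ disagrees with $p_Z$ is contained in $h^{-1}(N)$, where $N\subset\mathcal{Z}'$ is the Lebesgue-null set on which $\tilde{p}_{Z'}$ and $p_{Z'}$ differ. Here I would invoke the standard fact that $C^1$ diffeomorphisms, being locally Lipschitz, preserve Lebesgue-null sets under both image and preimage; alternatively this follows directly by applying the change-of-variables formula to the indicator of $N$ together with the positivity of $|\det J_h|$. Either way, $h^{-1}(N)$ is null, so $\tilde{p}_Z$ and $p_Z$ agree almost everywhere, establishing that $p_Z$ has a representative continuous at $z$ and completing the contrapositive. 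The main obstacle is therefore not the continuity computation, which is routine, but this equivalence-class bookkeeping: one must operate on individual representatives and then carefully verify that the constructed function is measure-theoretically equivalent to the original PDF.
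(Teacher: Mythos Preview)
Your proposal is correct and follows essentially the same route as the paper: both arguments use the change-of-variables formula to show that if one side admits a continuous representative at the relevant point, then so does the other, and then invoke symmetry via $h^{-1}$ for the converse. Your treatment is in fact slightly more careful than the paper's, which simply asserts that applying the change-of-variables formula to a representative of $p_{Z'}$ ``yields a PDF of $Z$'' without explicitly verifying the null-set preservation you spell out.
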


\textbf{Proof:} Appendix \ref{app:proof_nonremovable}.

\subsection{Grid structure recovery theorem and corollary}
\label{sec:grid-structure-recovery}

\subsubsection{Definition of grid structure}

The notions we use to define the grid structure are related to usual hyperplanes
and hypersurfaces of $\mathbb{R}^{d}$, but they are restricted to
a connected subset $\mathcal{S}$ of $\mathbb{R}^{d}$. In our setting, $\mathcal{S}$ will be the interior of the support of the density on which the grids
can be defined. 
Let $\mathcal{S}\subset\mathbb{R}^{d}$ be a connected open smooth submanifold of dimension $d$ ($\mathcal{S}$ such as an open $d$-ball).
We will use the following definitions, which are \textbf{illustrated in Appendix~\ref{app:illustrations_def}}.
\vspace{-5pt}
\begin{definition}
    The \textbf{splitting} of a set $\mathcal{S}$ by another set $\mathcal{C}$,
denoted $\mathrm{split}(\mathcal{S},\mathcal{C})$, is the set of
\emph{connected components} of $\mathcal{S}\setminus\mathcal{C}$.
\end{definition}

\begin{definition}
    We say that \textbf{$\mathcal{C}$ splits $\mathcal{S}$ in two} to mean $\mathrm{|split}(\mathcal{S},\mathcal{C})|=2$ (we denote the cardinality of a countable set $A$ by $|A|$, and similarly for the number of elements in an ordered list or a tuple).
\end{definition}

\begin{definition}
    We say that $\mathcal{C}$ is a \textbf{separator} of $\mathcal{S}$ if $\mathcal{C}$ is a \emph{connected} \emph{subset} of $\mathcal{S}$ and $\mathcal{C}$ splits $\mathcal{S}$ in two. The two connected components that result from the split are called the two \textbf{halves} resulting form the split, denoted $\mathcal{C}^{+}$ and $\mathcal{C}^{-}$, i.e. $\{\mathcal{C}^{+},\mathcal{C}^{-}\}=\mathrm{split}(\mathcal{S},\mathcal{C})$.
\end{definition}

\begin{definition}
    $\mathcal{C}$ is a\textbf{ smooth separator }of $\mathcal{S}$ if
$\mathcal{C}$ is a \emph{separator} of $\mathcal{S}$ and is a \emph{smooth
hypersurface} of $\mathcal{S}$ (i.e. a smooth embedded submanifold
of dimension $d-1$).
\end{definition}

\begin{definition}
    An\textbf{ axis-separator} of $\mathcal{S}$ is a special case of
\emph{smooth separator} of $\mathcal{S}$ that is the intersection
of $\mathcal{S}$ with an \emph{axis-aligned hyperplane} of $\mathbb{R}^{d}$
(a coordinate hyperplane). It can be defined as $\mathcal{H}=\Gamma_{\mathcal{S}}(i,\tau)=\{z\in\mathcal{S}|z_{i}=\tau\}$ (Figure \ref{fig:setup1}).
Because it is a separator, it splits $\mathcal{S}$ in two halves
$\Gamma_{\mathcal{S}}^{+}(i,\tau)=\{z\in\mathcal{S}|z_{i}>\tau\}$
and $\Gamma_{\mathcal{S}}^{-}(i,\tau)=\{z\in\mathcal{S}|z_{i}<\tau\}$,
which are each nonempty and connected \emph{(Figure \ref{fig:setup3})}. 
\label{def:axis-separator}
\end{definition} 

\begin{definition}
    An \textbf{axis-separator-set} $\mathcal{G}$ on $\mathcal{S}$ is
a finite set of axis separators of $\mathcal{S}$.
\end{definition}

\begin{definition}
    An axis-aligned \textbf{grid} $G\subset\mathcal{S}$ is a subset of
$\mathcal{S}$ that can be obtained as a union of all the separators
in an axis-separator-set $\mathcal{G}$. i.e. $G=\cup\mathcal{G}=\cup_{H\in \mathcal{G}} H$.
\textbf{}\\
\emph{Note the important distinction we make between a \emph{grid}, which is a subset of $\mathcal{S}$ and hence a set of points, and an $\emph{axis-separator-set}$, which is a set of axis separators (which themselves are sets of points). An \emph{axis-separator-set} thus has more explicit structure than a $grid$. The proof we will
unroll depends conceptually on the ability to rebuild, in several steps, the entire grid internal structure, starting from only the unstructured \emph{grid }as a set of points. The first step of this program will be the recoverability of \emph{axis-separator-set
}from \emph{grid.}}
\end{definition}

\begin{definition}
    A \textbf{parallel-separator-set} is a \emph{set} of axis-separators all defined on the same $i^{th}$ axis (and are thus parallel). In particular, we denote the subset of axis-separator set $\mathcal{G}$ that are all defined on the $i^{th}$ axis as $\mathcal{G}^{(i)}$.
\end{definition}

\begin{definition}
    A \textbf{discrete coordination} $\mathbf{A}$
is a tuple $\mathbf{A} = (\mathbf{A}_{1},\ldots,\mathbf{A}_{d})$ where
each $\mathbf{A}_{i}$ is itself a tuple of real numbers in increasing order $\mathbf{A}_{i}=(\mathbf{A}_{i,1},\ldots,\mathbf{A}_{i,n_{i}})$
such that $\mathbf{A}_{i,k+1}>\mathbf{A}_{i,k}$. These represent the coordinates of axis-separators along each of the $d$ coordinate axes (Figure \ref{fig:setup2}). \\
\emph{
{\bf Note:} $\mathbf{A}_i$ contains the list of quantization \emph{thresholds} to quantize the $i^{th}$ coordinate (or factor) as  $Q(Z_i; \mathbf{A}_i)$, as defined in equation \ref{eq:quantization}.\\
A discrete coordination defines the entire grid structure. One can easily obtain the various constituent sets from it:
\begin{enumerate}[label=(\alph*)]
    \item the individual \textit{separators} ($\approx$``hyperplanes'') $\Gamma_{\mathcal{S}}(i,\mathbf{A}_{i,k})$, and their positive and negative halves ($\approx$``half-spaces'') $\Gamma_{\mathcal{S}}^{+}(i,\mathbf{A}_{i,k})$ and $\Gamma_{\mathcal{S}}^{-}(i,\mathbf{A}_{i,k})$  respectively;
    \item the \emph{parallel-separator-sets} $\mathcal{G}^{(1)},\ldots,\mathcal{G}^{(d)}$, where $\mathcal{G}^{(i)} = \{\Gamma_{\mathcal{S}}(i,\mathbf{A}_{i,k})\}_{k=1}^{|\mathbf{A}_{i}|}$;
    \item the \emph{axis-separator-set} $\mathcal{G}=\mathcal{G}^{(1)}\cup\ldots\cup\mathcal{G}^{(d)}$; 
    \item the \emph{grid} $G=\mathrm{grid}_{\mathcal{S}}(\mathbf{A})=\cup\mathcal{G}$.
\end{enumerate}
}
\label{def:discrete_coordination}
\end{definition}

\begin{definition}
    A \textbf{backbone} $\mathcal{H}^{*}$ of a grid is a \emph{list} $\mathcal{H}^{*}=(\mathcal{H}_{1}^{*},\ldots,\mathcal{H}_{d}^{*})$ of $d$ separators of that grid, each defined on the corresponding axis, that have a non-empty intersection (they meet at a single point $z^{*}$). That is, for $\text{\ensuremath{\mathcal{H}_{1}^{*}}}\in\mathcal{G}^{(1)},\ldots,\text{\ensuremath{\mathcal{H}_{d}^{*}}}\in\mathcal{G}^{(d)}$, we must have $\bigcap_{i=1}^{d}\text{\ensuremath{\mathcal{H}_{i}^{*}}}=\{z^{*}\}$.
    In addition, for $\mathcal{H}^{*}$ to be a backbone, it is also required that each of its separators $\mathcal{H}_{i}^{*}$ intersect \emph{all} the other separators $H\in\mathcal{G}^{(j)}$ of the grid that are defined on the other axes $j\ne i$ (those not in the same parallel-separator-set); namely,
    $\forall i,\forall j\ne i,\forall H\in\mathcal{G}^{(j)},\text{\ensuremath{\mathcal{H}_{i}^{*}}}\cap H\ne\emptyset$
    \emph{(example in Figure \ref{fig:backbone}).} \\
    \emph{A backbone functions as a set of ``main axes'', and we will require a proper \textbf{grid} to have at least one backbone. This is a weaker requirement than requiring a ``complete grid'' where \emph{each separator of the grid} would be required to intersect all the separators that are not in the same parallel-separator-set. Here, we require only that the separators of the backbone intersect all the other separators on the other axes of the grid.}
\label{def:backbone}
\end{definition}

\subsubsection{Grid structure preservation and recovery theorem}

\begin{theorem}
    \label{thm:gridStructure}
\textbf{Grid structure preservation and recovery theorem.} 
Let $h: \mathcal{S}\subset \mathbb{R}^d \rightarrow \mathcal{S}' \subset \mathbb{R}^d$ be a diffeomorphism, where
both $\mathcal{S}$ and $\mathcal{S}'$ are open connected subsets of $\mathbb{R}^d$. 
Suppose we have an axis-aligned grid $G\subset\mathcal{S}$, associated
with its axis-separator-set $\mathcal{G}$ and discrete coordination
$\mathbf{A}$, that is, $G=\mathrm{grid}_{\mathcal{S}}(\mathbf{A})$. While
the grid does not need to be ``complete'', we suppose that $\mathcal{G}$ has
at least one \emph{backbone}. Now, suppose that we have another axis-aligned
grid in $\mathcal{S}'$, associated with its discrete coordination
$\mathbf{B}$, with $G'=\mathrm{grid}_{\mathcal{S}'}(\mathbf{B})$.
Suppose $G'=h(G)$. Then, there exists a permutation function $\mathrm{\sigma}$
over dimension indexes $1,\ldots,d$ and a direction reversal vector
$s\in\{-1,+1\}^{d}$ such that $\forall j\in\{1,\ldots,d\},\;i=\sigma^{-1}(j),\;K=|\mathbf{A}_{i}|=|\mathbf{B}_{j}|$, $\forall k\in\{1,\ldots,K\},\forall z'\in\mathcal{S}'$,

If $s_{i}=+1$, then: 
$$\begin{cases}
    z'_{j}=\mathbf{B}{}_{j,k}\Longleftrightarrow h^{-1}(z')_{i}=\mathbf{A}_{i,k}, \\
    z'_{j}>\mathbf{B}_{j,k}\Longleftrightarrow h^{-1}(z')_{i}>\mathbf{A}_{i,k}, \\
    z'_{j}<\mathbf{B}_{j,k}\Longleftrightarrow h^{-1}(z')_{i}<\mathbf{A}_{i,k};\\
\end{cases}
$$

If $s_{i}=-1$, then:  
$$\begin{cases}
    z'_{j}=\mathbf{B}_{j,k}\Longleftrightarrow h^{-1}(z')_{i}=\mathbf{A}_{i,K-k+1}, \\
    z'_{j}>\mathbf{B}_{j,k}\Longleftrightarrow h^{-1}(z')_{i}<\mathbf{A}_{i,K-k+1}, \\
    z'_{j}<\mathbf{B}_{j,k}\Longleftrightarrow h^{-1}(z')_{i}>\mathbf{A}_{i,K-k+1}. \\
\end{cases}
$$
\end{theorem}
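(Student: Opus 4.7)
The plan is to unpack the grid into its constituent axis-separators, track where $h$ sends each of them, and then match up thresholds and halves. The bulk of the argument is showing that $h$ respects the \emph{axis-separator-set} structure: each axis-separator of $\mathcal{G}$ is sent to an axis-separator of $\mathcal{G}'$; separators sharing an axis in $\mathcal{S}$ are all sent to separators sharing a single axis in $\mathcal{S}'$ (giving the permutation $\sigma$); and the induced bijection within each axis is monotonic with respect to the threshold ordering (giving the reversal $s_i$). The three pairs of equivalences in the theorem are then direct corollaries of the fact that a homeomorphism maps the two connected components of $\mathcal{S}\setminus\Gamma$ bijectively onto the two connected components of $\mathcal{S}'\setminus h(\Gamma)$ for any separator $\Gamma$. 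I expect Step~1 below to be the delicate part, since it requires arguing that the axis-separator decomposition is an intrinsic feature of $G$ as a subset of $\mathcal{S}$ rather than extra data carried by $\mathcal{G}$; the other two steps are essentially combinatorial, and the backbone hypothesis is exactly what prevents pieces of one axis in $\mathcal{S}$ from being scattered across several axes in $\mathcal{S}'$.

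\textbf{Step 1 (axis-separators map to axis-separators).} Each $\Gamma_{\mathcal{S}}(i,\tau)\in\mathcal{G}$ is a connected smooth $(d-1)$-submanifold of $\mathcal{S}$, so its image under $h$ is a connected smooth $(d-1)$-submanifold of $\mathcal{S}'$ contained in $G'$. The key observation is that axis-separators of $\mathcal{G}'$ are exactly the \emph{maximal} connected smooth $(d-1)$-submanifolds of $G'$: at a point of $G'$ lying on a single separator, $G'$ is locally a single hypersurface, whereas at a transverse intersection of separators on different axes it acquires a corner-type singularity through which no smooth hypersurface can extend. Diffeomorphisms preserve this local smooth-vs-singular dichotomy, so $h(\Gamma_{\mathcal{S}}(i,\tau))$ must coincide with a single member of $\mathcal{G}'$; running the same argument on $h^{-1}$ upgrades this to a bijection $\mathcal{G}\leftrightarrow\mathcal{G}'$.

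\textbf{Step 2 (backbone yields the permutation $\sigma$).} Let $\mathcal{H}^{*}=(\mathcal{H}^{*}_1,\ldots,\mathcal{H}^{*}_d)$ be a backbone of $\mathcal{G}$, meeting at $z^{*}$. Then $h(\mathcal{H}^{*}_1),\ldots,h(\mathcal{H}^{*}_d)$ are $d$ distinct members of $\mathcal{G}'$ all containing $h(z^{*})$; since two axis-separators on the same axis of $\mathcal{S}'$ are parallel and disjoint, these $d$ images must lie on $d$ distinct axes of $\mathcal{S}'$, which defines a permutation $\sigma$ via $h(\mathcal{H}^{*}_i)\in {\mathcal{G}'}^{(\sigma(i))}$. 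For an arbitrary $H\in\mathcal{G}^{(j)}$, the backbone property gives $\mathcal{H}^{*}_i\cap H\ne\emptyset$ for every $i\ne j$, so $h(H)$ meets $h(\mathcal{H}^{*}_i)$ for every $i\ne j$; this rules out $h(H)$ lying on any axis $\sigma(i)$ with $i\ne j$ and forces $h(H)\in {\mathcal{G}'}^{(\sigma(j))}$. Thus $h$ restricts to a bijection $\mathcal{G}^{(i)}\to {\mathcal{G}'}^{(\sigma(i))}$, and in particular $|\mathbf{A}_i|=|\mathbf{B}_{\sigma(i)}|$.

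\textbf{Step 3 (direction $s_i$ and index matching).} For each $\Gamma_{\mathcal{S}}(i,\tau)\in\mathcal{G}^{(i)}$, the two open halves $\Gamma_{\mathcal{S}}^{\pm}(i,\tau)$ are the two connected components of $\mathcal{S}\setminus\Gamma_{\mathcal{S}}(i,\tau)$, and by bijectivity and continuity $h$ sends them to the two halves of $\mathcal{S}'$ cut out by $h(\Gamma_{\mathcal{S}}(i,\tau))$. Let $\tau^{*}_i$ denote the threshold of the backbone separator $\mathcal{H}^{*}_i$, and define $s_i\in\{-1,+1\}$ according to whether $h(\Gamma_{\mathcal{S}}^{+}(i,\tau^{*}_i))$ is the positive or the negative half of $\mathcal{S}'$ around $h(\mathcal{H}^{*}_i)$. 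For any other threshold $\tau$ on axis $i$, the inclusion $\Gamma_{\mathcal{S}}(i,\tau)\subset\Gamma_{\mathcal{S}}^{+}(i,\tau^{*}_i)$ when $\tau>\tau^{*}_i$ (and the symmetric statement when $\tau<\tau^{*}_i$) is preserved by $h$, forcing the image of $\Gamma_{\mathcal{S}}(i,\tau)$ to lie on the $s_i$-determined side of $h(\mathcal{H}^{*}_i)$. Iterating this comparison across all pairs of thresholds on axis $i$ both makes $s_i$ well-defined (one sign for the whole axis) and pins down the index correspondence $k\mapsto k$ when $s_i=+1$ and $k\mapsto K-k+1$ when $s_i=-1$. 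Substituting these threshold identifications into the half-space correspondences established above yields all three equivalences in both cases of the theorem.
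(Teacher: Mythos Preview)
Your proposal is correct and follows essentially the same three-step architecture as the paper: Step~1 (separators map to separators via the ``no smooth hypersurface can pass through a transverse crossing'' idea, which the paper formalizes through a sequence of tangent-space lemmas), Step~2 (backbone yields the permutation, argued identically), and Step~3 (half-space correspondences plus inclusion ordering give $s_i$ and the index matching). The only cosmetic difference is that in Step~3 the paper anchors the sign $s_i$ at the \emph{extreme} halves (those containing no separator) rather than at the backbone threshold, but both anchors feed into the same inclusion-of-halves argument.
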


\paragraph{Principle of the proof} 

Starting from the premise $G'=h(G)$, we know that $h$ maps every point
of $G$ to a point of $G'$. The proof recovers the entire underlying
grid \emph{structure} in 3 steps: 
\begin{description}
\item [{Step 1}] recover a one-to-one mapping of the individual separators:
$\mathcal{G}'=h(\mathcal{G})$.
\item [{Step 2}] recover the partition into subsets of parallel separators
(each subset associated to an axis): $\mathcal{G}'^{(j)}=h(\mathcal{G}^{(i)})$
(with permutation $j=\sigma(i)$).
\item [{Step 3}] show that the ordering of the separators in a parallel-separators-set
is preserved (up to possible order reversal): \\
$[h(\Gamma_{\mathcal{S}}(i,\mathbf{A}_{i,1})),\ldots,h(\Gamma_{\mathcal{S}}(i,\mathbf{A}_{i,K}))]=[\Gamma_{\mathcal{S}'}(j,\mathbf{B}_{j,1}),\ldots,\Gamma_{\mathcal{S}'}(j,\mathbf{B}_{j,K})]$
or in reversed order $[h(\Gamma_{\mathcal{S}}(i,\mathbf{A}_{i,1})),\ldots,h(\Gamma_{\mathcal{S}}(i,\mathbf{A}_{i,K}))]=[\Gamma_{\mathcal{S}'}(j,\mathbf{A}_{j,K}),\ldots,\Gamma_{\mathcal{S}'}(j,\mathbf{A}_{j,1})]$.
And similarly, the ordering of the halves corresponding to each of these separators is preserved.
Knowing to which half (either $\Gamma^+_{\mathcal{S}'}(j,\tau)$ or $\Gamma^-_{\mathcal{S}'}(j,\tau)$) a point $\mathbf{z}'$ belongs tells us whether $z'_j$ is above or below the threshold $\tau$. 
\end{description}

For example, seeing that $z'_j > \mathbf{B}_{j,k}$ tells us that $z' \in \Gamma^+_{\mathcal{S}'}(j,\mathbf{B}_{j,k})$, which implies from step 3 (in the case of no order reversal) that its preimage $z=h^{-1}(z')$ belongs to $ \Gamma^+_{\mathcal{S}}(i,\mathbf{A}_{i,k})$, which yields $z_i > \mathbf{A}_{i,k}$. This is what Theorem~\ref{thm:gridStructure} expresses.
We refer the reader to Appendix \ref{app:grid_struct_proof} for the full proof.

\begin{corollary}
\label{cor:quantized}
    \textbf{Recovery of quantized factors.}
Under the same premises as Theorem~\ref{thm:gridStructure}, consider random variables $Z$ and $Z' = h(Z)$.
Using the quantization operation $Q$ (previously defined in Section~\ref{sec:quantizationop}, equation \ref{eq:quantization}), we recover quantized factors up to permutation $\sigma$ of the axes and possible direction reversal indicated by
$s$: $\forall i\in {1,\ldots,d}$, 
$Q(Z_{i};\mathbf{A}_{i}) = 
Q^{s_i}(Z'_{j};\mathbf{B}_{j})$
with $j=\sigma(i)$.
\end{corollary}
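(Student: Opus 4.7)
The plan is to derive the corollary as a direct algebraic consequence of Theorem~\ref{thm:gridStructure}. Since the quantization $Q(z;T) = \sum_{k=1}^{|T|} \mathbf{1}_{z \ge T_k}$ depends only on comparisons of the form $z \ge T_k$, and the theorem already supplies pointwise equivalences for each of the events $\{z_i = \mathbf{A}_{i,k}\}$, $\{z_i > \mathbf{A}_{i,k}\}$, $\{z_i < \mathbf{A}_{i,k}\}$, the strategy is to translate each indicator termwise, sum over $k$, and reindex in the reversal case. I would fix $i \in \{1,\ldots,d\}$, set $j = \sigma(i)$ and $K = |\mathbf{A}_i| = |\mathbf{B}_j|$, and then split into the two cases $s_i = +1$ and $s_i = -1$.

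In the case $s_i = +1$, combining the equality and strict-inequality equivalences supplied by Theorem~\ref{thm:gridStructure} gives $Z_i \ge \mathbf{A}_{i,k} \iff Z'_j \ge \mathbf{B}_{j,k}$ for each $k$, so the indicator sums defining $Q(Z_i;\mathbf{A}_i)$ and $Q(Z'_j;\mathbf{B}_j)$ agree termwise and the desired identity follows immediately, with $Q^{s_i} = Q$.

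In the case $s_i = -1$, the analogous combination yields $Z_i \ge \mathbf{A}_{i,K-k+1} \iff Z'_j \le \mathbf{B}_{j,k}$. Substituting, summing over $k$, and then reindexing with $k' = K-k+1$ transforms $\sum_{k=1}^{K} \mathbf{1}_{Z_i \ge \mathbf{A}_{i,k}}$ into $\sum_{k'=1}^{K} \mathbf{1}_{Z'_j \le \mathbf{B}_{j,k'}} = Q^-(Z'_j;\mathbf{B}_j)$, which is $Q^{s_i}(Z'_j;\mathbf{B}_j)$ by the convention from Section~\ref{sec:quantizationop} that $Q^{s_i}$ means $Q^-$ when $s_i = -1$.

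No genuine obstacle arises here: Theorem~\ref{thm:gridStructure} has already done the heavy lifting by producing the permutation $\sigma$, the reversal indicators $s$, and the three pointwise equivalences. The corollary is merely a bookkeeping step that repackages those equivalences into a single identity between the quantized scalars. The only subtlety lies in the reversal case, where one must note that reversing the order of the thresholds swaps ``$\ge$'' with ``$\le$'' and therefore replaces $Q$ with $Q^-$, while carefully tracking the $K-k+1$ indexing so that it cancels between the two sides of the equivalence after the change of summation variable.
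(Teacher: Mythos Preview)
Your proposal is correct and follows essentially the same approach as the paper's proof: both apply Theorem~\ref{thm:gridStructure} termwise to the indicator sum defining $Q$, splitting into the $s_i=+1$ and $s_i=-1$ cases. If anything, your treatment of the reversal case is slightly more explicit than the paper's, which asserts $Z'_j \le \mathbf{B}_{j,k} \Longleftrightarrow Z_i \ge \mathbf{A}_{i,k}$ directly and leaves the $K-k+1$ reindexing implicit in the summation, whereas you spell it out via the change of summation variable $k' = K-k+1$.
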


\begin{proof}
$Z'=h(Z)$ implies that $Z_i = h^{-1}(Z')_i$. 
\\
Now if $s_i = +1$, 
Theorem~\ref{thm:gridStructure}  yields $Z'_j \geq \mathbf{B}_{j,k} \Longleftrightarrow 
 Z_i \geq \mathbf{A}_{i,k}$.\\
Thus, $Q(Z_i, \mathbf{A}_i) 
= \sum_k \mathbf{1}_{Z_i \geq \mathbf{A}_{i,k}} 
= \sum_k \mathbf{1}_{Z'_j \geq \mathbf{B}_{j,k}}
= Q(Z'_j; \mathbf{B}_j).$
\\
Similarly, if $s_i = -1$, 
Theorem~\ref{thm:gridStructure}  yields $Z'_j \leq \mathbf{B}_{j,k} \Longleftrightarrow 
 Z_i \geq \mathbf{A}_{i,k}$.\\
Thus, $Q(Z_i, \mathbf{A}_i) 
= \sum_k \mathbf{1}_{Z_i \geq \mathbf{A}_{i,k}} 
= \sum_k \mathbf{1}_{Z'_j \leq \mathbf{B}_{j,k}}
= Q^-(Z'_j; \mathbf{B}_j).$\\
So in both cases, we have 
$Q(Z_{i};\mathbf{A}_{i}) = 
Q^{s_i}(Z'_{j};\mathbf{B}_{j})$.
\end{proof}

\subsection{Quantized factor identifiability theorem}
\label{app:proof_quantized}

\begin{theorem}
    \label{thm:discretized-coordinates-identifiability}
\textbf{Quantized factors identifiability theorem.}
Let $Z$ be a latent random variable with values in $\mathcal{Z}\subset\mathbb{R}^{d}$
and whose PDF is $p_{Z}$. Let $f:\mathcal{Z}\rightarrow\mathcal{X}\subset\mathbb{R}^{D}$ be a diffeomorphism, and $X=f(Z)$ be the observed random variable. Assume that the support of the PDF $p_{Z}$ is an open connected set\footnote{Alternatively, if the support is not open, we can consider its interior.}. Further assume that $p_{Z}$ has at least one connected independent discontinuity in each dimension, such that the set of non-removable discontinuities of $p_{Z}$ forms an axis-aligned grid with a backbone. Let $\mathbf{A}$ be the discrete coordination of this grid.
Then, there exists a diffeomorphism $g:\mathcal{X}\rightarrow\mathcal{Z}'$
yielding a variable $Z'=g(X)$ such that the set of non-removable discontinuities of the PDF $p_{Z'}$ is an axis-aligned grid. Consider any such diffeomorphism $g$, and let \textbf{$\mathbf{B}$} be the discrete coordination
of its resulting axis-aligned grid. Then, there exists a permutation function $\mathrm{\sigma}$ over the dimension indexes $1,\ldots,d$, and a direction reversal vector $s\in\{-1,+1\}^{d}$ such that 
$q'_j(Z'_{j})=q_i(Z_i)$ with $i=\sigma^{-1}(j)$, where $q'_j(Z'_{j}) = Q^{s_i}(Z'_{j}; \mathbf{B}_{j})$ and $q_i(Z_i) = Q(Z_i; \mathbf{A}_{i})$.
In other words, the quantized factors in $Z'$ agree with the
quantized factors in $Z$, up to permutation and possible axis
reversal.
\end{theorem}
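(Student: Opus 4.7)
The plan is to reduce the claim to the already-proven Grid structure recovery theorem (Theorem~\ref{thm:gridStructure}) and its Corollary~\ref{cor:quantized}, by forming the composite diffeomorphism $h := g \circ f$ and using the Non-removable discontinuity preservation theorem (Theorem~\ref{thm:discontinuities-preservation}) to transfer the grid structure from the $Z$-side to the $Z'$-side.

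First I would dispatch the existence claim: the choice $g = f^{-1}$ makes $Z' = Z$, so $p_{Z'} = p_Z$ and its non-removable discontinuities trivially form the assumed axis-aligned grid. Hence at least one valid $g$ exists, and the remainder of the argument quantifies over any such $g$. Given any such $g$, define $h := g \circ f$, which is a diffeomorphism $\mathcal{Z} \to \mathcal{Z}'$ with $Z' = h(Z)$. Let $\mathcal{S}$ be the (open, connected) support of $p_Z$ and $\mathcal{S}' := h(\mathcal{S})$; because diffeomorphisms are open maps that preserve connectedness, $\mathcal{S}'$ is open and connected. The change-of-variables formula $p_{Z'}(z') = p_Z(h^{-1}(z'))\,|\det D h^{-1}(z')|$ further guarantees $\mathcal{S}'$ coincides with the support of $p_{Z'}$, supplying the geometric setting required by Theorem~\ref{thm:gridStructure}.

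Next I would apply Theorem~\ref{thm:discontinuities-preservation} pointwise: $z' \in \mathcal{S}'$ is a non-removable discontinuity of $p_{Z'}$ if and only if $h^{-1}(z')$ is one of $p_Z$. It follows that the set $G'$ of non-removable discontinuities of $p_{Z'}$ equals $h(G)$, where $G$ is the corresponding set for $p_Z$. By assumption $G = \mathrm{grid}_{\mathcal{S}}(\mathbf{A})$ has a backbone, and by our choice of $g$ we have $G' = \mathrm{grid}_{\mathcal{S}'}(\mathbf{B})$. These are exactly the hypotheses of Theorem~\ref{thm:gridStructure} for the diffeomorphism $h$, which yields a permutation $\sigma$ on $\{1,\ldots,d\}$ and a sign vector $s \in \{-1,+1\}^d$ relating the $\mathbf{A}$- and $\mathbf{B}$-coordinations axis-by-axis. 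Applying Corollary~\ref{cor:quantized} to $Z$ and $Z' = h(Z)$ with this $(\sigma, s)$ delivers $Q(Z_i; \mathbf{A}_i) = Q^{s_i}(Z'_j; \mathbf{B}_j)$ with $j = \sigma(i)$, which is exactly the claimed equality $q_i(Z_i) = q'_j(Z'_j)$.

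The main obstacle is verifying the two bridging facts that let Theorem~\ref{thm:gridStructure} take over: (i) that $G' = h(G)$ as \emph{sets of non-removable discontinuities} (rather than merely discontinuities of some chosen PDF representatives) and (ii) that the codomain $\mathcal{S}'$ really is open and connected, so that $\mathbf{B}$ defines an admissible grid there. Both reduce to short topological arguments — the first is a pointwise application of Theorem~\ref{thm:discontinuities-preservation} using that diffeomorphisms carry equivalence classes of PDFs to equivalence classes, and the second follows because $h$ is a homeomorphism. Once these are in hand, the theorem is a clean composition of the two previously established results, with no further analytic work required.
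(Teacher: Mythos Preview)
Your proposal is correct and follows essentially the same approach as the paper: establish existence via $g=f^{-1}$, then for any valid $g$ form $h=g\circ f$, invoke Theorem~\ref{thm:discontinuities-preservation} to obtain $G'=h(G)$, and apply Theorem~\ref{thm:gridStructure} together with Corollary~\ref{cor:quantized}. Your write-up is in fact slightly more explicit than the paper's own proof in checking that $\mathcal{S}'$ is open and connected and that the discontinuity sets transfer at the level of equivalence classes, but the underlying argument is identical.
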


\begin{proof}
Note that existence is trivial (it suffices to take $g=f^{-1}$, which yields $Z'=Z$). But the fact that \emph{any} $g$ that yields a PDF whose non-removable discontinuities form an axis-aligned grid will
have this property can now easily be proven from our previous results. It suffices to consider
$h=g\circ f$ to be a diffeomorphism (the composition of two diffeomorphisms), so that $Z'=h(Z)$, and to combine the non-removable discontinuity preservation theorem~(Thm.~\ref{thm:discontinuities-preservation}) with the grid structure preservation and recovery theorem~(Thm.~\ref{thm:gridStructure}). Let $G=\mathrm{grid}_{\mathcal{S}}(\mathbf{A})$ and $G'=\mathrm{grid}_{\mathcal{S}}(\mathbf{B})$ be the set of non-removable discontinuity points of $p_{Z}$ and $p_{Z'}$, respectively. From the non-removable
discontinuity preservation theorem, we have that $G'=h(G)$. And from the grid structure preservation and recovery theorem and its corollary, we have that $G'=h(G)$ implies that there exists a permutation function
$\mathrm{\sigma}$ over dimension indexes $1,\ldots,d$ and a direction reversal vector $s\in\{-1,+1\}^{d}$ such that $Q^{s_i}(Z'_{j};\mathbf{B}_{j})=Q(Z_{i};\mathbf{A}_{i})\;\mathrm{\mathrm{with\;}}i=\sigma^{-1}(j)$.
We have, thus, proved that the quantized factors of $Z'$ agree with the quantized factors of $Z$, up to permutation and axis reversal. 
\end{proof}

\section{Independent discontinuities in real-world disentangled factors}

\begin{wrapfigure}[16]{r}{0.5\textwidth}
  \vspace{-\intextsep}
  \centering
    \includegraphics[width=0.5\textwidth]{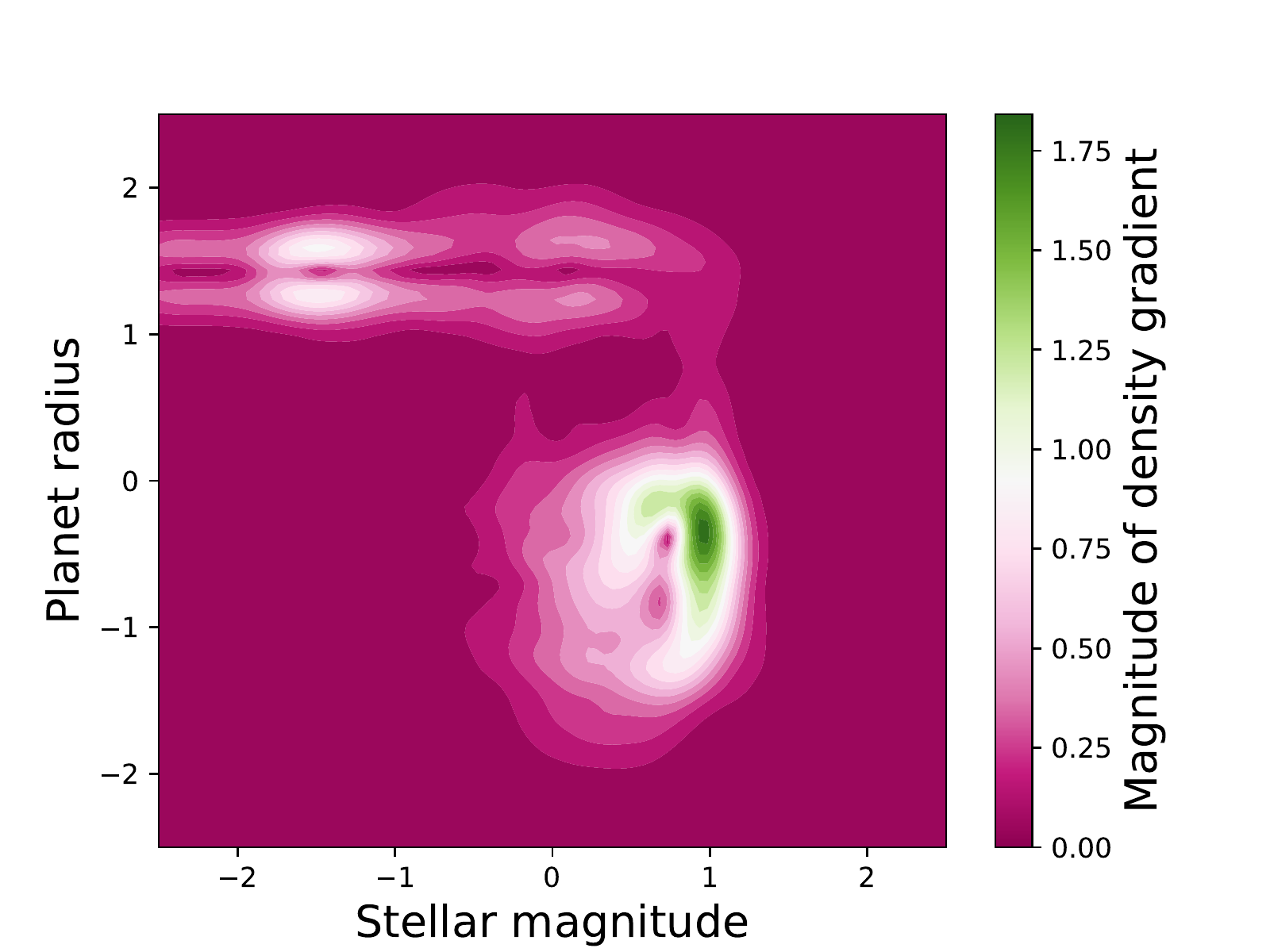}
  \vspace{-19pt}
  \caption{Grid structure observed in the PDF of the NASA exoplanet dataset (standardized log of factors). }
  \label{fig:exo}
\end{wrapfigure}

We have motivated independent discontinuities as a theoretical requirement to be able to identify quantized factors even after they passed through highly flexible diffeomorphic maps. In real data under finite samples, we can only hope for a smooothed density estimate that will never show true discontinuities, but merely sharp changes (gradients of large magnitude) in the density. Also if one is willing to assume a slightly less flexible map, such as Lipschitz, the requirement for true discontinuities may likely be relaxed to merely sharp changes.

Still, one may wonder why and how such sharp density changes could appear in latent factors of real-world data. Here is a simple example: due to gravity, people and most objects tend to be either in a standing or lying position. One will seldom see them with a $45^{\circ}$ pitch angle irrespective of how other factors appear (e.g. background color). This results in a sharp change (discontinuity) in the PDF of the pitch angle factor, independent of the values of the other factors. This is an example of a density jump due to a physical equilibrium point, of which we can expect many variants in nature.

Empirically, we found evidence of independent sharp density changes forming a grid structure in descriptive factors of the NASA Exoplanet Archive~\citep{Akeson_2013}. Figure~\ref{fig:exo} shows the magnitude of the gradient of the density of the factors \emph{stellar magnitude} and \emph{planet radius}. Locations of high magnitude gradient show an axis-aligned grid, compatible with independent jumps in the density similar to the synthetic data from Figure~\ref{fig:main}.
We provide another evidence of axis-alignment in real motion-capture data in Appendix~\ref{app:evidence}.

\section{Experiments}
\begin{figure}[!ht]
    \centering

    \subfigure[True factors.]{
        \includegraphics[width=0.31\linewidth]{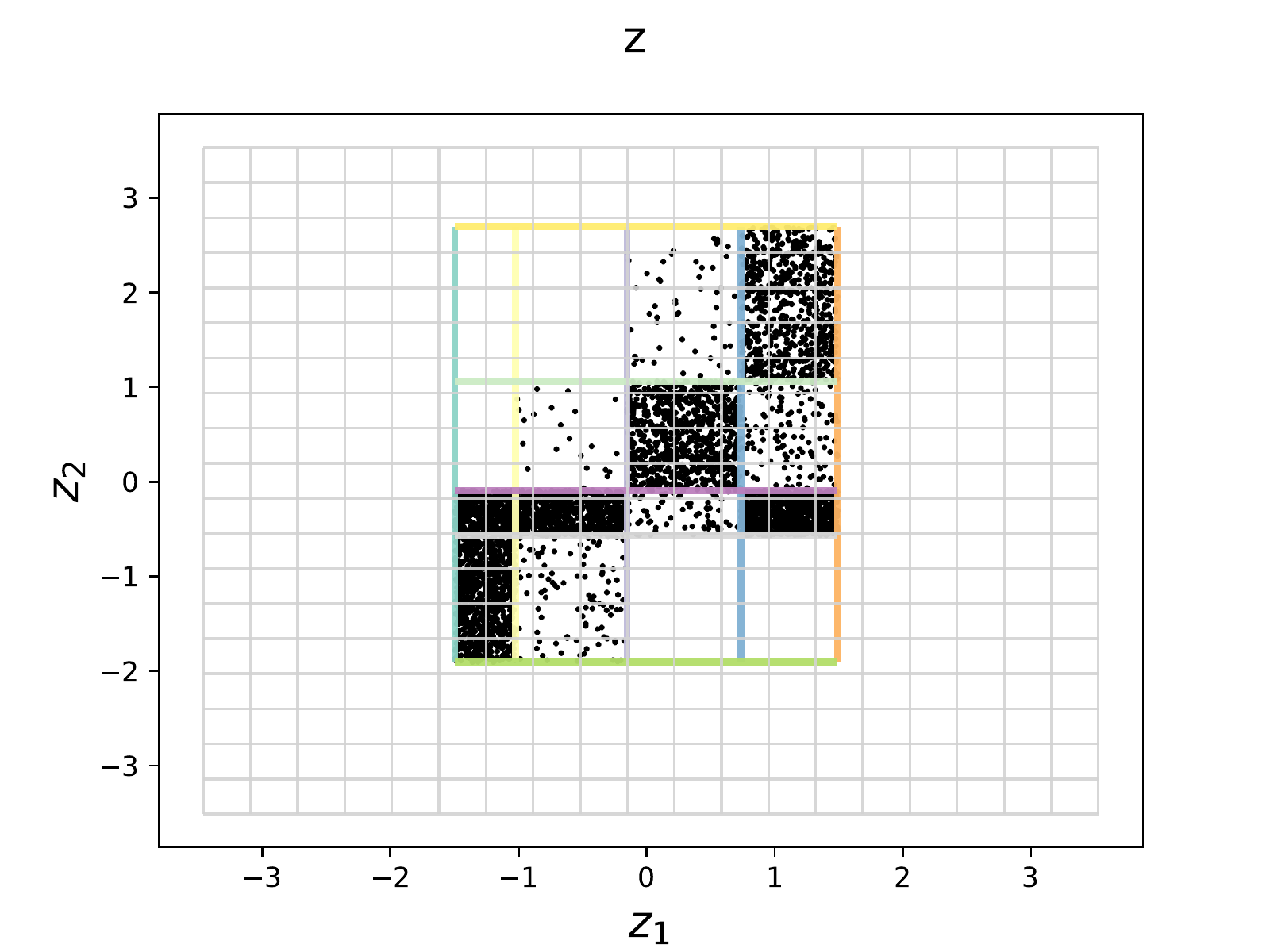}
        \label{fig:truez_unfac}
    }\hfill
    \subfigure[Observed variables.]{
        \includegraphics[width=0.31\linewidth]{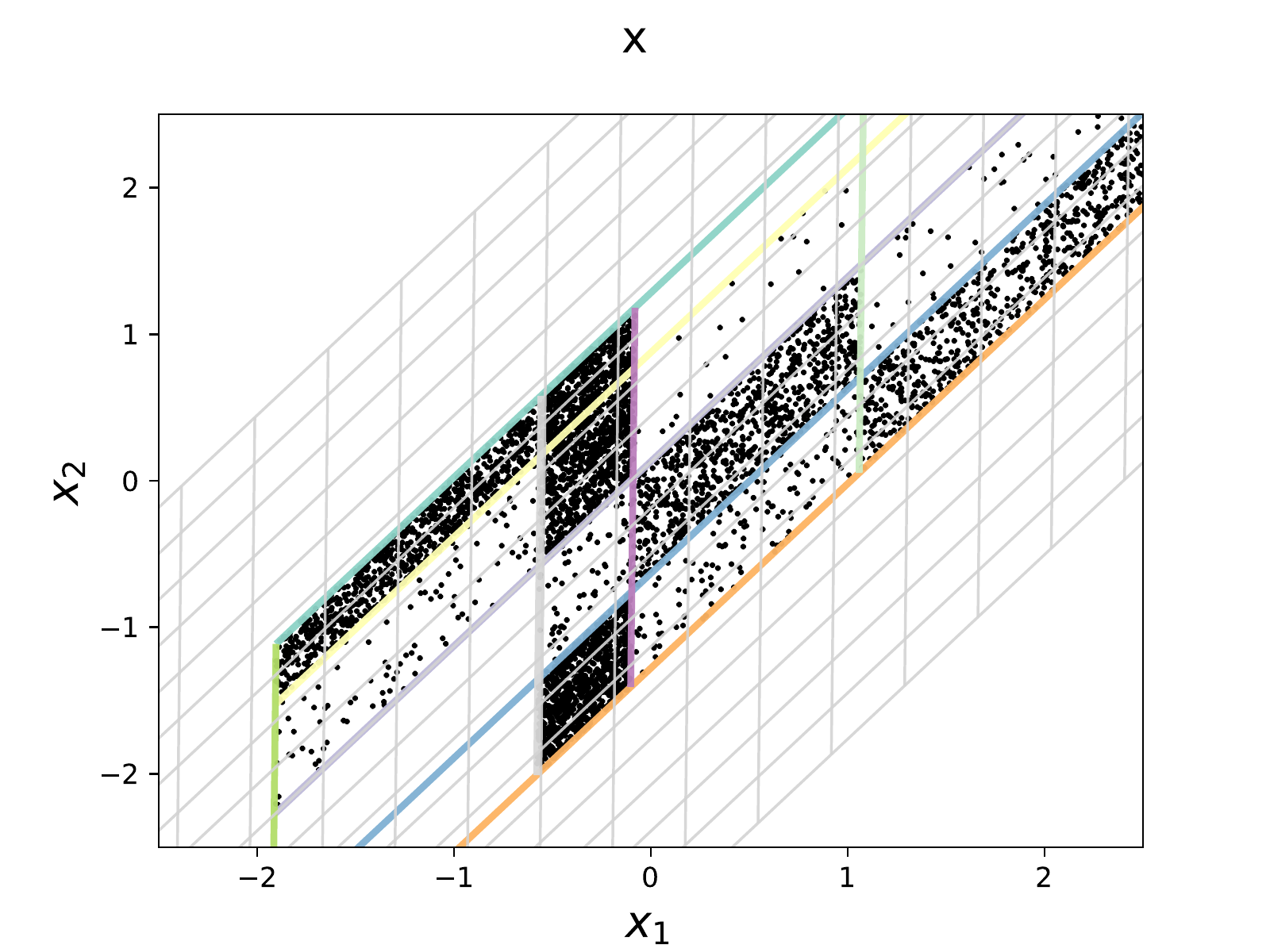}
        \label{fig:x_unfac}
    } \hfill
    \subfigure[Linear ICA reconstruction of the factors.]{
        \includegraphics[width=0.32\linewidth]{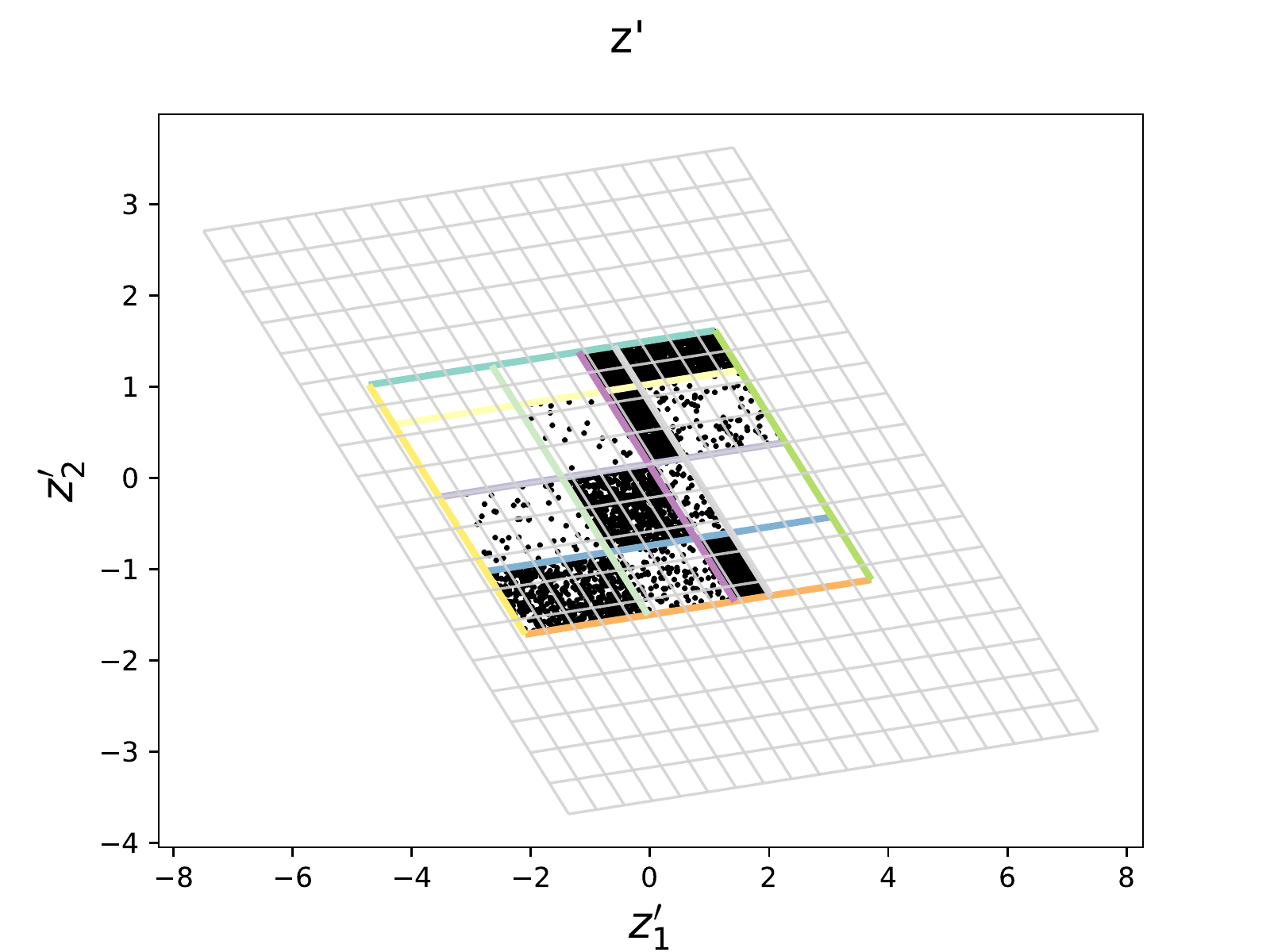}
        \label{fig:ica_unfac}
    } \\  
    \adjustbox{valign=c}{
        \subfigure[Reconstruction of the factors by Hausdorff Factorized Support.]{
            \includegraphics[width=0.31\linewidth]{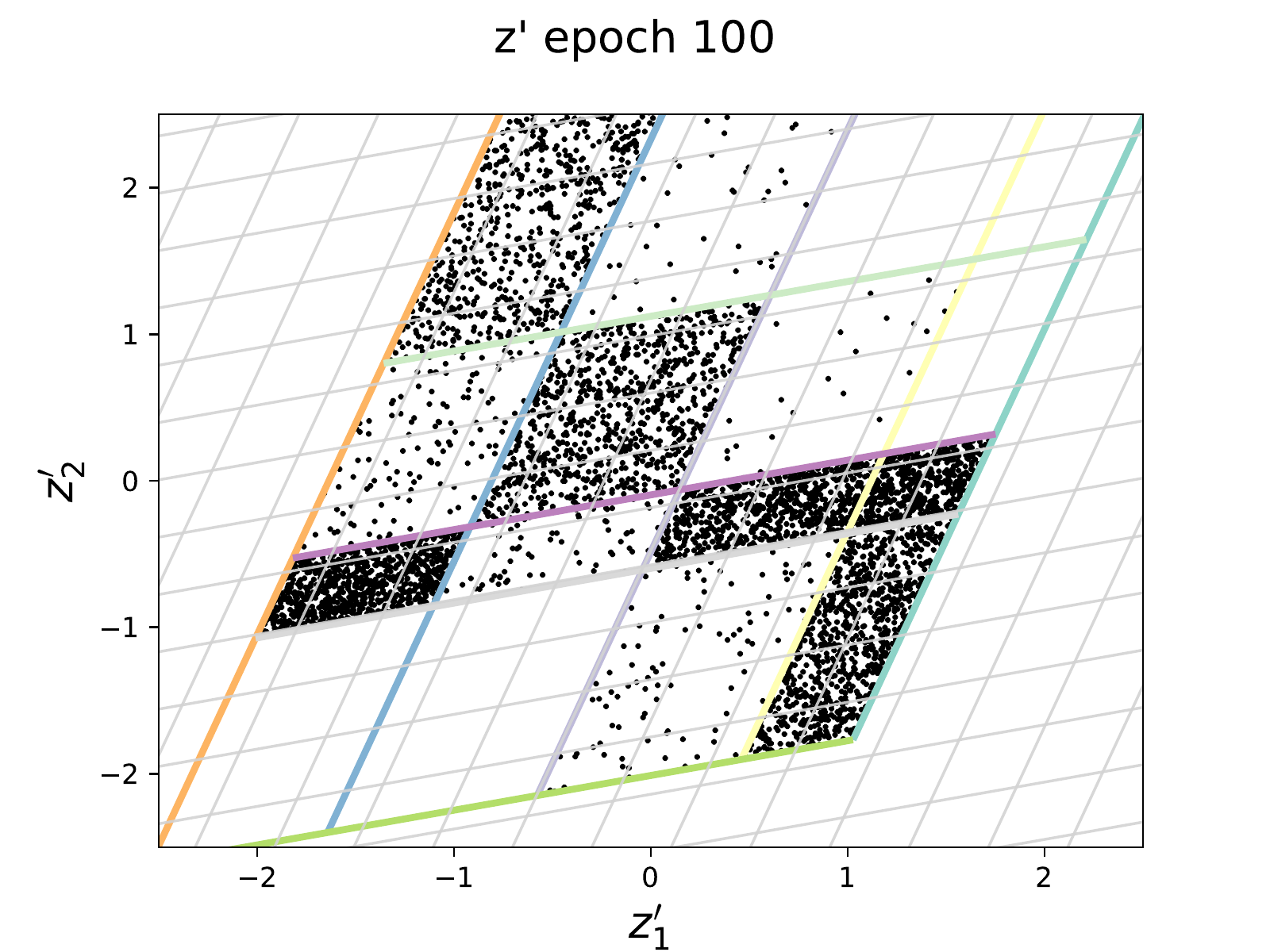}
            \label{fig:fact_corr_unfac}
        }\hfill \hspace{1cm}
        \subfigure[Our model's reconstruction of the factors.]{
            \includegraphics[width=0.31\linewidth]{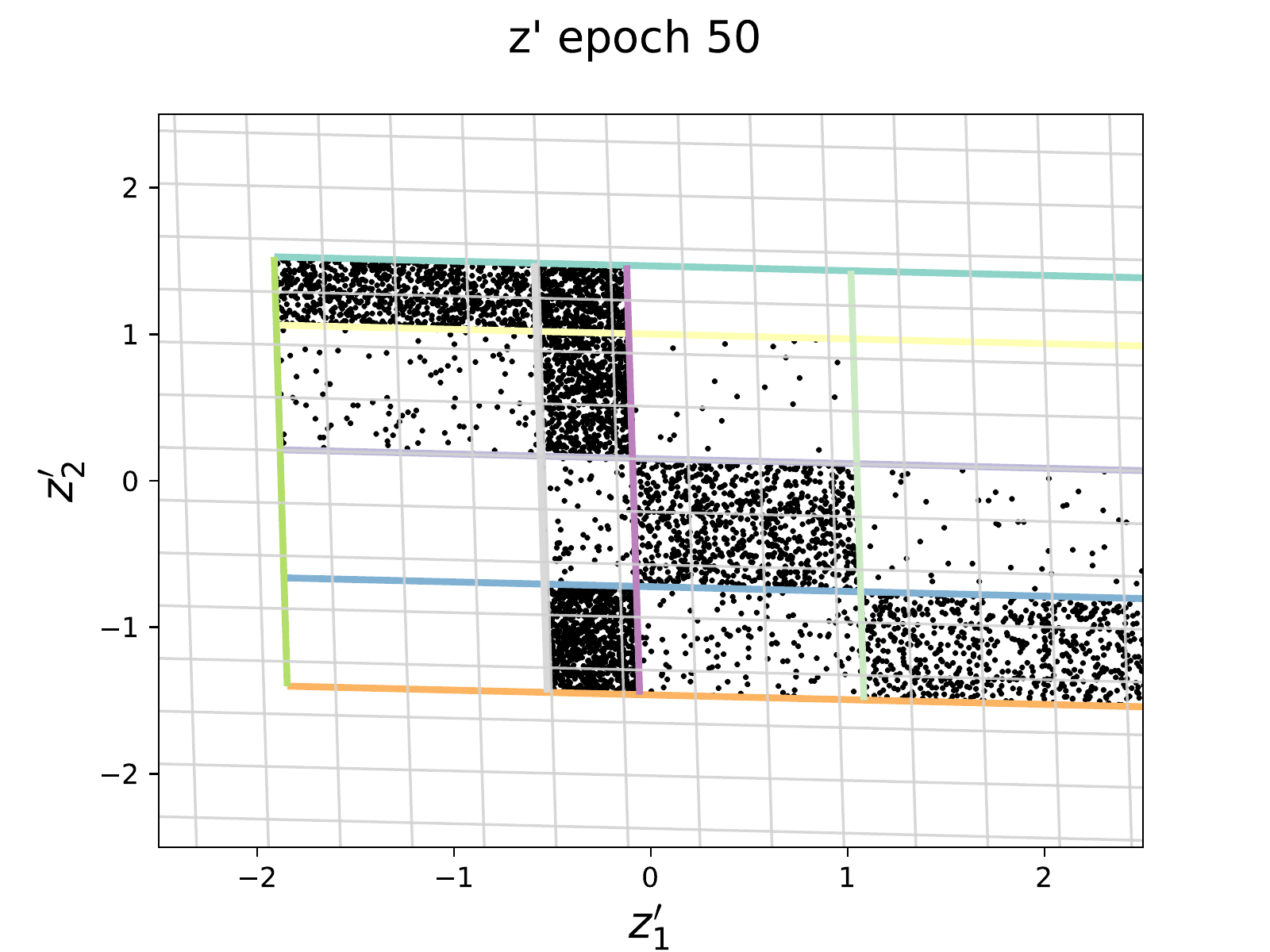} 
            \label{fig:gridalign-z_unfac}
        }
    }
    \caption{The true latent factors (a) \textbf{do not have factorized support} and are \textbf{correlated}. The observations (b) are the result of a \textbf{linear} map applied to the factors. Our method (c) obtains a factorized representation corresponding to the ground-truth factors. Both linear ICA (d) and Hausdorff Factorized Support (e) \cite{roth2022disentanglement} fail to learn the axis-aligned true latent factors.}
    \label{fig:unfactorized}
\end{figure}

We develop a criterion for learning axis-aligned discontinuities and present a proof-of-concept experiment for the case where the mixing map $f$ is linear. This is a simple prototype to demonstrate the feasibility of quantized identification based on independent discontinuities in the PDF, and how it can be advantageous compared to other methods already in the linear case. We present a tentative criterion for nonlinear transformations in Appendix \ref{app:criterion_nonlinear}, but it remains to be thoroughly tested and experimented. We reserve the proposal and analysis of a full practical criterion for future work.

The method we present here aims to align the gradients of the joint density of the factors with the axes.
We perform a density estimation $\hat{p}_{\sigma}$ of $Z$ and use it to obtain the gradients $\frac{\partial \log \hat{p}_{\sigma}}{\partial z}$. Gradients of high magnitude hint at potential discontinuities. We encourage the alignment of these gradient vectors with the standard basis vectors (axes) by maximizing their cosine similarity. The algorithm and experimental setup is detailed in Appendix~\ref{app:experiments}.

Figure~\ref{fig:unfactorized} presents the visualization of the reconstruction of the latent variables for our model, compared to Linear ICA (using the FastICA algorithm \citep{fastica}) and to Hausdorff Factorized Support (HFS) \citep{roth2022disentanglement}, for the case where the ground-truth latent factors are neither independent nor have a factorized support. The results show that the true latent structure is well-reconstructed by our model. The learned factor grid is axis-aligned and corresponds to the original grid up to permutation and axis reversal, as anticipated by the theory. The quantized cells are correctly identified up to this indeterminacy. Meanwhile, both FastICA and Hausdorff Factorized Support learn the factors up to a rotation and shearing (besides permutation and scaling), because their reconstruction is not axis-aligned. Appendix~\ref{app:experiments} presents the results in the case where the support is factorized and we remark that our model is again able to axis-align the factors, while even HFS' reconstruction does not present factorized support \footnote{The code for reproducing these results is available at \\ \href{https://github.com/facebookresearch/quantized_identifiability/}{\texttt{https://github.com/facebookresearch/quantized\_identifiability/}}.}.

\section{Conclusion and future work}

In this theoretical work, we have introduced the novel paradigm of \emph{quantized factor identifiability}. We have then shown that  \emph{fully unsupervised identifiability} of quantized factors is possible under \emph{diffeomorphisms}.
This is significant given that the prevailing literature is dominated by impossibility results. 
We are able to achieve the identification of quantized factors, provided that we assume independent discontinuities in the latent factor's distribution (which naturally form a grid). The novel relaxed (weaker) form of identifiability is meant as a step towards more realistic assumptions for disentanglement: no restrictive inductive bias on the mapping and no assumed independence of factors, rather aiming for potential causal footprints \citep{lopezpaz-2017}.

However, there are important limitations to this theory, the most obvious being that it requires actual \emph{discontinuities}. This is required due to the flexibility of general diffeomorphisms, as justified in Appendix~\ref{app:necessity_landmarks}. Future work shall try to relax this to just sharp (but not infinitely sharp) changes in the density, under slightly less general Lipschitz smooth mappings. 

When moving to the finite sample setting, we must resort to density estimation, which yields a smoothed estimate of $p_X$, and as a result, discontinuities will become non-infinite sharp changes. These ``softer'' discontinuities can still be detected by considering the magnitude of the gradient of the density (as we display in Figure~\ref{fig:exo}). The development of an effective practical training criterion and algorithm to train a nonlinear reverse mapping $g$ to recover an axis-aligned grid is left for future work (Appendix~\ref{app:criterion_nonlinear} proposes a possible starting direction). 


\clearpage
\acks{
The authors thank Léon Bottou for sharing his original motivation for discontinuities in the density from a causal perspective, as well as David Lopez-Paz for related discussions on causal footprints.
The present work only barely mentions these motivations due to its identifiability focus, but we are grateful for Léon's and David's encouragement in exploring this direction.
The authors thank Diane Bouchacourt for discussions on Theorem 1 and the experimental validation of the theory on real-world datasets, as well as Mark Ibrahim for his contribution to early discussions while this project was taking shape. We also thank Sébastien Lachapelle for feedback on this project, and Mohammad Pezeshki for feedback on the paper.

Vitória Barin-Pacela is partially supported by the Canada CIFAR AI Chair Program, as well as a grant from Samsung Electronics Co., Ldt.,  administered by Mila, in support of her PhD studies at the University of Montreal.
Simon Lacoste-Julien and Pascal Vincent are CIFAR Associate Fellows in the Learning in Machines \& Brains program.

This research has made use of the NASA Exoplanet Archive, which is operated by the California Institute of Technology, under contract with the National Aeronautics and Space Administration under the Exoplanet Exploration Program.
}

\bibliography{main}

\clearpage

\appendix
\section*{APPENDIX}

\section{Related work}
\label{sec:relatedwork}

We categorize the existing literature on causal representation learning into the following two categories: i) the theory imposes assumptions on both the mixing map and the latent factors, leading to typically fully unsupervised models; ii) the theory imposes assumptions on the distribution of latent factors and not strictly on the mixing map, leading to models that mostly require weak supervision or auxiliary variables.
None of these studies considered the recovery of quantized factors like we do in this work.

\textbf{Identifiability of latent factors in the unsupervised i.i.d setting:} In linear Independent Component Analysis (ICA),
\citet{comon1994independent} established that under a linear and invertible mixing map and independent non-Gaussian latent factors, these latent factors ca be identified up to order and scale indeterminacies. Beyond the linear case, \citet{taleb} analyze a post-nonlinear mapping, obtaining the same indeterminacies as in linear mixtures. \citet{gresele2021independent} demonstrated that with independent latent factors and a mixing function that adheres to the independent mechanism assumption, some of the non-identifiability counterexamples highlighted in \citet{hyvarinen1999nonlinear} can be avoided. Expanding on the role of mixing maps, \citet{buchholz2022function} scrutinized different classes of maps that restrict the Jacobian of the mixing maps. Their study specifically focused on conformal maps and orthogonal coordinate transformations. \citet{kivva2022identifiability} proposes that when the mixing map is piece-wise linear and the latent distribution is a Gaussian mixture (with latent components conditionally independent given a discrete, unobserved confounder), the true latent factors can be identified up to scaling and permutation as well. \citet{ahuja2022interventional} asserted that the true latent factors can be identified, barring permutation and scaling errors, when the mixing map is polynomial and latent factors satisfy the support independence assumption, as proposed in \citet{wang2021desiderata,roth2022disentanglement}.
\citet{brady2023provably, lachapelle2023additive} obtain identifiability for additive decoders, while \citet{moran2022identifiable, zheng2022identifiability, zheng2023generalizing} 
obtain identifiability by assuming a sparse structure on the Jacobian of the mixing function.

\textbf{Identifiability of latent factors with weak supervision:}
Research in this category largely makes assumptions on the latent distribution but imposes few constraints on the mixing map. To compensate for this lack of restrictions, these studies necessitate additional information, typically in one of two forms: a) identification driven by auxiliary information (e.g., labels, time stamps), or b) identification driven by weak supervision (e.g., data augmentations) \citep{hyvarinen17a, hyvarinen19, tcl}.  A key example of auxiliary information-driven identification is the work on identifiable variational autoencoders \citep{khemakhem2020variational}, which assumes the existence of an additionally observed variable such that the latent variables are conditionally independent given it, and the conditional probability density of the latent variable given this auxiliary variable comes from an exponential family.
This work has been expanded upon in several subsequent studies \citep{ khemakhem2020ice, lachapelle2022disentanglement, ahuja2022towards, hyttinen22}, which modify some of its assumptions.  \citet{locatello2020weakly, klindt2020towards} assumed access to paired data, which can emerge from data augmentation or natural video frames with sparse changes, resulting in supervision-driven identification. Several follow-up studies \citep{halva21, ahuja2022weakly, brehmer2022weakly,  yao2022learning, lippe2022citris} have built upon this work, moving beyond the independence assumptions on the latent factors and incorporating general transition dynamics.

\section{Practical criterion}

\subsection{Proof-of-concept experimentation for linear maps}
\label{app:experiments}
\begin{figure}
    \centering

    \subfigure[True factors.]{
        \includegraphics[width=0.47\linewidth]{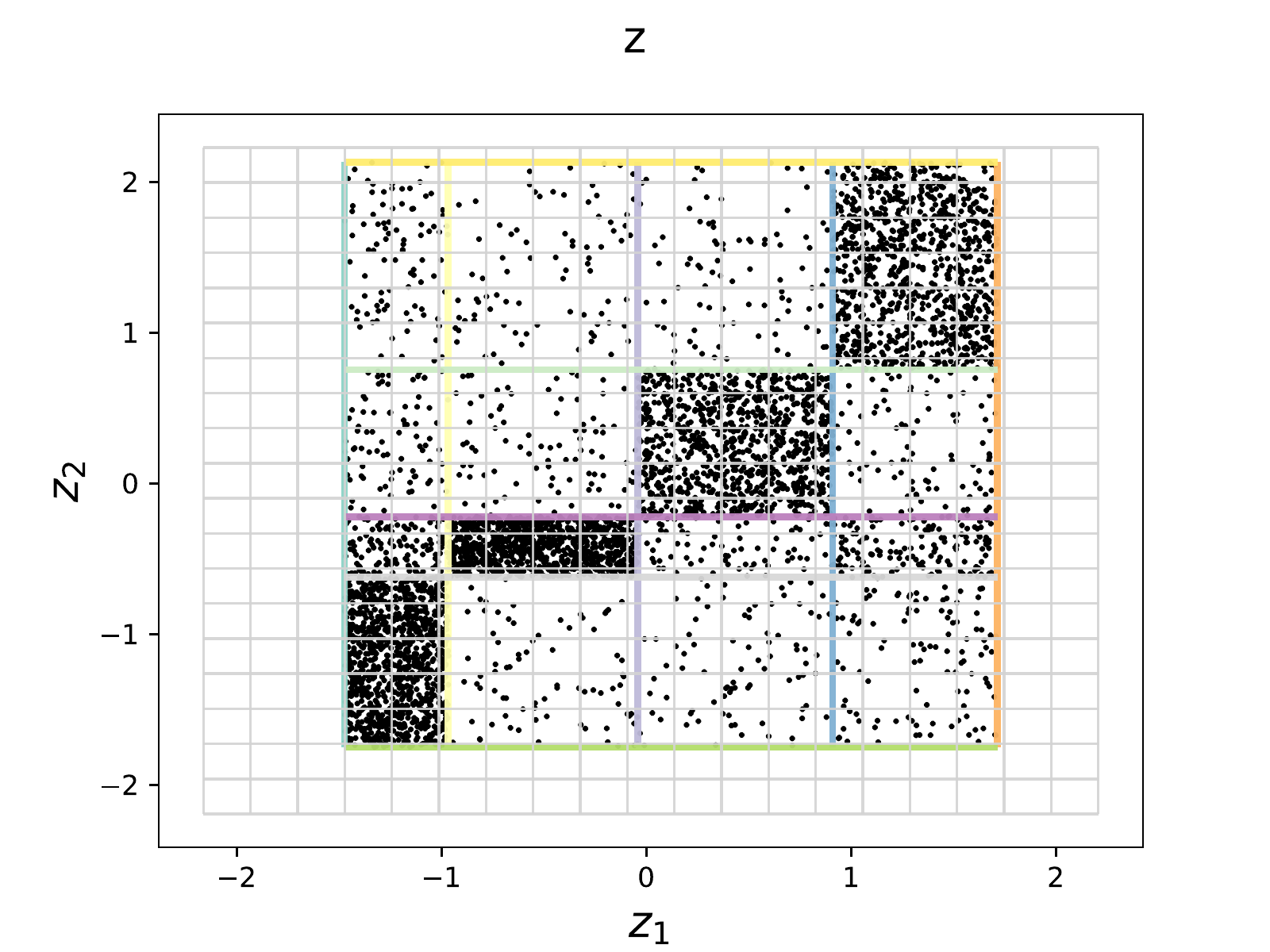}
        \label{fig:truez}
    }\hfill
    \subfigure[Observed variables.]{
        \includegraphics[width=0.47\linewidth]{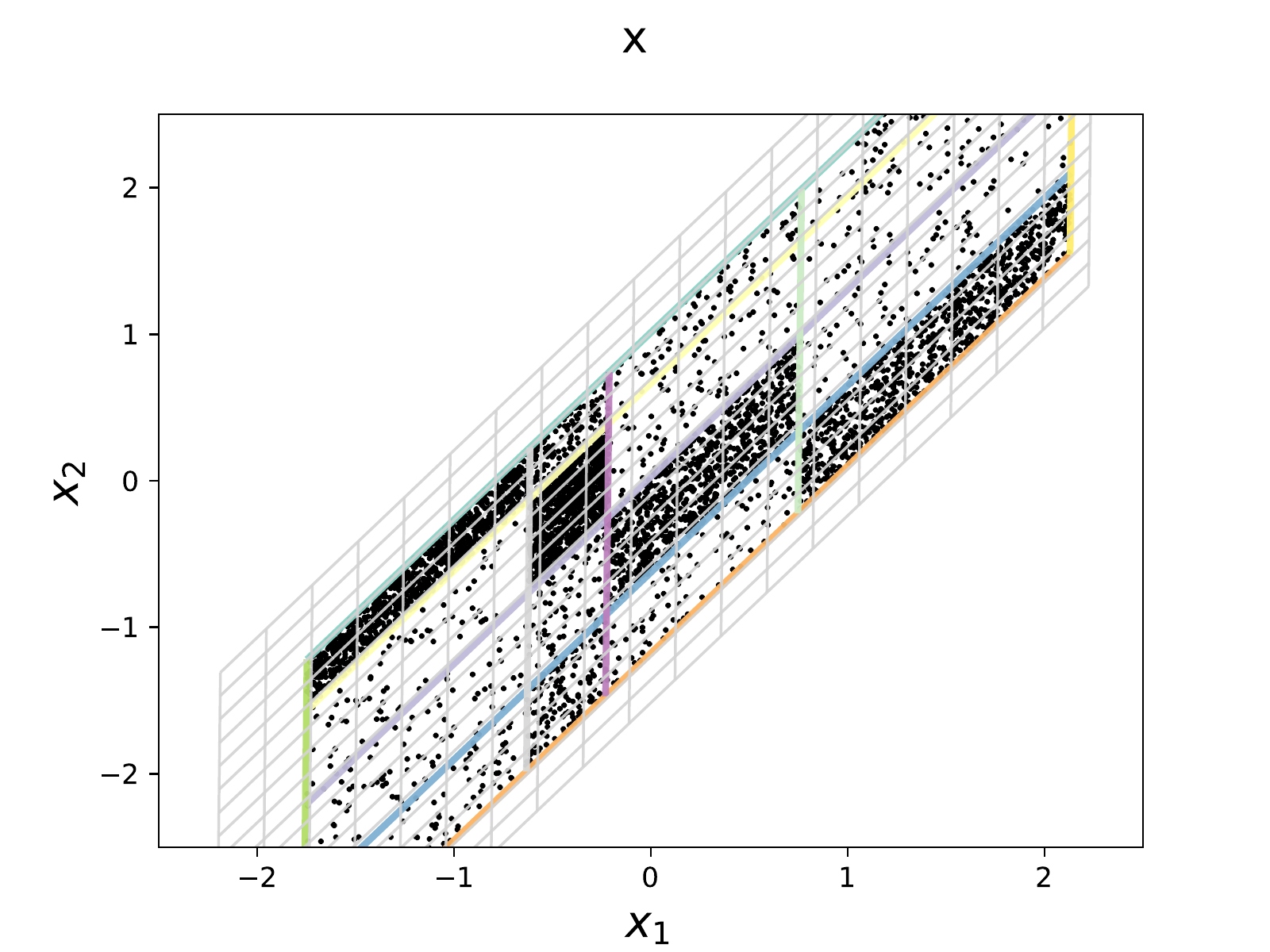}
        \label{fig:xx}
    }
    
    \subfigure[Linear ICA reconstruction of the factors.]{
        \includegraphics[width=0.47\linewidth]{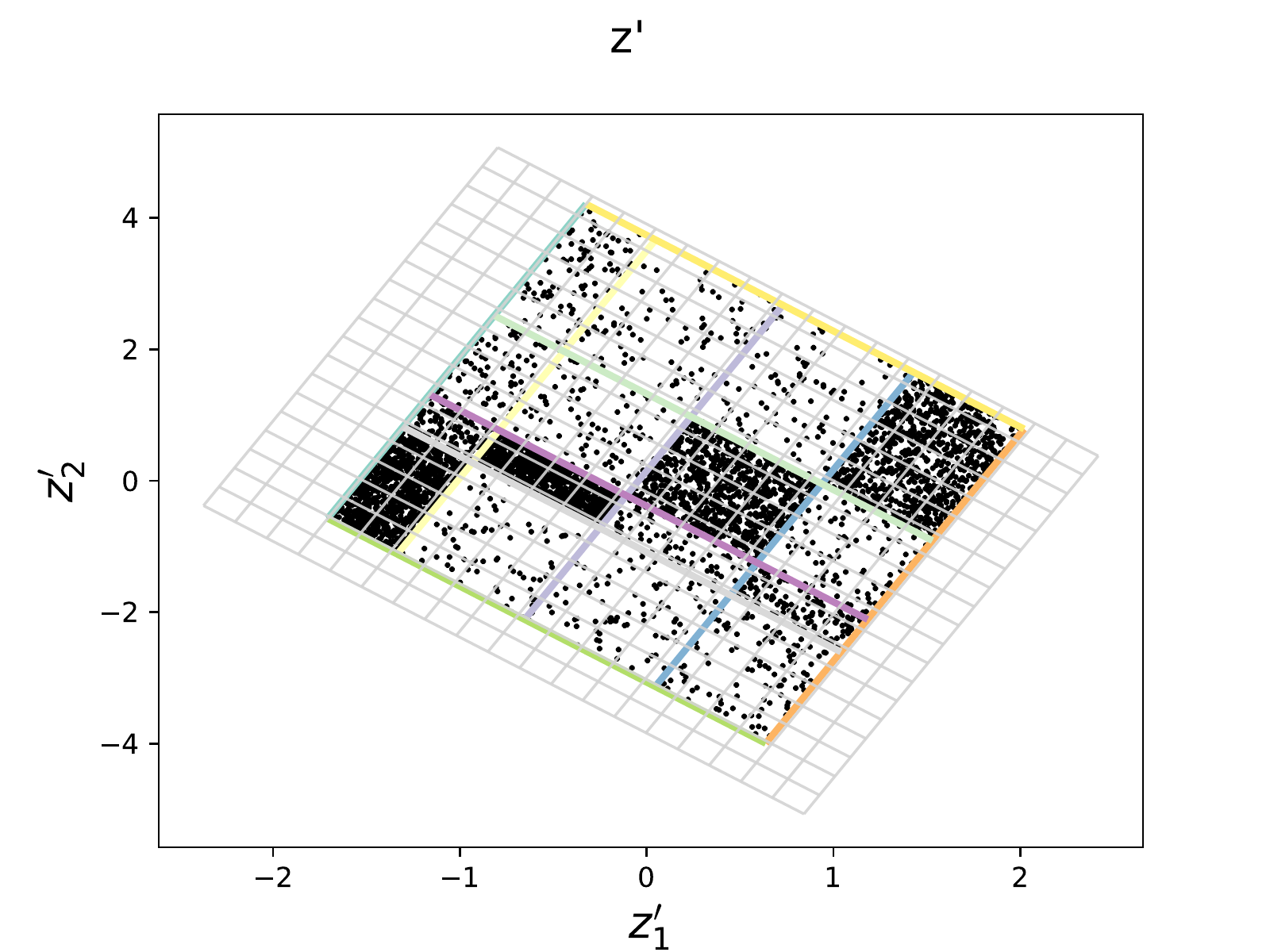}
        \label{fig:z-ica}
    }\hfill
    \subfigure[Reconstruction of the factors by Hausdorff Factorized Support.]{
        \includegraphics[width=0.47\linewidth]{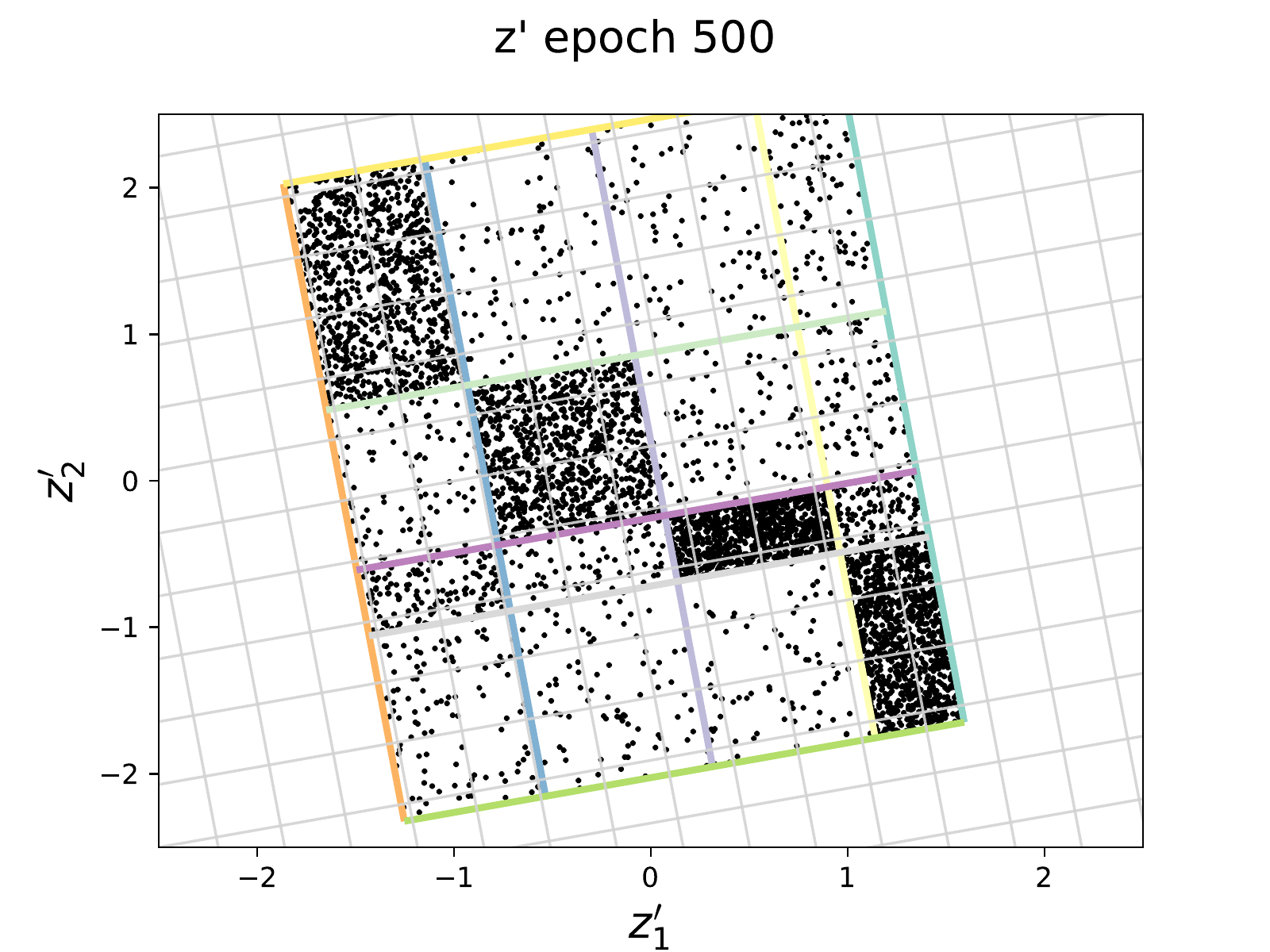}
        \label{fig:fact_corr}
    }\hfill
    \subfigure[Our model's reconstruction of the factors.]{
        \includegraphics[width=0.47\linewidth]{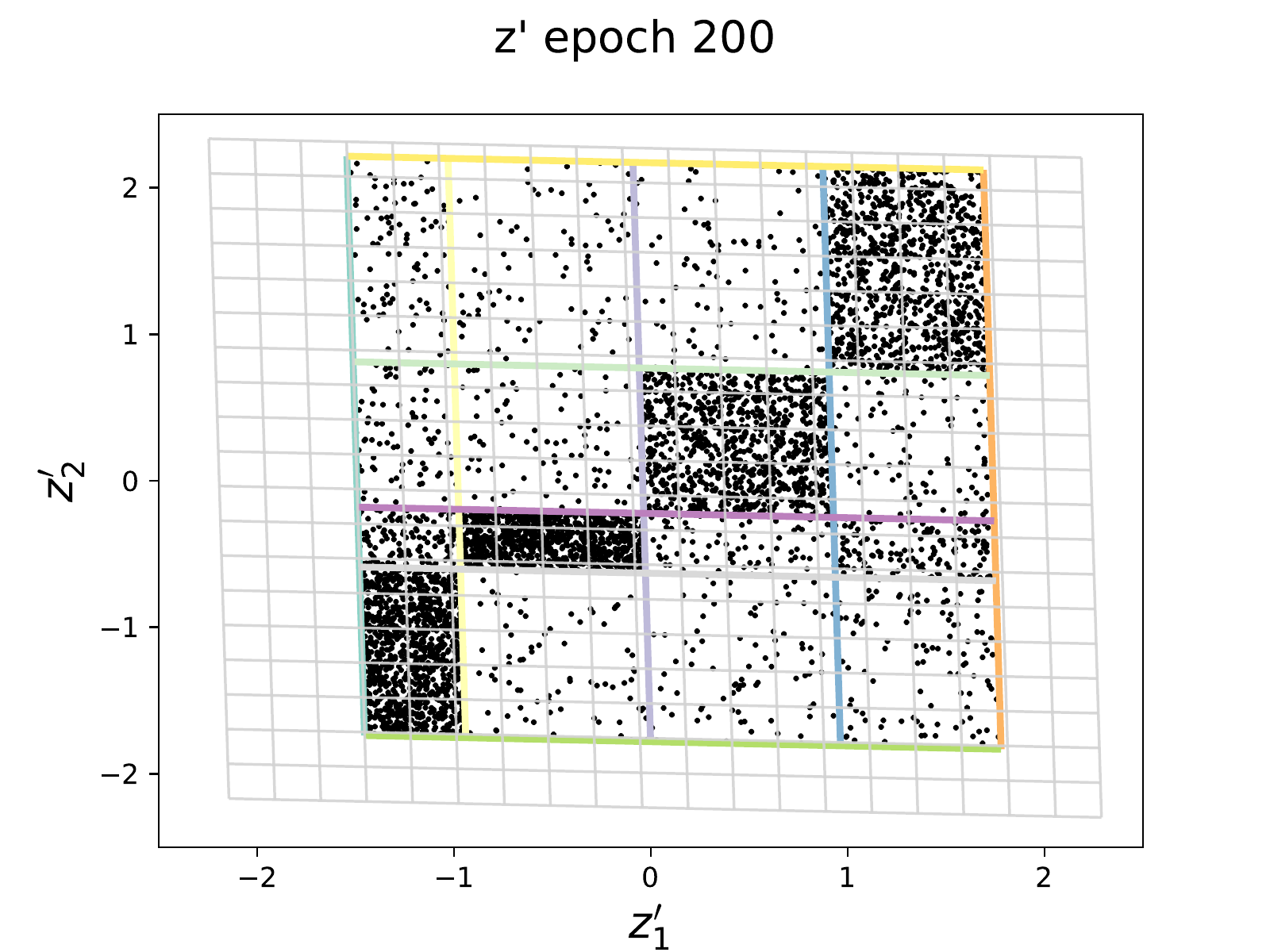}
        \label{fig:gridalign-z}
    } 
    \caption{When the true latent factors (\ref{fig:truez}) are correlated, our method (\ref{fig:gridalign-z}) obtains a factorized representation corresponding to the ground-truth factors, as opposed to linear ICA (\ref{fig:z-ica}) and Hausdorff Factorized Support (\ref{fig:fact_corr}) which reconstruct the factors up to a rotation.}
    \label{fig:ica}
    
\end{figure}

\textbf{Synthetic data generation:}

We generate a grid of points by establishing a prior for each cell, such that the sum of the priors of all the cells equals 1. We define a $4\times4$ grid, the position of each separator being drawn uniformly inside the range of the grid. In order to generate correlated data, first we draw the prior probabilities from a standard Uniform distribution. Then, we redefine the prior probability of the cells in the diagonal to be higher than the probability of the other cells, followed by normalization. 
The dataset is composed of 50,000 samples from this distribution.
In the dataset with unfactorized support (Figure \ref{fig:unfactorized}), the correlation coefficient between the true factors of variation is of 0.61.

\textbf{Algorithm:}

The steps for implementing the training criterion are:

\begin{enumerate}
\item Randomly initialize a parametric mapping $g:\mathcal{X}\rightarrow\mathcal{Z}$
to be learned. 

\item From the matrix of observed samples $\mathbf{X}$ of size $n \times D$, compute the matrix of estimated latent variables $\mathbf{Z}=g(\mathbf{X})$ of size $n \times d$ -- where $g$ is applied separately
to each row\footnote{Note that we dropped the apostrophe ' in $\mathbf{Z}$ to lighten
notation.}.

\item Estimate the density of $\mathbf{Z}$ using a kernel density estimator (Parzen
window) $\hat{p}_{\sigma}$ and compute the gradient $\mathbf{V}_{i, \cdot}=\frac{\partial \log \hat{p}_{\sigma}}{\partial z}(\mathbf{Z}_{i, \cdot})$
at every point of $\mathbf{Z}$.

\item Define importance weighting terms $\alpha$ based on the gradient magnitudes $\alpha_{i}=\frac{\|V_{i, \cdot}\|}{\sum_{i'=1}^{n}\|V_{i', \cdot}\|}$. A large magnitude of the gradient indicates a sharp jump, hence, this weight indicates how close the sample is to a density jump, that is, how likely it belongs to an axis-separator of the grid. 
\end{enumerate}

Let $\bar{\mathbf{V}}_{i}=\frac{\mathbf{V}_{i}}{\|\mathbf{V}_{i}\|}$ be the normalized version of $\mathbf{V}_{i}$.
For the individual gradient vectors to be axis-aligned, the maximum cosine similarity with the canonical axis vectors should be maximized:

\begin{equation}
    \mathrm{maximize}\;\max_{j \in \{ 1, \dots, d \} }|\mathrm{cosim}(V_{i \cdot},\vec{\mathbf{1}}_{j})|=\max_{j}\frac{|\mathbf{V}_{ij}|}{\|\mathbf{V}_{i \cdot}\|_{2}}=\frac{\|\mathbf{V}_{i \cdot}\|_{\infty}}{\|\mathbf{V}_{i}\|_{2}}=\|\bar{\mathbf{V}}_{i \cdot}\|_{\infty}.
\end{equation}

Then, the final loss function to be optimized over all the points is:

\begin{equation}
    \mathrm{minimize}\; \ell_\mathrm{grad-axis} = -\sum_{i=1}^{n}\alpha_{i}\|\bar{\mathbf{V}}_{i, \cdot}\|_{\infty}.
\end{equation}

\textbf{Details of model and algorithm -- Dataset with unfactorized support:}

We minimize this loss using stochastic gradient descent with a learning rate of $0.1$, momentum of $0.9$, and a batch size of $5000$ samples. Mini-batches are employed due to the high memory cost of loading the full dataset.
The results displayed are for when the training loss stops decreasing.

The kernel density estimation employs a bandwidth of $0.1$.
With finite samples, we use a density estimator $\hat{p}_{Z'}$, since we do not have access to the exact $p_{Z'}$. We remark that any density estimation will result in some smoothing of the true distribution. So even if there were real discontinuities in the exact density, they will appear as smoothed discontinuities: the gradients of the density have large magnitude, not infinite magnitude.

\textbf{Hausdorff Factorized Support training details}:

We train HFS using the \texttt{hausdorff\_hard} distance approximation which is used throughout the experiments from \citet{roth2022disentanglement}.
In this simple linear case, we simply optimize to minimize the Hausdorff distance between the learned factors and their counterpart with factorized support. We did not find an advantage in using the reconstruction term as the representation does not collapse into a single point.
Training is done using stochastic gradient descent with a step size of 0.0001 and a batch size of 5000 samples. 

\textbf{Experiment with factorized support:}
We conduct a similar study on a dataset in which the support of the true factors is factorized.
In this dataset (Figure \ref{fig:ica}, the true factors have a correlation coefficient of 0.64.
In this experiment, we attempt to have a fair comparison with HFS and demonstrate that using (discontinuity) information from inside the support can help achieve better axis-alignment (and as a result, better factorized support) of the learned factors of variation.

We compare our model with linear ICA and show that our model is able to learn a factorized representation of the factors, while Fast ICA \citep{fastica} fails due to the correlation of the factors violating the independence assumption, as illustrated in Figure \ref{fig:ica}. HFS also learns the reconstructed factors up to a rotation (but no shearing), even though its factorized support assumption is satisfied, showing that in this case our criterion is effective in aligning the factors with the axes.

\subsection{Towards a criterion for nonlinear maps}
\label{app:criterion_nonlinear}

\paragraph{Alignment of discontinuities in the joint density}

For nonlinear maps, only encouraging the gradients to be axis aligned does not suffice because the distortions yield a curved latent space. It is also desirable to straighten this deformed grid, which can then be axis-aligned. Here, we outline a few terms that could encourage this behavior in the training dynamics.
We can align both the point samples and their gradient vectors.
Moreover, the alignment comes in two forms: local alignment in a neighborhood of points, and alignment to the axes.

\begin{itemize}
\item \textbf{Gradient local alignment term:} encourage pairs
of neighboring points of high gradient magnitude to have gradients
aligned by maximizing their cosine similarity. We can make the criterion
a weighted average of cosine similarities, with significant weights
only if they are neighboring points and both have large gradient magnitudes):
\begin{align*}
\\
\beta_{i,i'}= & \alpha_{i}\alpha_{i'}\exp(-\frac{1}{2\sigma_{2}^{2}}\|\mathbf{Z}_{i}-\mathbf{Z}_{i'}\|^{2})\\
\bar{\beta}_{i,i'} & =\frac{\beta_{i,i'}}{\sum_{i,i'}\beta_{i,i'}}\\
\mathrm{maximize} & \sum_{i,i'}\bar{\beta}_{i,i'}\,\mathrm{cosim}(\mathbf{V}_{i},\mathbf{V}_{i'})\\
i.e.\;\mathrm{minimize} \; \ell_\mathrm{grad-local} = & -\sum_{i,i'}\bar{\beta}_{i,i'}\left\langle \bar{\mathbf{V}}_{i},\mathbf{\bar{V}}_{i'}\right\rangle 
\end{align*}

\item \textbf{Points local axis alignment term:} encourages neighboring
points with large density gradient magnitude to lie on or close to
the same axis separator. For this, it suffices that they share one
of their coordinates. In other words, is suffices to minimize the minimum over
coordinates of the squared difference: 
\[
\mathrm{minimize}\; \ell_\mathrm{points-local} = \sum_{i,i'}\bar{\beta}_{i,i'}\min_{j}\left(\frac{\mathbf{Z}_{ij}-\mathbf{Z}_{i'j}}{\|\mathbf{Z}_{i}-\mathbf{Z}_{i'}\|}\right)^{2}
\]

\item \textbf{Points-gradient-orthogonality term:} encourages the gradient
vector to be orthogonal to the vectors joining neighboring points by penalizing their squared cosine similarity: 
\[
\mathrm{minimize}\; \ell_\mathrm{points-grad} =  \sum_{i,i'}\bar{\beta}_{i,i'}\left(\left\langle \bar{\mathbf{V}}_{i},\frac{\mathbf{Z}_{i'} - \mathbf{Z}_{i}}{\|\mathbf{Z}_{i'} - \mathbf{Z}_{i}\|}\right\rangle \right)^{2}.
\]
\end{itemize}
We can, then, define a training criterion that is a weighted sum of
these terms (with appropriate sign), possibly together with the minimization of a reconstruction
error $\ell_\mathrm{rec}$ (from a decoder network $\hat{f}$ that tries to reconstruct $\mathbf{X}$ from
$\mathbf{Z}$).
$$ \ell_\mathrm{rec}= \frac{1}{n} \sum_{i=1}^n \| \hat{f}(\mathbf{Z}_i) - \mathbf{X}_i \|^2$$

The complete loss to minimize is, thus, 
$$ 
L(\theta) = \lambda_1 \ell_\mathrm{grad-local} 
+ \lambda_2 \ell_\mathrm{grad-axis}
+ \lambda_3 \ell_\mathrm{points-local}
+ \lambda_4 \ell_\mathrm{points-grad}
+ \lambda_5 \ell_\mathrm{rec}
$$
where $\theta$ is the set of (network) parameters of both encoder $g$ and decoder $\hat{f}$.

\section{Additional evidence of axis-aligned discontinuities in real data}
\label{app:evidence}

\begin{figure}[H]
    \centering
    \includegraphics[scale=0.4]{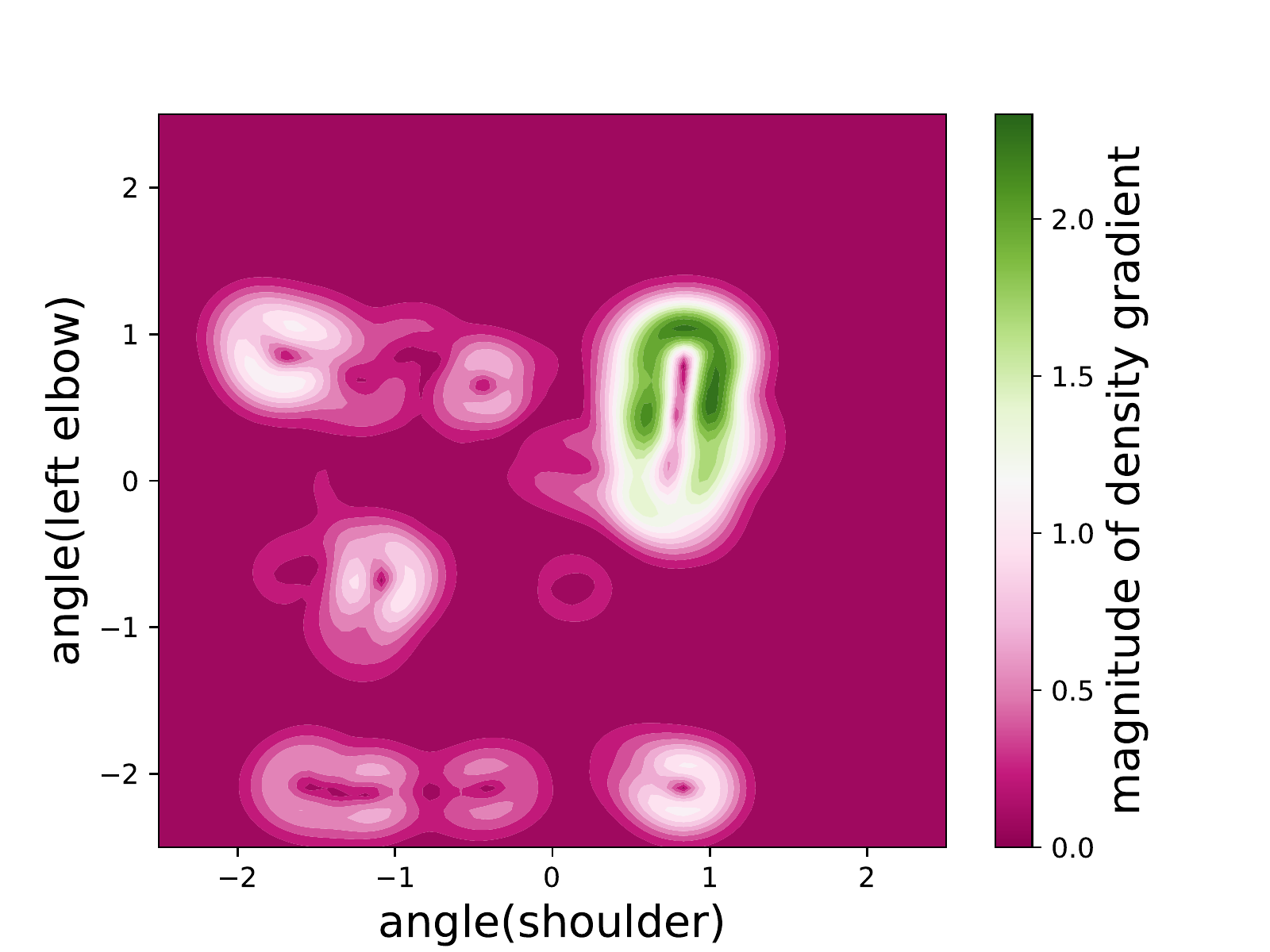}
    \caption{Evidence of axis-aligned discontinuities found in the CMU Motion Capture Dataset.}
    \label{fig:mocap}
\end{figure}

Figure \ref{fig:mocap} presents additional evidence of axis-aligned discontinuities in real data from the CMU Motion Capture dataset (obtained from \texttt{mocap.cs.cmu.edu}). We preprocess the variables to obtain the angles of the joints of the body in different frames captured. In particular, the angle of the left elbow is the angle formed by the markers “LELB”, “LUPA”, and “LWRA”. The variable angle(shoulders) measures the angle of the shoulders (defined by the markers “RSHO” and “LSHO”) with respect to the vertical axis. Then these angles are standardized. In the plot, we can observe axis-aligned discontinuities in green, which represents a high magnitude of the gradient of the density.

\section{Illustrations of the definitions}
\label{app:illustrations_def}

\begin{figure}
    \centering

    \subfigure[Independent non-Gaussian factors]{
        \includegraphics[width=0.3\linewidth]{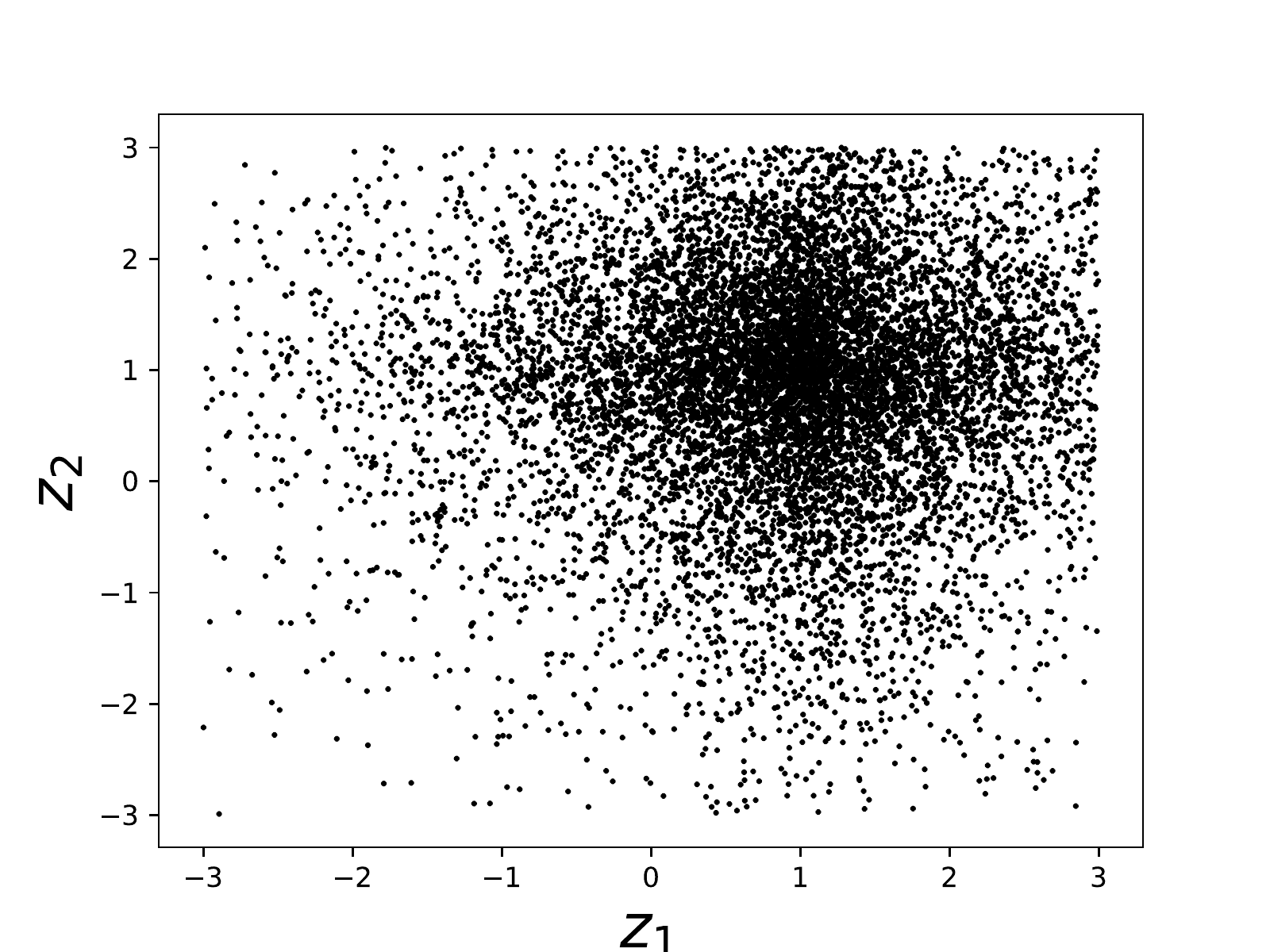}%
        \label{subfig:indep_z}%
    }\hfill
    \subfigure[Latent factors whose PDF has axis-aligned discontinuities]{
        \includegraphics[width=0.3\linewidth]{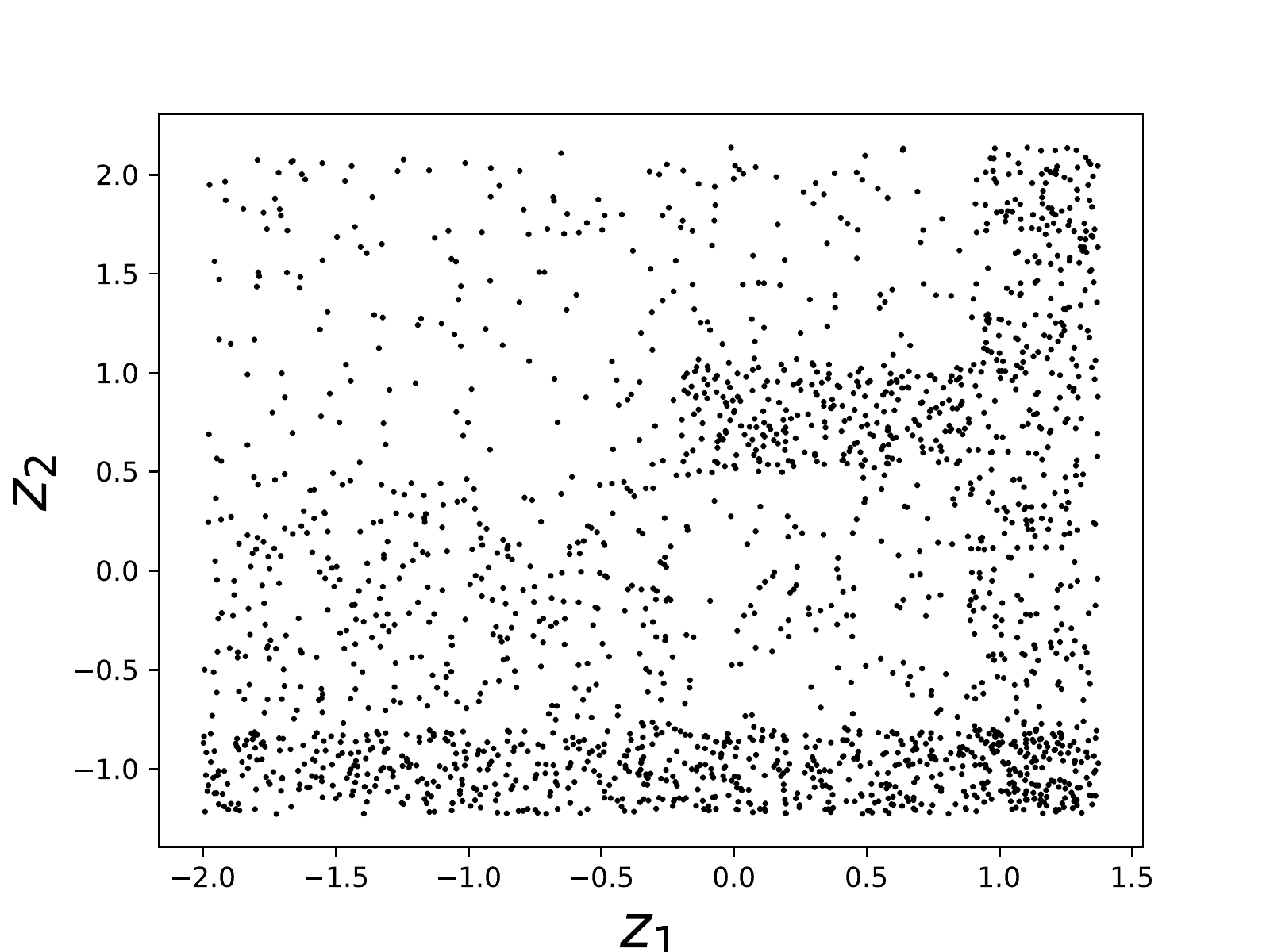}%
        \label{subfig:grid-scatter}%
    }\hfill
    \subfigure[Underlying grid structure]{
        \includegraphics[width=0.3\linewidth]{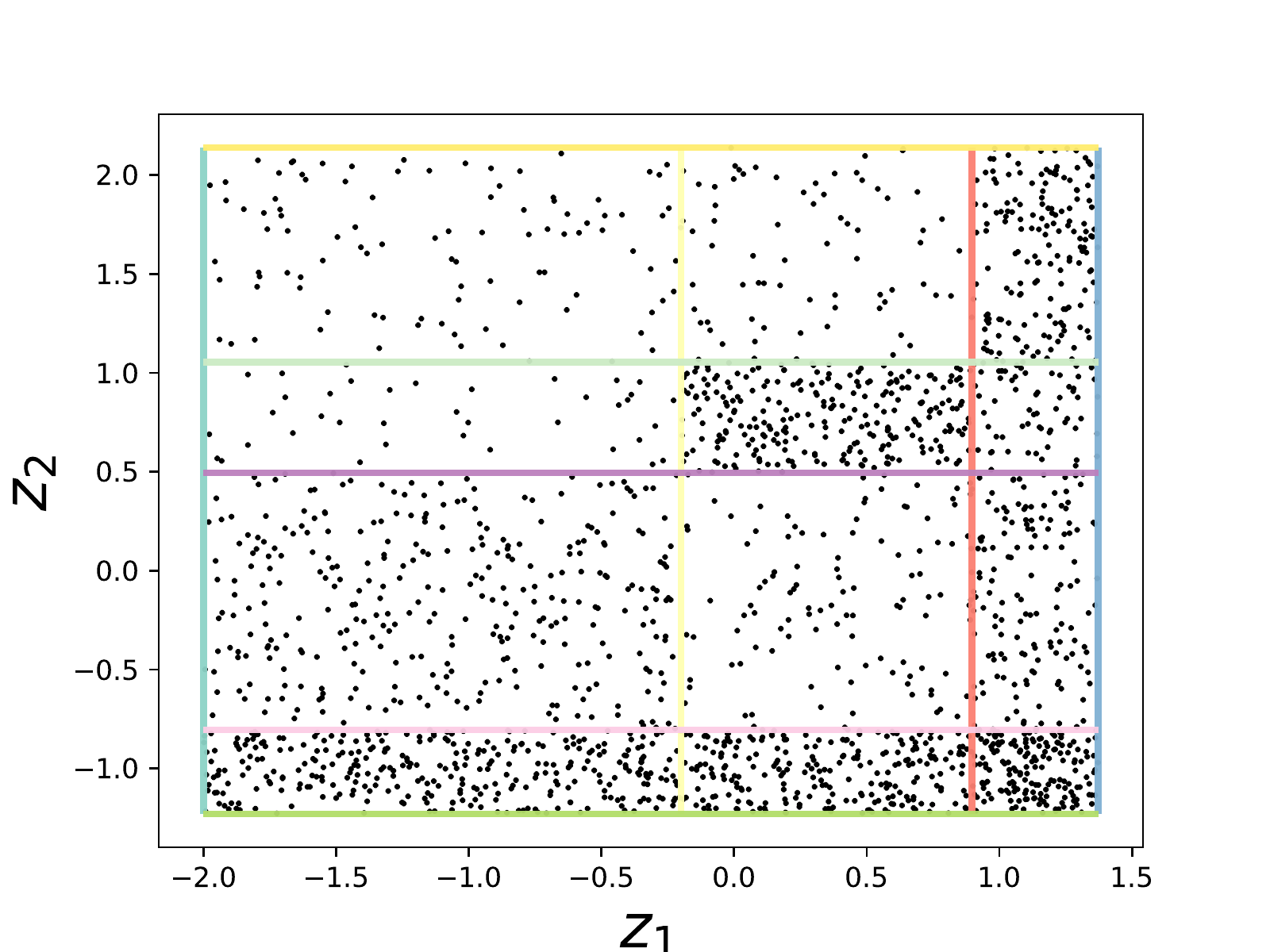}%
        \label{subfig:grid-sep}%
    }

    \caption{Illustration of different kinds of assumptions on the distribution of latent factors. \textbf{Left:} samples from traditional assumption of independent non-Gaussian factors (here using a truncated Laplace distribution). \textbf{Middle:} samples from a distribution that follows our assumption of axis-aligned discontinuities in the probability density. \textbf{Right:} Underlying grid structure revealing discontinuities in the density landscape as colored \emph{axis-separators}, forming a \emph{grid}.
    Traditional independence assumption yields non-identifiability result under a general nonlinear smooth mapping (diffeomorphism). Our assumption yields, under a diffeomorphism, provable recovery of a discretized coordinate system. It allows to map back observed points into the proper latent grid cell -- a novel relaxed form of identifiability, which we term \emph{quantized factor identifiability}.
    }
    
    \label{fig:latents}
\end{figure}

Figures \ref{fig:setup1}, \ref{fig:setup3}, \ref{fig:setup2}, \ref{fig:holes}, and \ref{fig:backbone} illustrate the concepts used in the definitions of section \ref{sec:grid-structure-recovery}.

\begin{figure}[H]
    \centering
    \includegraphics[scale=0.2]{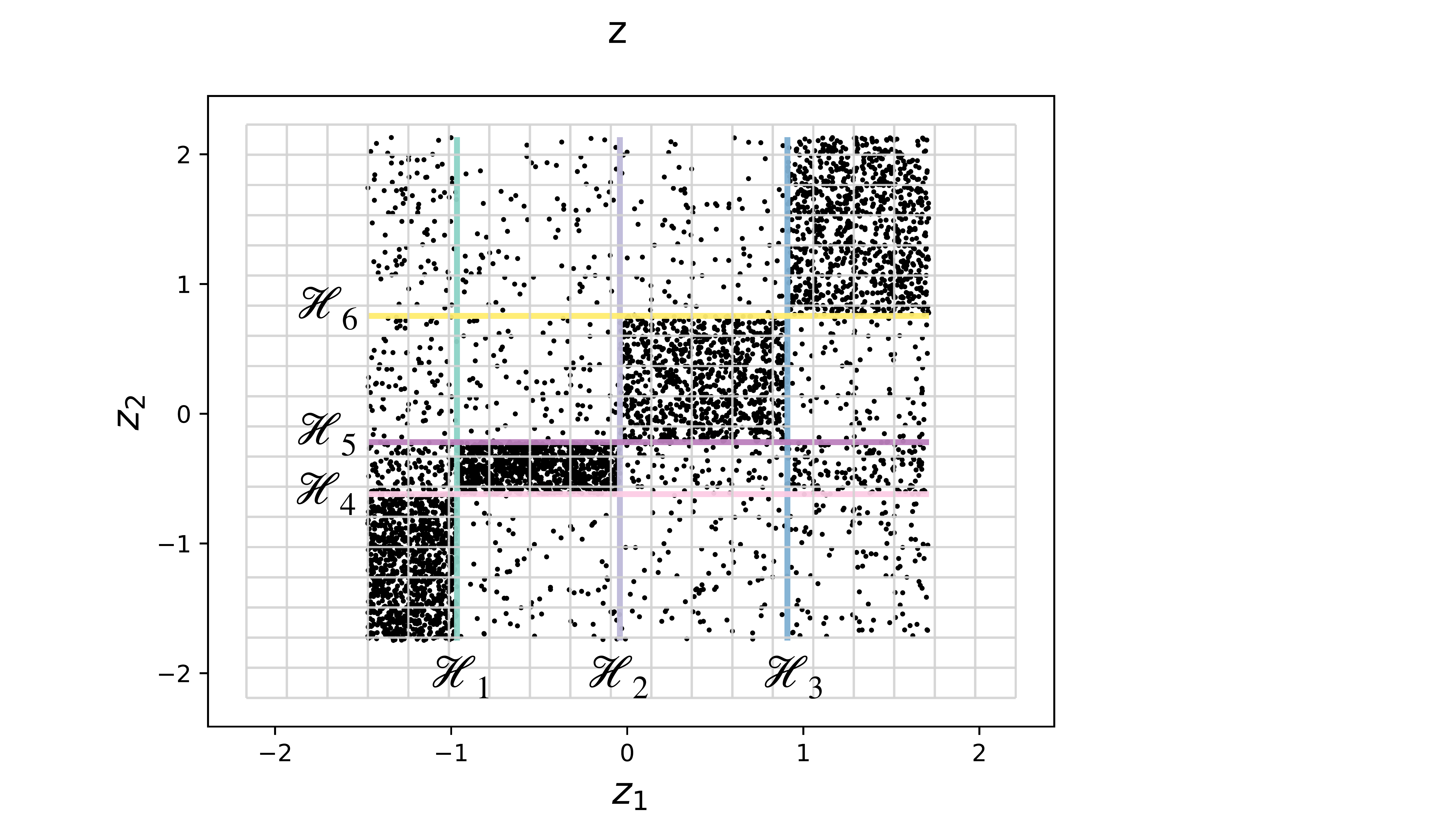}
    \caption{Axis separators $\mathcal{H}_1, \dots, \mathcal{H}_6 $ of $\mathcal{S}$ (Definition \ref{def:axis-separator}).}
    \label{fig:setup1}
\end{figure}

\begin{figure}[H]
    \centering
    \includegraphics[scale=0.2]{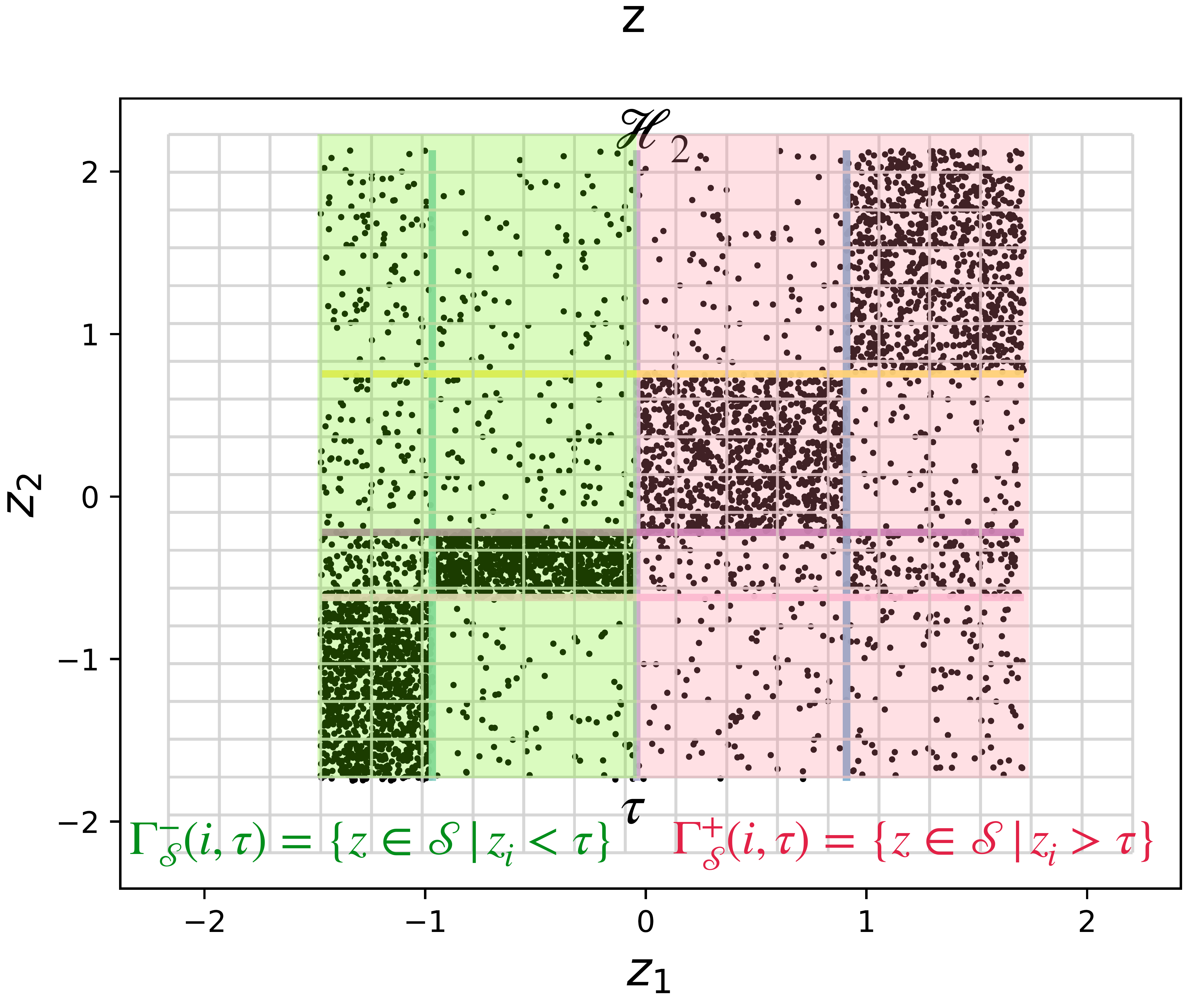}
    \caption{An\textbf{ axis-separator} of $\mathcal{S}$ splits $\mathcal{S}$ in two halves $\Gamma_{\mathcal{S}}^{+}(i,\tau)=\{z\in\mathcal{S}|z_{i}>\tau\}$ and $\Gamma_{\mathcal{S}}^{-}(i,\tau)=\{z\in\mathcal{S}|z_{i}<\tau\}$, which are each nonempty and connected (Definition \ref{def:axis-separator}). }
    \label{fig:setup3}
\end{figure}

\begin{figure}[H]
    \centering
    \includegraphics[scale=0.1]{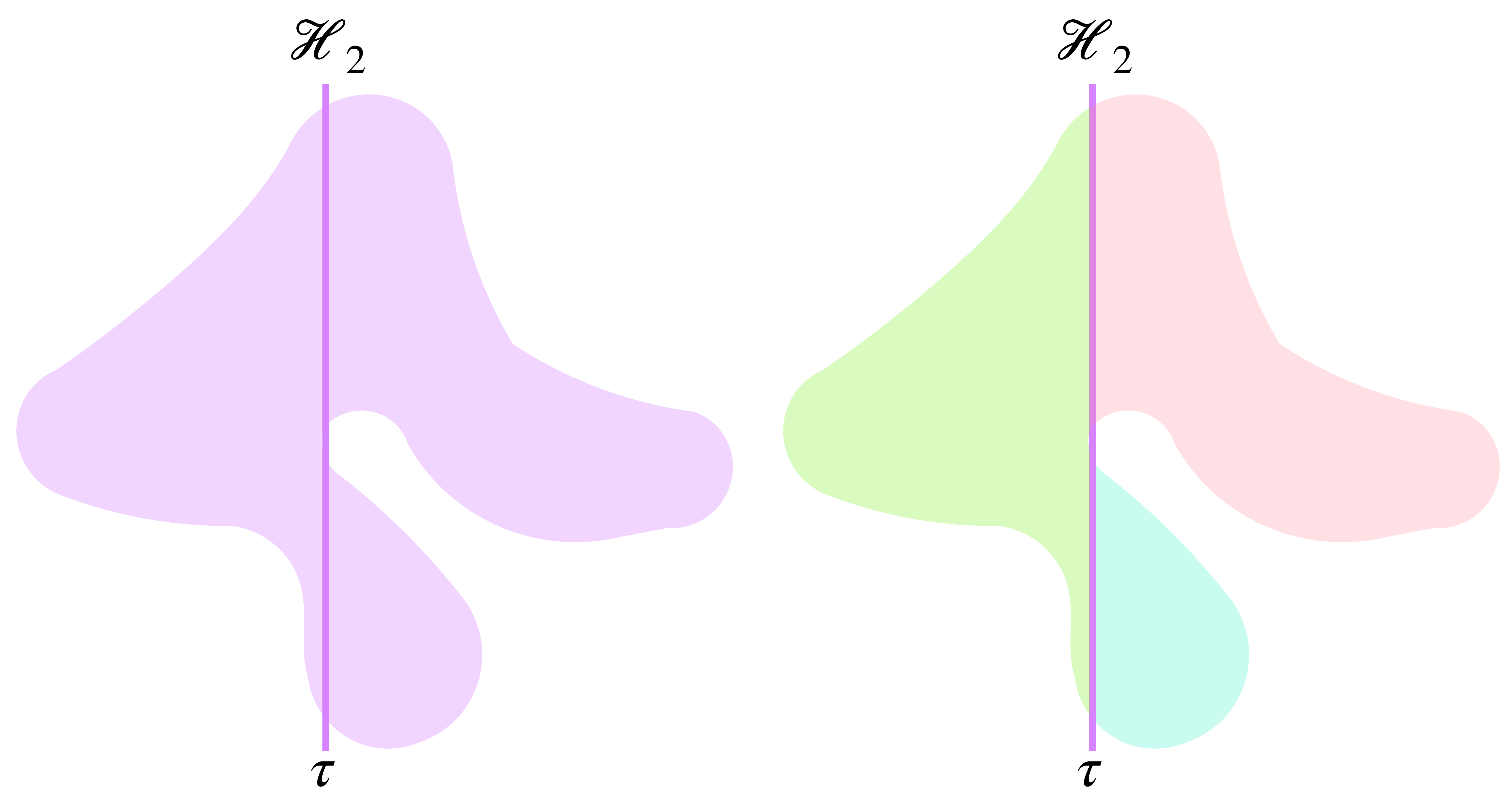}
    \caption{ Axis-separator of $\mathcal{S}$ counterexample: Even though the support of the set on the left is connected, $\mathcal{H}_2$ splits it into three parts, not two halves, so it does not satisfy the axis-separator condition (Definition \ref{def:axis-separator}). }
    \label{fig:holes}
\end{figure}

\begin{figure}[H]
    \centering
    \includegraphics[scale=0.2]{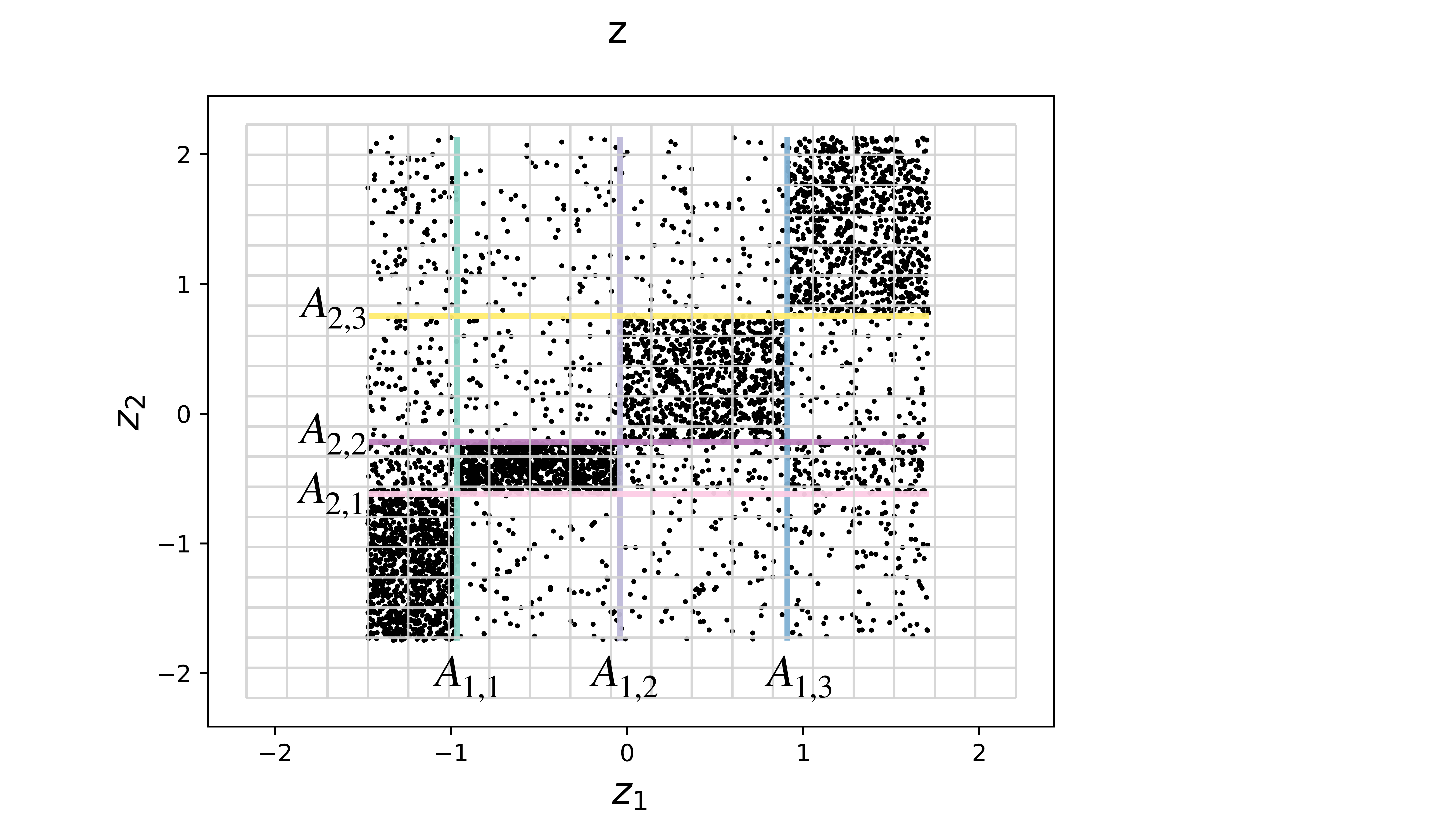}
    \caption{A \textbf{discrete coordination} $\mathbf{A}$
is a tuple $\mathbf{A}=(\mathbf{A}_{1},\ldots,\mathbf{A}_{d})$ where
each $\mathbf{A}_{i}$ is itself a tuple of real numbers in increasing
order $\mathbf{A}_{i}=(\mathbf{A}_{i,1},\ldots,\mathbf{A}_{i,n_{i}})$
such that $\mathbf{A}_{i,k+1}>\mathbf{A}_{i,k}$. These represent the coordinates
of axis-separators along each of the $d$ coordinate axes (Definition \ref{def:discrete_coordination}).}
    \label{fig:setup2}
\end{figure}

\begin{figure}[H]
    \centering
    \includegraphics[scale=0.15]{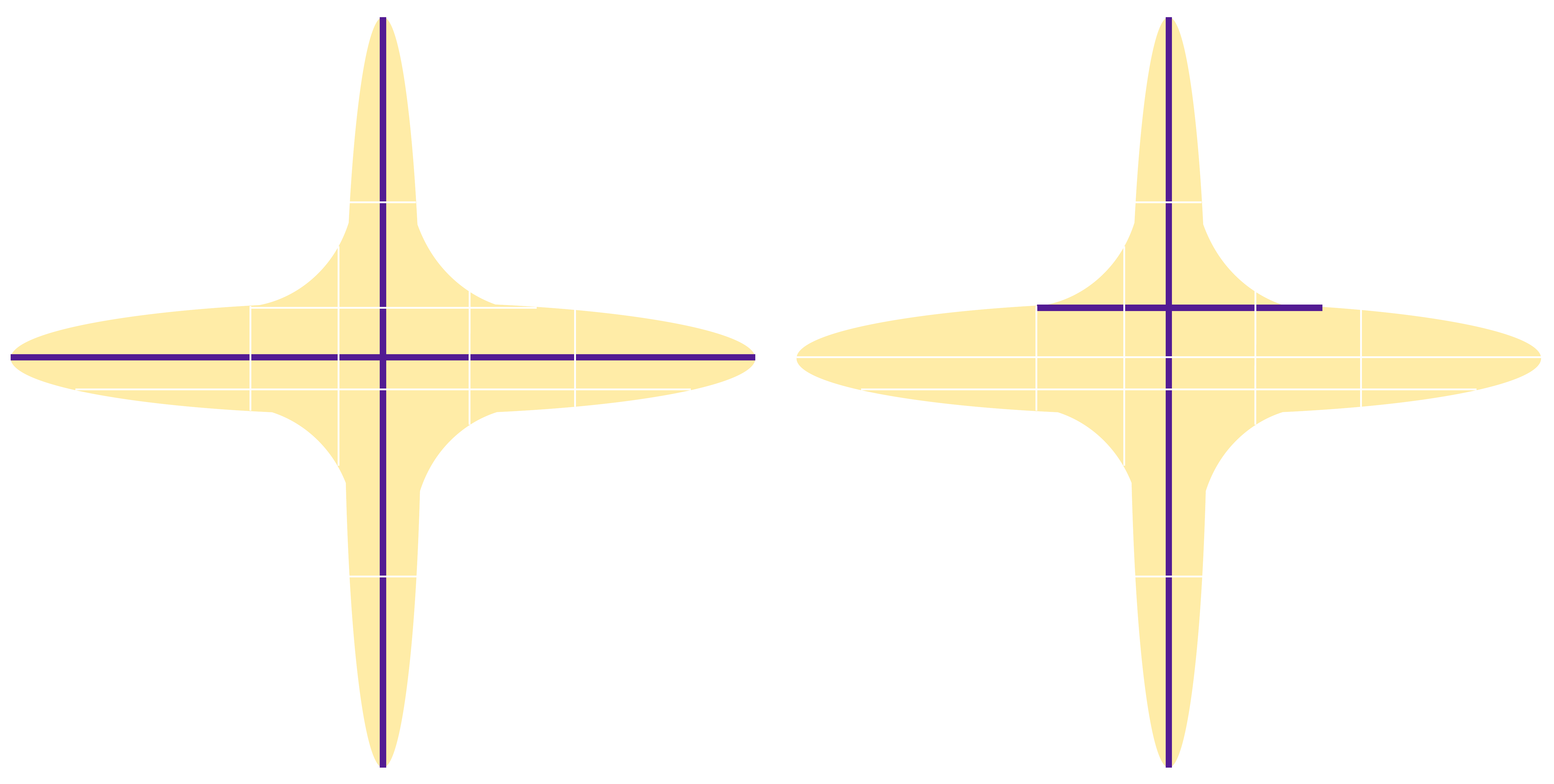}
    \caption{\textbf{Left}: The set of axis separators in {\color{blueviolet} dark blue} are a backbone since they intersect all the others (Definition \ref{def:backbone}). \textbf{Right}: The set of separators in dark blue is \textbf{not} a backbone since the horizontal axis separator does not intersect the right-most vertical separator. }
    \label{fig:backbone}
\end{figure}

\section{Proof of non-removable discontinuity preservation (Theorem~\ref{thm:discontinuities-preservation})}
\label{app:proof_nonremovable}
\begin{proof}
Let us denote $J_{h}(z)=\frac{\partial h}{\partial z}(z)$ the Jacobian
of $h$, and $J_{h^{-1}}(z')=\frac{\partial h^{-1}}{\partial z'}(z')$
the Jacobian of $h^{-1}$. Suppose $p_Z$ is one of the PDFs of $Z$, from this we can obtain a PDF of $Z'$ using the change of variable formula:
$p_{Z'}(z')=p_{Z}(h^{-1}(z'))|\det J_{h^{-1}}(z')|$.
Symmetrically, we can say that if $p_{Z'}$ is a PDF of $Z'$, we obtain a PDF version of $Z$ as follows:
$p_{Z}(z)=p_{Z'}(h(z))|\det J_{h}(z)|$.

Suppose the PDF $p_{Z}$ has a non-removable discontinuity at $z_0$. Pick one of the PDFs of $Z'$, let us call it $p_{Z'}$. There are three possibilities for what could happen at $Z'=h(z_0)$. 
\begin{itemize}
\item $p_{Z'}$ is continuous at $h(z_0)$. We can apply the change of variables formula and obtain a PDF of $Z$ that is given as $p_{Z}(z)=p_{Z'}(h(z))|\det J_{h}(z)|$. Since the RHS is a product of two terms that are continuous at $z_0$, we conclude that $p_{Z}$ is continuous at $z_0$. This contradicts the fact that $p_{Z}$ has a non-removable discontinuity at $z_0$. 
\item $p_{Z'}$ is discontinuous at $h(z_0)$ but the discontinuity is removable. Therefore, there exists a  PDF $p_{Z'}$ that is continuous at $h(z_0)$. We can now follow the same argument as the above bullet to construct a PDF of $Z$ that is continuous at $z_0$, which would contradict the fact that $p_Z$ has a removable discontinuity at $z_0$.
\item Finally, we are only left with the case that $p_{Z'}$ has a non-removable discontinuity at $h(z_0)$, which is what we set out to prove.
\end{itemize}

\end{proof}

\section{Proof of grid structure preservation and recovery (Theorem~\ref{thm:gridStructure})}
\label{app:grid_struct_proof}

\subsection{Proof of Step 1 -- Recovery of all separators}
\label{sec:step-1}
Knowing, from Theorem~\ref{thm:discontinuities-preservation}, that the set of \emph{points} making up the axis-aligned grid
$G$ maps through $h$ to the set of \emph{points} making up the axis-aligned
grid $G'$ (i.e. $G'=h(G)$), our first major step consists in establishing
that the \emph{axis-separators} that make up $G$ (i.e. the elements
of $\mathcal{G}$) map one-to-one to the \emph{axis-separators} that
make up $G'$ (i.e. the elements of $\mathcal{G}'$). We can denote
this simply as $G'=h(G)\Longrightarrow\mathcal{G}'=h(\mathcal{G})$.

\subsubsection*{Overview}
The high-level proof is as follows:
\begin{itemize}
\item Since $H\in\mathcal{G}$ is a connected smooth hypersurface in $\mathcal{S}$, and a $C^{\infty}$-diffeomorphism maps connected sets to connected sets and smooth hypersurfaces to smooth hypersurfaces, we get that $h(H)$ is a connected smooth hypersurface in $\mathcal{S}'$.
\item From Theorem~\ref{thm:discontinuities-preservation}, we also know that $h(H)\subset G'$.
\item Next, we establish that the only smooth connected hypersurfaces in $\mathcal{S}'$ that are included in $G'$ are necessarily subsets of a single axis-separator of $\mathcal{G}'$ . This is fundamentally due to the fact that a connected smooth hypersurface
cannot spread from one separator of the grid to another along their orthogonal intersection, as it would no longer be smooth (having a ``kink''), so it has to stay within a single
separator.
\item We conclude that $h(H)$ is necessarily a subset of a single axis-separator $H'\in\mathcal{G'}$.
\item We, then, show that not only is $h(H)$ a subset of a single axis-separator $H'\in\mathcal{G'}$, but that it has to be that entire separator. From the previous point, the reverse diffeomorphism $h^{-1}$ must map back the would-be remaining part of $H'$ (i.e. $H'\setminus h(H)\neq \emptyset$) to a subset of the same separator as it maps back $h(H)$, i.e. to $H$. But this leads to a contradiction, since that remaining part did not come from $H$ initially. (See proof of Lemma~\ref{lem:h-yields-separator} in Appendix).
\item  We have thus shown that $H\in\mathcal{G} \implies h(H)\in \mathcal{G'}$ . It suffices to apply this result in the other direction using $h^{-1}$ to establish the converse. We thus have a bijection: the one-to-one mapping we needed to prove. Which we can write succinctly as $\mathcal{G}'=h(\mathcal{G})$.
\end{itemize}

\subsubsection{Detailed proof of Step 1 -- recovery of all separators}
\label{sec:detailed-proof-step-1}

The goal of step 1 is to establish
that the \emph{axis-separators} that make up $G$ (i.e. the elements
of $\mathcal{G}$) map one-to-one to the \emph{axis-separators} that
make up $G'$ (i.e. the elements of $\mathcal{G}'$). We can denote
this simply as $G'=h(G)\Longrightarrow\mathcal{G}'=h(\mathcal{G})$.
We provide a detailed proof here. Note that we always assume finite axis-separator sets.

\subsubsection*{Preliminaries }

Whenever we say hypersurface, it is always defined as a $d-1$ dimensional
regular submanifold embedded in $d$-dimensional \emph{ambient space
}$\mathcal{S}\subset\mathbb{R}^{d}$, where $\mathcal{S}$ is a $d$-dimensional connected
open submanifold of $\mathbb{R}^{d}$. In our application $\mathcal{S}$
will be the interior of the support of the density we consider.
\begin{itemize}
\item \textbf{\underline{Definition:}}  
 \textbf{Intersection set.} given a
grid $G=\cup\mathcal{G}=\cup_{H\in\mathcal{G}}H$, we define its \emph{intersection
set }$I(\mathcal{G})$ as the set of points that belong to intersections
of 2 or more distinct separators of $\mathcal{G}$. Formally: $I(\mathcal{G})=\cup_{H\in\mathcal{G},H'\in\mathcal{G},H'\ne H}(H\cap H')$.
\item \textbf{\underline{Definition:}} \textbf{Exclusive point.} We say that
a point $z$ is exclusive to a separator $H$ of a grid $G=\cup\mathcal{G}$
if it belongs to $H$ but does not belong to any other separator of
the grid (i.e. it does not belong to $I$). Similarly we will say
that a set is exclusive to a separator if all its elements are exclusive
points of that separator. The set of points of a separator $H$ that are exclusive to it will be denoted $\breve{H}=H\setminus I$.
\item \textbf{\underline{Definition:}} \textbf{Tangent space} we view the
tangent spaces to hypersurfaces embedded in an ambient space included
in $\mathbb{R}^{d}$ literally as affine subspaces of $\mathbb{R}^{d}$, i.e. we use the traditional view\footnote{This traditional extrinsic view of tangent space is preferred here to more modern definitions, because it simplifies a step in our proof. It is also arguably easier to intuit and follow for readers who may not be familiar with differential geometry.} of tangent space~\citep{docarmo1976}, which is a natural generalization of the notion of a plane tangent to a surface at a point, to higher dimensional hypersurfaces embedded in $\mathbb{R}^{d}$.  The tangent space at
$z\in A$ to a hypersurface $A$ will be denoted $T^A(z) = T_{z}A$. A smooth
hypersurface has the property that it has at every $z\in A$ a well-defined
tangent space $T^A(z) = T_{z}A$ of the same dimension as the hypersurface. When $A$ is a smooth hypersurface,  $T^A$ is a smooth map $T^A: A \rightarrow \mathrm{Graff}_{d-1}(\mathbb{R}^d)$ that maps any point $z$ of $A$ to a point of the \emph{affine-Grassmannian manifold}~\citep{klain-affine-grassmannian-1997, Lim-affine-grassmannian-2021} $\mathrm{Graff}_{d-1}(\mathbb{R}^d)$, i.e. the space of all $d-1$ dimensional affine-subspaces of $\mathbb{R}^d$.
Since $T^A$ is a continuous map between smooth manifolds, $T^A(z)$ will be continuous in any local (or global) parametrization of $A$ around $z$. (continuity based on the topology of the affine-Grassmannian manifold for comparing tangent spaces as affine-subspaces of $\mathbb{R}^d$).\\
\underline{Note} that the
tangent space to any axis-separator $H$ is a constant: it is the
affine subspace confounded with the hyperplane that includes the separator,
and will be denoted $\mathcal{T}_{H}$. i.e. we have $\forall z\in H,\;T^H(z)=T_{z}H=\mathcal{T}_{H}$.
Note also that with this affine subspace definition of tangent space,
$\mathcal{T}_{H}$ is different for every separator $H$ of an axis-aligned
grid: $\forall H_1 \in \mathcal{G}, \forall H_2 \in \mathcal{G}, \mathcal{T}_{H_1}=\mathcal{T}_{H_2} \Leftrightarrow H_1=H_2$.
\item \textbf{\underline{Useful properties:}} We will also use the following
properties that are either well-established differential geometry
knowledge or straightforward corollaries thereof \citep{Lee00}.

\begin{itemize}
\item \textbf{Property 1:} \emph{A $C^\infty$-diffeomorphism maps a smooth hypersurface
to a smooth hypersurface.}
\item \textbf{Property 2: }\emph{A diffeomorphism maps a path-connected set to a path-connected set.}
\item \textbf{Property 3:} \emph{Smooth connected hypersurfaces in $\mathbb{R}^{d}$ have a $d-1$ dimensional tangent space that is well-defined all over the hypersurface and continuous (in the sense defined above, see tangent space).}
\item \textbf{Property 4:} \emph{A non-empty open subset of a smooth hypersurface in ambient space is itself a smooth hypersurface in ambient space.}
\item \textbf{Property 5:}\emph{ A hypersurface that is a subset of another hypersurface has at every of its points the same tangent space as the hypersurface it is a subset of.}
\end{itemize}
\end{itemize}

\subsubsection*{Detailed proof}

\begin{lemma}
    \label{lem:no-I}No subset of the intersection set $I(\mathcal{G})$
of an axis-aligned grid $G=\cup\mathcal{G}$ can be a hypersurface in ambient space.
\end{lemma}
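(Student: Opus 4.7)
The plan is to exploit a dimensional mismatch: every pairwise intersection of distinct axis-separators sits inside an affine subspace of codimension at least $2$, while a hypersurface in the sense we are using is a smooth embedded submanifold of codimension exactly $1$. Since a $(d-1)$-dimensional object cannot be crammed into a finite union of $(d-2)$-dimensional ones, the lemma should fall out.

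First I would unpack the structure of $I(\mathcal{G})$. For two distinct axis-separators $H_1 = \Gamma_{\mathcal{S}}(i_1,\tau_1)$ and $H_2 = \Gamma_{\mathcal{S}}(i_2,\tau_2)$ in $\mathcal{G}$, there are two cases. If $i_1 = i_2$ then $\tau_1 \ne \tau_2$ (otherwise $H_1 = H_2$), so the coordinate hyperplanes are parallel and disjoint, giving $H_1 \cap H_2 = \emptyset$. If $i_1 \ne i_2$, then $H_1 \cap H_2 \subseteq \{\mathbf{z} \in \mathbb{R}^d \mid z_{i_1} = \tau_1,\; z_{i_2} = \tau_2\}$, which is an affine subspace of $\mathbb{R}^d$ of dimension exactly $d-2$. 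Because $\mathcal{G}$ is finite, $I(\mathcal{G})$ is a finite union of sets, each contained in some $(d-2)$-dimensional affine subspace of $\mathbb{R}^d$.

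Next I would derive the contradiction. Suppose for contradiction that some subset $A \subseteq I(\mathcal{G})$ is a hypersurface of $\mathcal{S}$, that is, a smooth embedded submanifold of dimension $d-1$. Picking any point $z \in A$, there is an open neighborhood $U \subseteq A$ of $z$ which, by the submanifold definition, has positive $(d-1)$-dimensional Hausdorff measure. On the other hand, each $(d-2)$-dimensional affine subspace of $\mathbb{R}^d$ has zero $(d-1)$-dimensional Hausdorff measure, and a finite union of such sets therefore also has zero $(d-1)$-dimensional Hausdorff measure. Since $U \subseteq A \subseteq I(\mathcal{G})$, this forces $\mathcal{H}^{d-1}(U) = 0$, contradicting the previous sentence. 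Hence no such $A$ exists.

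The main obstacle is essentially cosmetic rather than mathematical: choosing the cleanest low-overhead argument for the dimensional step. The Hausdorff-measure route above is short and self-contained, but one could equally invoke a Baire-category / Sard-type argument (a smooth $(d-1)$-manifold cannot be a subset of a finite union of $(d-2)$-affine subspaces), or argue pointwise by noting that at any $z \in A$ the tangent space $T_zA$ has dimension $d-1$ while $A$ must locally lie in the $(d-2)$-dimensional linear span of one of the constituent affine subspaces. I would favor the Hausdorff-measure version because it avoids subtleties about which affine subspace the neighborhood lies in and handles the finite union transparently via countable additivity.
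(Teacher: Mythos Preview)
Your proposal is correct and follows essentially the same approach as the paper: both argue that each pairwise intersection of distinct axis-separators is at most $(d-2)$-dimensional, and that a finite union of such sets therefore cannot contain a $(d-1)$-dimensional hypersurface. Your Hausdorff-measure step makes rigorous what the paper leaves as the informal assertion that ``the union of a finite number of at most $d-2$ dimensional submanifolds cannot be more than $d-2$ dimensional,'' and your explicit case split on $i_1=i_2$ versus $i_1\ne i_2$ (landing directly in an affine $(d-2)$-plane) is slightly more concrete than the paper's appeal to transversality of orthogonal hypersurfaces.
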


\begin{proof}
Consider $I(\mathcal{G})$ the intersection set of a grid $G=\cup\mathcal{G}$.
Formally: \\ $I(\mathcal{G})=\cup_{H\in\mathcal{G},H'\in\mathcal{G},H'\ne H}(H\cap H')$.
Each $H\cap H'$, if it is non-empty, is the intersection of two orthogonal
(thus transversal) connected hypersurfaces (i.e. $d-1$ dimensional
submanifolds embedded in ambient space), so that their intersection
can be at most a $d-2$ dimensional embedded submanifold of ambient
space. The union of a finite number of at most $d-2$ dimensional
submanifolds cannot be more than $d-2$ dimensional, so $I(\mathcal{G})$
cannot be more than $d-2$ dimensional. Consequently no subset of
$I(\mathcal{G})$ can be more than $d-2$ dimensional, thus it cannot be a hypersurface
in ambient space.
\end{proof}

\begin{lemma}
    \emph{\label{lem:contains-exclusive-point}}Let $A$ be a connected smooth
hypersurface included in an axis-alined grid $G=\cup\mathcal{G}$
with axis-separator set $\mathcal{G}$. Let $z\in A$. All open neighborhoods
of $z$ in $A$ will necessarily contain at least one point that is
exclusive to a separator of $\mathcal{G}$. 
\end{lemma}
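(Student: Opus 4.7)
The plan is to argue by contradiction, leveraging the dimensional obstruction recorded in Lemma~\ref{lem:no-I}: since $I(\mathcal{G})$ is at most $(d-2)$-dimensional, it cannot contain a hypersurface in ambient space. If I can show that the negation of the claim would produce exactly such a hypersurface sitting inside $I(\mathcal{G})$, the contradiction is immediate.

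Concretely, I would fix $z\in A$ and suppose, for contradiction, that there is an open neighborhood $U$ of $z$ in $A$ containing no point that is exclusive to any separator of $\mathcal{G}$. Since $A\subset G=\cup\mathcal{G}$, every point of $U$ lies in $G$, and by the dichotomy built into the definition of exclusive point, every point of $G$ is either exclusive to some separator or lies in $I(\mathcal{G})$. The assumption therefore forces the inclusion $U\subseteq I(\mathcal{G})$.

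Now I would invoke Property 4: a non-empty open subset of a smooth hypersurface in ambient space is itself a smooth hypersurface in ambient space. Applied to the open set $U\subseteq A$, this yields that $U$ is a $(d-1)$-dimensional smoothly embedded submanifold of $\mathcal{S}\subset\mathbb{R}^d$. But then $U$ is a hypersurface contained in $I(\mathcal{G})$, which directly contradicts Lemma~\ref{lem:no-I}. The contradiction shows that every open neighborhood of $z$ in $A$ must contain at least one exclusive point, which is exactly the statement of the lemma.

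I do not expect any serious obstacle here: the whole argument is a one-line dimension count once the two ingredients (the dimensional bound on $I(\mathcal{G})$ and the inheritance of the hypersurface property by open subsets) are in place. The only point to double-check is the subspace-topology bookkeeping, namely that an open neighborhood of $z$ in $A$ (with its manifold topology) is indeed open as a subset of the ambient-space hypersurface $A$, so that Property 4 applies verbatim; this is immediate since $A$ carries the subspace topology inherited from $\mathbb{R}^d$.
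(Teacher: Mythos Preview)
Your argument is correct and matches the paper's own proof essentially line for line: both use Property~4 to recognize an open neighborhood in $A$ as a hypersurface in ambient space, then invoke Lemma~\ref{lem:no-I} to rule out its containment in $I(\mathcal{G})$. The only cosmetic difference is that you frame it as a proof by contradiction, whereas the paper states the implication directly.
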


\begin{proof}
An open neighborhood $\mathcal{B}_{z}^{A}$ of $z$ in $A$ is an
open subset of $A$, thus from Property 4, $\mathcal{B}_{z}^{A}$
is a hypersurface in ambient space. From Lemma \ref{lem:no-I} no
subset of $I(\mathcal{G})$, (the set of points of $G$ that belong to more than
one separator) can be a hypersurface. So $\mathcal{B}_{z}^{A}$ cannot
be a subset of $I(\mathcal{G})$, i.e. it must contain at least one
point exclusive to a separator of $\mathcal{G}$.
\end{proof}

\begin{lemma}
    \label{lem:exclusive-neighborhood}Let $A$ be a connected smooth hypersurface
included in an axis-alined grid $G=\cup\mathcal{G}$ with axis-separator
set $\mathcal{G}$. Let $z$ be a point of $A$ that is exclusive to
a separator $H\in\mathcal{G}$ (i.e. $z\ 
 \in H\setminus I(\mathcal{G})$: it belongs to no other separator
of $\mathcal{G}$), then there exists an open connected neighborhood
$\mathcal{B}_{z}^{A}$ of $z$ in $A$ that is exclusive to $H$. 
\end{lemma}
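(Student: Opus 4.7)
The plan is to exploit the fact that the \emph{other} separators form a closed set missing the point $z$, then intersect its complement with $A$ and shrink to a connected piece.

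First I would invoke the structure of axis-separators directly. Each separator $H' \in \mathcal{G}$ is of the form $\Gamma_{\mathcal{S}}(i,\tau) = \{u \in \mathcal{S} : u_i = \tau\}$, which is the preimage of the singleton $\{\tau\}$ under the continuous coordinate projection $u \mapsto u_i$. Hence every separator is relatively closed in $\mathcal{S}$. Since $\mathcal{G}$ is finite, the set $K = \bigcup_{H' \in \mathcal{G}, H' \ne H} H'$ is a finite union of relatively closed sets, so $K$ is relatively closed in $\mathcal{S}$. By the exclusivity hypothesis, $z \notin K$, so $U := \mathcal{S} \setminus K$ is open in $\mathcal{S}$ and contains $z$.

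Next I would descend to $A$. Because $A$ is embedded in $\mathcal{S}$, its intrinsic manifold topology coincides with the subspace topology inherited from $\mathcal{S}$, so $U \cap A$ is an open neighborhood of $z$ in $A$. Now I would verify exclusivity: any point $w \in U \cap A$ lies in $A \subset G = \bigcup \mathcal{G}$, hence in \emph{some} separator; but $w \in U$ means $w \notin H'$ for every $H' \ne H$, so $w \in H$ and $w \in \breve{H} = H \setminus I(\mathcal{G})$. Thus $U \cap A$ is already exclusive to $H$; only connectedness remains.

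Finally, for connectedness I would use that $A$, being a smooth hypersurface, is locally Euclidean and therefore locally path-connected. Taking a chart of $A$ around $z$ onto an open subset of $\mathbb{R}^{d-1}$, I can pull back an open ball small enough to lie entirely inside $U \cap A$. This yields an open connected neighborhood $\mathcal{B}_z^A$ of $z$ in $A$ contained in the exclusive region $U \cap A$, which is what the lemma requires.

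I do not anticipate a serious obstacle: the argument is essentially topological bookkeeping, with the only subtlety being the justification that axis-separators are relatively closed in $\mathcal{S}$ and that $A$'s embedded topology matches its intrinsic topology so that ``open in $\mathcal{S}$'' restricts cleanly to ``open in $A$''. Both points are immediate from the definitions already given in the paper, so the lemma follows without further machinery.
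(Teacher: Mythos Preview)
Your proof is correct and follows essentially the same approach as the paper: both arguments observe that the union of the \emph{other} separators is a closed set missing $z$, take an open neighborhood of $z$ in ambient space disjoint from it, intersect with $A$, and shrink to a connected piece. The only difference is cosmetic---the paper phrases the first step metrically (choosing an $\epsilon$-ball with $\epsilon$ smaller than the distance from $z$ to $G\setminus H$), whereas you phrase it topologically (separators are relatively closed as preimages of singletons under coordinate projections).
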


\begin{proof}
We reason using the usual Euclidean distance in $\mathbb{R}^{d}$.
Consider an open $d$-ball $\mathcal{B}_{z}^{d}$ in $\mathbb{R}^{d}$
centered on $z$ and whose radius $\epsilon$ is chosen to be less than
the smallest distance of $z$ to any other separator, i.e. such that
$0<\epsilon<\inf_{z'\in(G\setminus H)}\|z-z'\|$. Since $z$ is exclusive
to separator $H$ and the number of separators is finite, this distance will be greater than
0. Then all points of $G$ within a distance less than $\epsilon$
of $z$ will necessarily belong exclusively to $H$, i.e. $\mathcal{B}_{z}^{d}\cap G\subset \breve{H}$, where $\breve{H}=H\setminus I(\mathcal{G})$.
Now we can choose a sufficiently small connected open neighborhood
$\mathcal{B}_{z}^{A}$ of $z$ in $A$ so that the distance in ambient
space between $z$ and any other point of $\mathcal{B}_{z}^{A}$ is
less than $\epsilon$. Thus $\mathcal{B}_{z}^{A}\subset\mathcal{B}_{z}^{d}$.
Since we also have $\mathcal{B}_{z}^{A}\subset A\subset G$ this implies that $\mathcal{B}_{z}^{A}\subset\mathcal{B}_{z}^{d}\cap G$
and consequently that $\mathcal{B}_{z}^{A}\subset \breve{H}$. We have thus
shown that there exists an open connected neighborhood of $z$ in $A$
that is exclusive to $H$. \textbf{}
\end{proof}

\begin{lemma}
    \label{lem:boundary-z}Let $A$ be a connected smooth hypersurface included
in an axis-alined grid $G=\cup\mathcal{G}$ with axis-separator set
$\mathcal{G}$. Then for any point $z\in A$ there exists a non-empty
open subset $B$ whose boundary contains $z$ and such that $B$ is
a non-empty open subset exclusive to one of the separators. 
\end{lemma}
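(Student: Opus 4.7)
} The plan is to split the argument on whether $z$ is already exclusive to some separator or lies in the intersection set $I(\mathcal{G})$; in both cases the desired open subset $B \subset A$ is built directly from Lemmas~\ref{lem:contains-exclusive-point} and \ref{lem:exclusive-neighborhood}, so no new machinery is needed.

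First, in the case where $z$ is exclusive to some $H \in \mathcal{G}$, I would apply Lemma~\ref{lem:exclusive-neighborhood} to produce an open connected $A$-neighborhood $\mathcal{B}_z^A$ of $z$ that is exclusive to $H$, and then simply set $B := \mathcal{B}_z^A \setminus \{z\}$. Since $\{z\}$ is closed in the Hausdorff manifold $A$, $B$ is open in $A$; since $A$ has dimension $d-1 \geq 1$, the point $z$ is not isolated in $\mathcal{B}_z^A$, so $B$ is non-empty; $B$ is exclusive to $H$ as a subset of $\mathcal{B}_z^A$; and $z$ is a limit point of $B$ that does not lie in $B$, hence $z \in \bar{B} \setminus B = \partial B$.

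Second, in the case where $z \in I(\mathcal{G})$, I would instead define $B$ globally as $B := A \cap \breve{H}$ for a carefully chosen separator $H$. To choose $H$, invoke Lemma~\ref{lem:contains-exclusive-point} on a shrinking sequence of $A$-neighborhoods of $z$ to extract a sequence $z_n \to z$ in $A$ with each $z_n$ exclusive to some $H_{k(n)} \in \mathcal{G}$; finiteness of $\mathcal{G}$ together with the pigeonhole principle yields a subsequence along which $H_{k(n)}$ is a single separator $H$. Then $B = A \cap \breve{H}$ is non-empty (it contains the $z_n$) and exclusive to $H$ by construction, and it is open in $A$ because for any $w \in B$ Lemma~\ref{lem:exclusive-neighborhood} produces an open $A$-neighborhood of $w$ sitting inside $\breve{H}$, hence inside $B$. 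Finally, $z_n \to z$ forces $z \in \bar{B}$, while $z \in I(\mathcal{G})$ prevents $z$ from being exclusive to any separator, so $z \notin \breve{H}$, hence $z \notin B$, and therefore $z \in \partial B$.

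The construction in each case is essentially forced by the two preceding lemmas, so I do not anticipate a real obstacle. The only step that deserves explicit justification is the openness of $B$ in the second case, which is immediate from Lemma~\ref{lem:exclusive-neighborhood}; the minor bookkeeping point to remember elsewhere is simply that a connected smooth hypersurface $A$ has no isolated points, so $\mathcal{B}_z^A \setminus \{z\}$ is nonempty in the first case.
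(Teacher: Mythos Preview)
Your proposal is correct and follows essentially the same two-case strategy as the paper's proof, leaning on Lemmas~\ref{lem:contains-exclusive-point} and~\ref{lem:exclusive-neighborhood} in exactly the same way. The only differences are cosmetic: in the exclusive case the paper intersects $\mathcal{B}_z^A$ with a small ball centered at a nearby point rather than deleting $\{z\}$, and in the intersection case the paper phrases the selection of $H$ via $\min_i d^A(z,\breve{A}_i)=0$ instead of your equivalent sequential pigeonhole argument, arriving at the same set $B=A\cap\breve{H}$.
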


\begin{proof}
There are two cases to consider for $z$: either $z$ is an exclusive
point of a separator of the grid, or it is an intersection point of
separators (belonging to $I(\mathcal{G}))$.

\underline{First case:} $z$ is a point exclusive to a separator $H\subset\mathcal{G}$.\\
Then, by Lemma \ref{lem:exclusive-neighborhood}, we know that there
exists an open connected neighborhood $\mathcal{B}_{z}^{A}$ of $z$
in $A$ that is exclusive to $H$. We can then easily pick an open
subset $B$ of $\mathcal{B}_{z}^{A}$ whose boundary contains $z$
(For instance, pick a close neighbor $z_{1}$ of $z$ in $\mathcal{B}_{z}^{A}$,
and construct $B$ as the intersection of $\mathcal{B}_{z}^{A}$ with
an open ball centered on $z_{1}$ and of radius $\|z_{1}-z\|$). $B$
is an open subset exclusive to $H$, the separator that $z$ belongs
to.

\underline{Second case:} $z$ is not exclusive to any separator of
the grid.\\
Let $\mathcal{G}=\{H_{1},\ldots,H_{k}\}$ be the finite set of separators
of grid $G=\cup\mathcal{G}$. Let $\breve{H}_{i}=H_{i}\setminus I(\mathcal{G})$
be the corresponding subset of exclusive points to each separator $H_{i}$,
and let $\breve{A}_{i}=\breve{H}_{i}\cap A$, for each $i\in\{1,\ldots,k\}$.
So $\breve{A}_{i}$, if it is not empty, will contain only points
exclusive to $H_{i}$. From Lemma~\ref{lem:exclusive-neighborhood} we deduce that every point
of $\breve{A}_{i}$ has an open neighborhood in $A$ exclusive to
$H_{i}$: this open neighborhood is thus included in $A\cap\breve{H}_{i}$
and is thus a subset of $\breve{A}_{i}$. We have thus shown that
every point of $\breve{A}_{i}$ has an open neighborhood in $A$ that
is included in $\breve{A}_{i}$. From this we conclude that each $\breve{A}_{i}$
is an open subset (possibly empty) of $A$. \\
We know that $z$ belongs to none of the $\breve{A}_{i}$, since it
is not exclusive to any separator. Now we will show that $z$ belongs
to the \emph{boundary} of at least one of the $\breve{A}_{i}$. We will
 reason using the metric $d^{A}$ induced on embedded submanifold
$A\subset\mathbb{R}^{d}$ by the usual Euclidean metric in ambient
space $\mathbb{R}^{d}$. Let $\epsilon=\min_{i\in\{1,\ldots,k\}}d^{A}(z,\breve{A}_{i})$. We can use a min since it is over a finite number $k$ of separators. Note that $d^{A}(z,\breve{A}_{i})=\inf_{z'\in\breve{A}_{i}}d^{A}(z,z')$ will be $+\infty$ if $\breve{A}_{i}$ is empty, by the definition of the infimum. 
If $\epsilon$ was strictly greater than 0, then this would mean that
no point of $A$ exclusive to any separator would be at a distance
strictly less than $\epsilon$ from $z$ (since any point of $A$
exclusive to a separator belongs to one of the $\breve{A}_{i}$). Thus the open ball $\mathcal{B}^{A}(z,\epsilon)=\{z'\in A,d^{A}(z,z')<\epsilon\}$
would not contain any point exclusive to any separator. But this would
contradict Lemma~\ref{lem:contains-exclusive-point}. So necessarily $\epsilon=0$. This implies that
there is at least one of the $\breve{A}_{i}$ whose distance to $z$
is 0, i.e. there exists a $k^{*}\in\{0,\ldots,k\}$ such that $d^{A}(z,\breve{A}_{k^{*}})=0$.
Since $z\notin\breve{A}_{k^{*}}$ we conclude that $z$ belongs to
the \emph{boundary} of this $\breve{A}_{k^{*}}$. Moreover this $\breve{A}_{k^{*}}$
is non-empty (otherwise that distance would be $+\infty$). It is thus an open-subset of $A$, exclusive to separator $H_{k}$.
We have thus established that there exists a non-empty open subset of $A$ exclusive to one of the separators, and whose boundary contains $z$.

\end{proof}

\begin{lemma}
    \emph{\label{lem:path-included-in-H}}Let $A$ be a connected smooth
hypersurface included in an axis-alined grid $G=\cup\mathcal{G}$
with axis-separator set $\mathcal{G}$. Let $\gamma$ be a continuous
path, included in $A$, that starts at a point $z_{1}$, where $z_{1}$
is exclusive to a separator $H\in\mathcal{G}$. Then $\gamma$ will
necessarily be included entirely in $H$. 
\end{lemma}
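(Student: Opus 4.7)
The plan is to analyze the tangent space of $A$ along $\gamma$ and show that it is forced to remain constantly equal to $\mathcal{T}_H$, which then pins the path inside $H$. First I would define the map $\psi:[0,1]\to \mathrm{Graff}_{d-1}(\mathbb{R}^d)$ by $\psi(t)=T_{\gamma(t)}A$; this is continuous as the composition of $\gamma$ with the tangent-space map $T^A$, which is continuous by Property~3. At $t=0$, $\gamma(0)=z_1$ is exclusive to $H$, so Lemma~\ref{lem:exclusive-neighborhood} supplies an open neighborhood of $z_1$ in $A$ that is contained in $H$, and Property~5 yields $\psi(0)=\mathcal{T}_H$.

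Next I would show that $\psi$ takes values in the \emph{finite} set $\mathcal{P}=\{\mathcal{T}_{H_j}:H_j\in\mathcal{G}\}$ of tangent hyperplanes of the separators in $\mathcal{G}$. At any point of $A$ exclusive to some separator $H_j$, the same argument as above gives $\psi=\mathcal{T}_{H_j}\in\mathcal{P}$. At a point $z\in A\cap I(\mathcal{G})$, suppose for contradiction that $\psi(z)\notin\mathcal{P}$; since $\mathrm{Graff}_{d-1}(\mathbb{R}^d)$ is Hausdorff and $\mathcal{P}$ is finite, there is an open neighborhood $N$ of $\psi(z)$ disjoint from $\mathcal{P}$. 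By continuity of $T^A$ at $z$, some open neighborhood $U$ of $z$ in $A$ satisfies $T^A(U)\subset N$. But Lemma~\ref{lem:contains-exclusive-point} forces $U$ to contain a point exclusive to some separator, where $T^A$ lies in $\mathcal{P}$, contradicting $T^A(U)\cap\mathcal{P}=\emptyset$. Hence $\psi(z)\in\mathcal{P}$ for every $z\in A$.

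Now $\psi:[0,1]\to\mathcal{P}$ is a continuous map from a connected space into a finite subset of the Hausdorff space $\mathrm{Graff}_{d-1}(\mathbb{R}^d)$, which is discrete in the subspace topology, hence $\psi$ is constant. Combined with $\psi(0)=\mathcal{T}_H$, this gives $\psi(t)=\mathcal{T}_H$ for every $t\in[0,1]$. To conclude $\gamma(t)\in H$, I would apply Lemma~\ref{lem:boundary-z} at $\gamma(t)$: there is a non-empty open subset $B\subset A$ exclusive to a single separator $H_j$ whose boundary contains $\gamma(t)$. On $B$, the tangent space equals $\mathcal{T}_{H_j}$, and by continuity of $T^A$ the same value is attained at the boundary point $\gamma(t)$, so $\mathcal{T}_{H_j}=\psi(t)=\mathcal{T}_H$. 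Since distinct axis-separators have distinct tangent hyperplanes, $H_j=H$, and thus $\gamma(t)\in\overline{B}\subset\overline{H}=H$ (separators, being intersections of $\mathcal{S}$ with coordinate hyperplanes, are closed in $\mathcal{S}$).

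The main obstacle is handling the $(d-2)$-dimensional ``skeleton'' $I(\mathcal{G})$ where separators meet: a priori the tangent of $A$ at an intersection point could be any $(d-1)$-dimensional affine subspace. The argument tames this via the interaction of Lemma~\ref{lem:contains-exclusive-point} with the \emph{finiteness} of $\mathcal{G}$, which turns $\mathcal{P}$ into a discrete subspace of $\mathrm{Graff}_{d-1}(\mathbb{R}^d)$ and converts what looks like a local geometric problem at intersection points into a clean connectedness argument on $[0,1]$.
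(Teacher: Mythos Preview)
Your proof is correct and follows essentially the same approach as the paper: both arguments track the tangent space $T_{\gamma(t)}A$, show it lands in the finite set $\{\mathcal{T}_{H_j}\}_{H_j\in\mathcal{G}}$ at every point of $A$ (the paper does this via Lemma~\ref{lem:boundary-z}, you via Lemma~\ref{lem:contains-exclusive-point} directly), and then use continuity plus connectedness of $[0,1]$ to conclude it is constant and equal to $\mathcal{T}_H$, whence $\gamma\subset H$. Your explicit phrasing of the constancy step as ``continuous map from a connected space into a finite discrete subspace of $\mathrm{Graff}_{d-1}(\mathbb{R}^d)$'' is a cleaner formalization of what the paper states more informally as the tangent space ``cannot change abruptly'' along the path.
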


\begin{proof}
Consider path $\gamma:[0,1]\rightarrow A$, where
$\gamma(0)=z_{1}$ is exclusive to $H$. From Lemma \ref{lem:boundary-z},
for each point $\gamma(t)\in A$, there exists an open subset $B_{t}$
of $A$ whose boundary contains $\gamma(t)$ and such that $B_{t}$
is an open subset exclusive to one of the separators. 
Let us call this separator $H_{t}$ (Note that there may
be multiple possible choices for $B_{t}$ and $H_{t}$). Consider
any point $z\in B_{t}$. Since $B_{t}$ is a non-empty open subset
of smooth hypersurface $A$, by Property 4 it is a hypersurface and
by Property 5 $B_{t}$ and $A$ will have the same tangent space, so that $T_{z}B_{t}=T_{z}A$.
Since $B_{t}$ is also a subset of smooth hypersurface $H_{t}$, we
have by Property 5 that $T_{z}B_{t}=T_{z}H_{t}$. Thus $T_{z}A=T_{z}B_{t}=T_{z}H_{t}$.
Now the tangent space to any axis-separator $H'$ is the constant
$\mathcal{T}_{H'}$. We can thus write, for any $z\in B_{t}$ , $T_{z}A=T_{z}B_{t}=T_{z}H_{t}=\mathcal{T}_{H_{t}}$.
Since $A$ is a connected smooth hypersurface, it has a continuous
and well defined tangent space at every point (see Property 3 and how we defined tangent space). Thus, if the tangent space is constant on an open subset $B_{t}\subset A$, it will have that same constant value at its boundary. So the tangent space to $A$ at point $\gamma(t)$, which belongs to the boundary of $B_{t}$,
will also be $T_{\gamma(t)}A=\mathcal{T}_{H_{t}}$. For the same reason of the continuity of the tangent space of a path-connected smooth hypersurface $A$, we cannot have, along the curve $\gamma(t)$, an abrupt change in the tangent space $T_{\gamma(t)}A$, consequently $\mathcal{T}_{H_{t}}$ cannot change abruptly along the path. The only way for it not to change abruptly is that $H_{t}$ stays constant along the path: $H_{t}=\mathrm{constant}\;\forall t\in[0,1]$. In other words, for any point $\gamma(t)$ along the path there must exist an open subset $B_{t}$ of $A$ that is included in and exclusive to the same constant separator along the path. Now, if $z_{1}=\gamma(0)$ is exclusive to a separator $H$, then $H_{0}=H$ and we must thus have $H_{t}=H$, $\forall t\in[0,1]$. We have thus shown that if the path starts at a point $z_{1}=\gamma(0)$ which is exclusive to a separator $H$, then all points of the path necessarily belong to $H$ (though not necessarily exclusively to $H$). Thus, the path is entirely included in $H$.
\end{proof}


\begin{lemma}
    \emph{\label{lem:subset-of-separator}}A path-connected smooth hypersurface
$A$ included in an axis-alined grid $G=\cup\mathcal{G}$ is necessarily
a subset of one separator of $\mathcal{G}$.
\end{lemma}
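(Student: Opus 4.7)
The plan is to combine path-connectedness of $A$ with the previously established Lemma~\ref{lem:path-included-in-H}, which says that a continuous path in $A$ starting from a point exclusive to a separator $H$ must lie entirely in $H$. So the strategy reduces to: (i) exhibit at least one starting point of $A$ that is exclusive to some separator $H$, then (ii) use path-connectedness to propagate this membership to every other point of $A$.

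First, I would invoke Lemma~\ref{lem:contains-exclusive-point}: pick any point $z_0 \in A$ and take an open neighborhood of $z_0$ in $A$; this neighborhood must contain at least one point $z_1$ that is exclusive to some separator $H \in \mathcal{G}$ (i.e. $z_1 \in \breve{H} \cap A$). In particular, $A$ is nonempty because it is a hypersurface, so such a $z_1$ and $H$ exist. This gives the ``anchor'' separator $H$ to which we will tie all of $A$.

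Next, for any other point $z_2 \in A$, use the assumption that $A$ is path-connected to pick a continuous path $\gamma : [0,1] \to A$ with $\gamma(0) = z_1$ and $\gamma(1) = z_2$. Since $\gamma(0) = z_1$ is exclusive to $H$, Lemma~\ref{lem:path-included-in-H} immediately implies $\gamma([0,1]) \subseteq H$, and in particular $z_2 \in H$. As $z_2$ was arbitrary, we conclude $A \subseteq H$, proving the lemma.

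I do not expect a real obstacle here, since all the heavy lifting has already been done in the earlier lemmas (the tangent-space continuity argument in Lemma~\ref{lem:path-included-in-H} and the local-exclusivity structure of the grid in Lemmas~\ref{lem:no-I}--\ref{lem:boundary-z}). The only care needed is to make sure the path actually starts at a point exclusive to a single separator rather than at an intersection point; this is precisely why one must first invoke Lemma~\ref{lem:contains-exclusive-point} to locate $z_1 \in \breve{H}\cap A$, and then reparametrize (or simply treat $z_1$ as the basepoint) before applying Lemma~\ref{lem:path-included-in-H}. After this, the conclusion is essentially one line.
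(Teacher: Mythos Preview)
Your proposal is correct and follows essentially the same approach as the paper: first invoke Lemma~\ref{lem:contains-exclusive-point} to obtain a point of $A$ exclusive to some separator $H$, then use path-connectedness together with Lemma~\ref{lem:path-included-in-H} to conclude that every other point of $A$ lies in $H$. The paper's proof is the same two-line argument, only slightly more terse (it takes the exclusive point directly as the basepoint rather than first picking a neighborhood).
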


\begin{proof}
Let $z_{0}$ be a point of $A$ that is exclusive to a separator $H\in\mathcal{G}$.
We know from Lemma \ref{lem:contains-exclusive-point} that such a
point exists. Since $A$ is path-connected, there exists in $A$ a
continuous path connecting $z_{0}$ to any point $z\in A$. Thus, Lemma
\ref{lem:path-included-in-H} leads to conclude that $\forall z\in A,z\in H$.
Thus $A\subset H$.
\end{proof}


\begin{lemma}
\label{lem:h-yields-separator}
\emph{
Let $G=\cup\mathcal{G}$ be an axis-aligned
grid in $\mathcal{S}$. Let $h:\mathcal{S}\rightarrow\mathcal{S}'$ be a diffeomorphism. Let $G'=\cup\mathcal{G}'$ be an axis-aligned grid in $\mathcal{S}'$. If $h(G)=G'$, the image of a separator $H_{1}\in\mathcal{G}$ by the diffeomorphism $h$ will be a separator $H'\in\mathcal{G}'$, i.e. $H_{1}\in\mathcal{G}\implies h(H_{1})\in\mathcal{G}'$.}
\end{lemma}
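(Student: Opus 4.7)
The plan is to apply Lemma~\ref{lem:subset-of-separator} in both directions (for $h$ and for $h^{-1}$) and then use a simple set-theoretic argument about axis-separators to upgrade the set inclusions to an equality. First, I would observe that $H_1 \in \mathcal{G}$ is a connected smooth hypersurface in $\mathcal{S}$ by the definition of an axis-separator, so by Property 1 and Property 2 its image $h(H_1)$ is a path-connected smooth hypersurface in $\mathcal{S}'$. Since $H_1 \subset G$ and $h(G) = G'$, we have $h(H_1) \subset G'$. Lemma~\ref{lem:subset-of-separator} therefore yields an $H' \in \mathcal{G}'$ with $h(H_1) \subset H'$.

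Next I would run the same argument for the inverse diffeomorphism $h^{-1}$ applied to $H'$: $H'$ is a connected smooth hypersurface in $\mathcal{S}'$, its preimage $h^{-1}(H')$ is a path-connected smooth hypersurface in $\mathcal{S}$, and it lies in $h^{-1}(G') = G$. By Lemma~\ref{lem:subset-of-separator} again, there exists $H_2 \in \mathcal{G}$ such that $h^{-1}(H') \subset H_2$. Combining with the first step, we have the chain $H_1 \subset h^{-1}(h(H_1)) \subset h^{-1}(H') \subset H_2$.

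The key small observation is that for two axis-separators $H_1, H_2 \in \mathcal{G}$, $H_1 \subset H_2$ forces $H_1 = H_2$: writing $H_i = \Gamma_{\mathcal{S}}(a_i, \tau_i)$, if the axes $a_1, a_2$ differ then their intersection is at most $(d-2)$-dimensional (so cannot contain the $(d-1)$-dimensional $H_1$), while if they coincide and $\tau_1 \ne \tau_2$, the two separators are disjoint. Hence $H_2 = H_1$, and therefore $h^{-1}(H') \subset H_1$, i.e.\ $H' \subset h(H_1)$. Together with $h(H_1) \subset H'$ from the first paragraph, this gives $h(H_1) = H' \in \mathcal{G}'$, as required.

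The only step I would expect to require a bit of care is the very last one: one might worry that Lemma~\ref{lem:subset-of-separator} only tells us $h(H_1)$ is contained in a separator and not that it is \emph{all} of it; the symmetric application of the lemma to $h^{-1}$ is precisely what closes that gap, but writing it cleanly requires naming both $H'$ and $H_2$ and carefully tracking the inclusions $H_1 \subset h^{-1}(H') \subset H_2$ before invoking the disjointness/dimension argument on axis-separators.
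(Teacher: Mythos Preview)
Your proof is correct and follows essentially the same approach as the paper: apply Lemma~\ref{lem:subset-of-separator} to $h$ to get $h(H_1)\subset H'$, then to $h^{-1}$ to get $h^{-1}(H')\subset H_2$, and use that distinct axis-separators cannot be nested to force $H_2=H_1$ and hence $h(H_1)=H'$. The only cosmetic difference is that the paper phrases the last step as a contradiction (introducing $B'=H'\setminus h(H_1)$ and showing $B'\subset h(H_1)$), whereas you close the gap directly via the inclusion chain $H_1\subset h^{-1}(H')\subset H_2=H_1$.
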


\begin{proof}
An axis-separator $H_{1}\subset G$ is a path-connected smooth hypersurface. From Property 1 and Property 2, its image by the diffeomorphism $h$ will be a path-connected smooth hypersurface $h(H_{1})\subset G'$.
So $h(H_{1})$ is a path-connected smooth hypersurface included in axis-aligned grid $G'$. Consequently, Lemma \ref{lem:subset-of-separator}
guarantees that we have $h(H_{1})\subset H'$ for some $H'\in\mathcal{G}'$.
We will now prove that $h(H_{1})=H'$. Suppose by contradiction that $h(H_{1})\subsetneq H'$, and let $B' = H' - h(H_{1})\ne\emptyset$.
Similarly, if we apply the reverse diffeomorphism $h^{-1}$, we will have $h^{-1}(H')\subset H_{2}$ for some $H_{2}\in\mathcal{G}$. Consequently,
the two disjoint sets composing $H'=B'\cup h(H_{1})$ will both map back to subsets of $H_{2}$, i.e. $h^{-1}(B')\subset H_{2}$ and $h^{-1}(h(H_{1}))\subset H_{2}$.
The latter can be rewritten as $H_{1}\subset H_{2}$, which implies $H_{2}=H_{1}$ since no two distinct separators of $\mathcal{G}$ are included in one another. So we have $h^{-1}(B')\subset H_{1}$.
Thus $h(h^{-1}(B'))\subset h(H_{1})$, hence $B'\subset h(H_{1})$.
We had defined $B'$ as $B'=H'-h(H_{1})\ne\emptyset$ but a non-empty $B'$ cannot at the same time correspond to a set from which we removed
$h(H_{1})$ \emph{and} be included in $h(H_{1})$. We have a contradiction, so we cannot have $h(H_{1})\subsetneq H'$, therefore $h(H_{1})=H'$.
\end{proof}

\begin{proposition}
    \emph{\label{prop:Step-1} } The diffeomorphism $h$ maps separators in $\mathcal{G}$ one-to-one to separators in $\mathcal{G}'$, i.e. $h(G)=G'\implies h(\mathcal{G})=\mathcal{G}'$.
\end{proposition}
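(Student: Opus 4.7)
The plan is to leverage Lemma~\ref{lem:h-yields-separator} in both directions. First I would apply the lemma directly: the premise $h(G)=G'$ together with the lemma tells us that for any $H\in\mathcal{G}$, the image $h(H)$ is not merely contained in some separator of $\mathcal{G}'$ but equals one. This defines a well-defined set-level map $\Phi:\mathcal{G}\to\mathcal{G}'$, $H\mapsto h(H)$, and gives the inclusion $h(\mathcal{G})\subseteq\mathcal{G}'$.

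Next I would run the argument backwards. Since $h$ is a diffeomorphism between open connected subsets of $\mathbb{R}^d$, so is $h^{-1}:\mathcal{S}'\to\mathcal{S}$; and the hypothesis $h(G)=G'$ is equivalent to $h^{-1}(G')=G$. The hypotheses of Lemma~\ref{lem:h-yields-separator} are completely symmetric in the two grids and in the chosen diffeomorphism, so re-applying it with $(h,\mathcal{G},\mathcal{G}')$ replaced by $(h^{-1},\mathcal{G}',\mathcal{G})$ yields a reverse map $\Psi:\mathcal{G}'\to\mathcal{G}$, $H'\mapsto h^{-1}(H')$, and the reverse inclusion $\mathcal{G}'\subseteq h(\mathcal{G})$.

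Combining the two inclusions gives $h(\mathcal{G})=\mathcal{G}'$. To upgrade this equality of sets of separators to a genuine one-to-one correspondence, I would observe that $\Phi$ and $\Psi$ are mutually inverse as set-level maps: $\Psi\circ\Phi(H)=h^{-1}(h(H))=H$ and $\Phi\circ\Psi(H')=h(h^{-1}(H'))=H'$, since $h$ is a bijection at the level of points and therefore also at the level of any collection of subsets. Hence $\Phi$ is a bijection between $\mathcal{G}$ and $\mathcal{G}'$, which is exactly the claimed one-to-one mapping.

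The hard work has already been absorbed into Lemma~\ref{lem:h-yields-separator}, where the genuine difficulty lay: the tangent-space continuity argument that forces a path-connected smooth hypersurface inside an axis-aligned grid to live inside a single separator, together with the contradiction that rules out strict inclusion $h(H_1)\subset H'$ with $h(H_1)\neq H'$. Given that lemma, Proposition~\ref{prop:Step-1} is essentially a symmetry-and-bookkeeping corollary, so I do not expect any further delicate step; the only thing to be careful about is to verify that the symmetric re-application of the lemma is legitimate, i.e.\ that $h^{-1}$ satisfies exactly the same structural assumptions (smooth diffeomorphism between open connected subsets of $\mathbb{R}^d$) as $h$, which is immediate.
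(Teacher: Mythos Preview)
Your proposal is correct and follows essentially the same approach as the paper: apply Lemma~\ref{lem:h-yields-separator} in the forward direction to get $h(\mathcal{G})\subseteq\mathcal{G}'$, then re-apply it with $h^{-1}$ by symmetry to get the reverse inclusion, concluding that $h$ induces a bijection between $\mathcal{G}$ and $\mathcal{G}'$. You are simply more explicit than the paper about verifying the mutual-inverse property and the legitimacy of the symmetric re-application, which is fine.
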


\begin{proof}
We have shown in Lemma \ref{lem:h-yields-separator} that $H\in\mathcal{G}\implies h(H)\in\mathcal{G}'$.
It suffices to apply this result in the other direction using $h^{-1}$
to establish the converse. We thus have a bijection: the one-to-one
mapping we needed to prove. Which we can write succinctly $h(\mathcal{G})=\mathcal{G}'$.
\end{proof}

\subsection{Step 2 -- Recovery of partition into sets of parallel separators }

We have established in step 1 that we recover the set of all separators $\mathcal{G}'=h(\mathcal{G})$.
Our next step is to recover its partition into subsets of parallel separators
(each subset associated to an axis): $\mathcal{G}'^{(j)}=h(\mathcal{G}^{(i)})$
(with permutation $j=\sigma(i)$).

\subsubsection*{Proof for step 2}

Consider $d$ separators forming a backbone of $\mathcal{G}$, recall
that a backbone is constituted of $d$ distinct axis-separators that
intersect in a single point, i.e.
$\text{\ensuremath{\mathcal{H}_{1}^{*}}}\in\mathcal{G}^{(1)},\ldots,\text{\ensuremath{\mathcal{H}_{d}^{*}}}\in\mathcal{G}^{(d)}$
, $\bigcap_{i=1}^{d}\text{\ensuremath{\mathcal{H}_{i}^{*}}}=\{z^{*}\}$.\\
We have that $\forall j\ne i,\,\mathcal{H}_{i}^{*}\ne\mathcal{H}_{j}^{*}\implies\forall j\ne i,\,h(\mathcal{H}_{i}^{*})\ne h(\mathcal{H}_{j}^{*})$.\\
We also have that $\bigcap_{i=1}^{d}\text{\ensuremath{\mathcal{H}_{i}^{*}}}=\{z^{*}\}\implies\bigcap_{i=1}^{d}h(\text{\ensuremath{\mathcal{H}_{i}^{*}}})=\{h(z^{*})\}$ (as $h$ is a bijection).\\
Moreover, we know from step 1 that $\mathcal{H}_{i}^{*}\in\mathcal{G}\implies h(\mathcal{H}_{i}^{*})\in\mathcal{G}'$.
In short, the $h(\mathcal{H}_{1}^{*}),\ldots,h(\mathcal{H}_{d}^{*})$
are $d$ distinct separators, each an element of $\mathcal{G}'$, that
intersect in a single point $h(z^{*})$. The only sets of $d$ \emph{distinct}
separators in $\mathcal{G}'$ that pass through a same point are $d$
separators defined along each of the $d$ different axes of $\mathcal{Z}'=\mathbb{R}^{d}$.
Thus there exists a permutation $\sigma$ such that for such backbone
separators, $\text{\ensuremath{\mathcal{H}_{i}^{*}}}\in\mathcal{G}^{(i)}\implies h(\text{\ensuremath{\mathcal{H}_{i}^{*}})}\in\mathcal{G}^{'(\sigma(i))}$.

Now consider any other separator $H\in\mathcal{G}^{(i)}$. From the
definition of the backbone, we know that $\,H\cap\mathcal{H}_{j}^{*}\ne\emptyset$,
$\forall j\ne i$.\\
This implies that $h(H)\cap h(\mathcal{H}_{j}^{*})\ne\emptyset$,
$\forall j\ne i$. The fact that $h(H)$ intersects a separator $h(\text{\ensuremath{\mathcal{H}_{j}^{*}})}\in\mathcal{G}^{'(\sigma(j))}$
implies that it does not belong to parallel-separator-set $\mathcal{G}^{'(\sigma(j))}$.
Thus $\forall j\ne i,\,h(H)\notin\mathcal{G}^{'(\sigma(j))}$. So
there is just one parallel separator set left which $h(H)$ can belong
to: $h(H)\in\mathcal{G}^{'(\sigma(i))}$.\\
In short, we have proved that $H\in\mathcal{G}^{(i)}\implies h(H)\in\mathcal{G}^{'(\sigma(i))}$.

Since distinct separators map to distinct separators, and each has
to belong to exactly one of the $\mathcal{G}^{'(k)}$, this mapping
is a bijection and we can write 
$H\in\mathcal{G}^{(i)}\Longleftrightarrow h(H)\in\mathcal{G}^{'(\sigma(i))}$,
or in short $h(\mathcal{G}^{(i)})=\mathcal{G}'^{(j)}$ with $j=\sigma(i)$.

\subsection{Step 3 -- Recovery of coordinate ordering}

The last step consists in showing that the ordering of the separators in a parallel-separators-set is preserved (up to possible order reversal). 

\subsubsection*{Proof for step 3 -- Overview}

The gist of the proof is as follows (a  complete detailed proof is provided in Appendix~\ref{sec:detailed-proof-step-3}): 

We first establish that $h$ preserves separators and halves. This follows directly from the preservation of inclusion, connectedness and set operations under diffeomorphisms. 
Then, we use the fact that inclusion defines a strict order relationship between positive halves
associated to a coordination, and similarly between negative halves.
As inclusion is preserved by a diffeomorphism, this order relationship is preserved. We can use this to show that the order implied by $\mathbf{A}_{i}$
is either conserved, as is, in $\mathbf{B}{}_{j}$ (negative halves of coordination $\mathbf{A}$ being mapped to negative halves of $\mathbf{B}$)
or simply reversed (negative halves of $\mathbf{A}$ are being mapped to positive halves of $\mathbf{B}$). This directly yields the result of the main
Theorem (\ref{thm:gridStructure}).

\subsubsection{Detailed proof for Step 3}
\label{sec:detailed-proof-step-3}

\subsubsection*{Preliminary lemma}

\begin{lemma}
    \label{lem:Preservation-of-separator-and-halves}
    Preservation of separator and halves under a diffeomorphism: If $h$ is a diffeomorphism and $\mathcal{C}$ is a separator of $\mathcal{S}$ that splits it in two halves $\mathcal{C}^{+}$ and $\mathcal{C}^{-}$, then $h(\mathcal{C})$ is a separator of $h(\mathcal{S})$ that splits it in two halves $h(\mathcal{C}^{+})$ and $h(\mathcal{C}^{-})$\\
Formally:
\begin{align*}
 & \mathcal{C} \subset \mathcal{S}, \ \mathcal{C}, \ \mathrm{connected}, \ \mathrm{split}(\mathcal{S}, \ \mathcal{C}) = \{ \mathcal{C}^{+}, \mathcal{C}^{-} \} \\
\Longleftrightarrow &\  h(\mathcal{C}) \subset h(\mathcal{S}), \ h(\mathcal{C}), \ \mathrm{connected}, \ \mathrm{split}(h(\mathcal{S}), \ h(\mathcal{C})) = \{ h(\mathcal{C}^{+}), h(\mathcal{C}^{-}) \}
\end{align*}
\end{lemma}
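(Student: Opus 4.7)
The plan is to exploit the fact that a diffeomorphism $h$ is in particular a homeomorphism that is bijective on the ambient sets in play, so it preserves both set-theoretic operations and topological invariants (connectedness, connected components). The lemma is essentially a translation of this into the language of separators. I would first establish the forward direction and then note that the converse follows by applying the same argument to $h^{-1}$, which is also a diffeomorphism.

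First I would show the easy structural parts: $h(\mathcal{C}) \subset h(\mathcal{S})$ since $\mathcal{C} \subset \mathcal{S}$, and $h(\mathcal{C})$ is connected because the continuous image of a connected set under any continuous map (in particular $h$) is connected. Next, I would invoke that $h$ is a bijection between $\mathcal{S}$ and $h(\mathcal{S})$ to rewrite
\[
  h(\mathcal{S} \setminus \mathcal{C}) \;=\; h(\mathcal{S}) \setminus h(\mathcal{C}),
\]
and use $\mathrm{split}(\mathcal{S},\mathcal{C}) = \{\mathcal{C}^+,\mathcal{C}^-\}$ to write $\mathcal{S}\setminus\mathcal{C} = \mathcal{C}^+ \sqcup \mathcal{C}^-$ (disjoint union of its two connected components). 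Applying $h$ gives $h(\mathcal{S}) \setminus h(\mathcal{C}) = h(\mathcal{C}^+) \sqcup h(\mathcal{C}^-)$, with each factor nonempty, connected, and disjoint from the other (by injectivity of $h$).

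The main step, and the only one requiring care, is upgrading this decomposition from ``$h(\mathcal{S})\setminus h(\mathcal{C})$ is a disjoint union of two connected pieces'' to ``these pieces are precisely its connected components''. I would do this by contradiction: suppose some connected component $D$ of $h(\mathcal{S})\setminus h(\mathcal{C})$ strictly contains both $h(\mathcal{C}^+)$ and $h(\mathcal{C}^-)$, or more generally that $h(\mathcal{C}^+)$ and $h(\mathcal{C}^-)$ lie in a single connected component. Then $h^{-1}(D)$ would be a connected subset of $\mathcal{S}\setminus\mathcal{C}$ (connected because $h^{-1}$ is continuous) containing points of both $\mathcal{C}^+$ and $\mathcal{C}^-$, contradicting that these are distinct connected components of $\mathcal{S}\setminus\mathcal{C}$. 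Conversely, $h(\mathcal{C}^+)$ cannot be a proper subset of a larger component, by the same argument applied to its preimage. Hence $\{h(\mathcal{C}^+), h(\mathcal{C}^-)\}$ is exactly $\mathrm{split}(h(\mathcal{S}), h(\mathcal{C}))$, and in particular $|\mathrm{split}(h(\mathcal{S}), h(\mathcal{C}))| = 2$, so $h(\mathcal{C})$ is a separator of $h(\mathcal{S})$.

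For the ``$\Longleftarrow$'' direction, I would simply observe that $h^{-1}$ is a diffeomorphism mapping $h(\mathcal{S})$ to $\mathcal{S}$ and $h(\mathcal{C})$ to $\mathcal{C}$, so the forward implication applied to $h^{-1}$ yields the converse. The one subtlety worth flagging is that the definition of ``separator'' requires the connected components to be subsets of $\mathcal{S}$ (resp.\ $h(\mathcal{S})$); this is automatic here because $h$ is a bijection onto $h(\mathcal{S})$ and we are taking images/preimages with respect to these restricted ambient sets throughout.
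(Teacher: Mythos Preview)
Your proposal is correct and follows essentially the same approach as the paper's proof: both rely on the bijectivity of $h$ to transfer set operations (so that $h(\mathcal{S}\setminus\mathcal{C})=h(\mathcal{S})\setminus h(\mathcal{C})$), the continuity of $h$ and $h^{-1}$ to preserve connectedness, and then obtain the converse by applying the same reasoning to $h^{-1}$. The only cosmetic difference is that the paper argues ``$h(\mathcal{C}^{+})\cup h(\mathcal{C}^{-})$ is disconnected while each piece is connected, hence they are the two components'', whereas you phrase the same step as a contradiction via the preimage of a putative larger component; both are equivalent and equally valid.
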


\begin{proof}
This follows from preservation of inclusion, connectedness, and set operations (union, intersection, difference) under a diffeomorphism.\\
Formally: $\mathcal{C}\subset\mathcal{S}\implies h(\mathcal{C})\subset h(\mathcal{S})$.
\\
$\text{\ensuremath{\mathcal{C}}}^{+}$ and $\mathcal{C}^{-}$ being the connected components of $\mathcal{S}-\mathcal{C}$ implies that $\mathcal{C}^{+}$ and $\mathcal{C}^{-}$ are each connected, and that $\mathcal{S} \setminus \mathcal{C}=\mathcal{C}^{+}\cup\mathcal{C}^{-}$, where $\mathcal{C}^{+}\cup\mathcal{C}^{-}$ is not connected.\\
Each of $\mathcal{S},$$\mathcal{C}$, $\mathcal{C}^{+}$, $\mathcal{C}^{-}$ connected $\implies$Each of $\mathcal{S},h(\mathcal{C})$, $h(\mathcal{C}^{+})$,
$h(\mathcal{C}^{-})$ connected. \\
$\mathcal{S}\setminus \mathcal{C}=\mathcal{C}^{+}\cup\mathcal{C}^{-}\implies h(\mathcal{S})\setminus h(\mathcal{C})=h(\mathcal{C}^{+})\cup h(\mathcal{C}^{-})$\\
$\mathcal{C}^{+}\cup\mathcal{C}^{-}$ not connected $\implies$ $h(\mathcal{C}^{+})\cup h(\mathcal{C}^{-})$
not connected.\\
That $h(\mathcal{C}^{+})\cup h(\mathcal{C}^{-})$ is not connected
but $h(\mathcal{C}^{+})$ and $h(\mathcal{C}^{-})$ are each connected,
implies that $h(\mathcal{C}^{+})$ and $h(\mathcal{C}^{-})$ are the
two connected components of $h(\mathcal{C}^{+})\cup h(\mathcal{C}^{-})$
i.e. of $h(\mathcal{S})-h(\mathcal{C})$.\\
This implies that $\mathrm{split}(h(\mathcal{S}),h(\mathcal{C}))=\{h(\mathcal{C}^{+}),h(\mathcal{C}^{-})\}$.
The implication in the other direction can be obtained in the by applying
the same reasoning using $h^{-1}$.
\end{proof}

\subsubsection*{Proof of step 3}
Let $j=\sigma(i)$ and $K=|\mathbf{A}_{i}|=|\mathbf{B}_{j}|$ and
denote the corresponding set of axis separators as

$\mathcal{A}=\{\Gamma_{\mathcal{S}}(i,\mathbf{A}_{i,1}),\ldots,\Gamma_{\mathcal{S}}(i,\mathbf{A}_{i,K})\}$
and $\mathcal{B}=\{\Gamma_{\mathcal{S}'}(j,\mathbf{B}_{j,1}),\ldots,\Gamma_{\mathcal{S}'}(j,\mathbf{B}_{j,K})\}$

and denote the corresponding sets of halves:

$\mathcal{A}^{+}=\{\Gamma_{\mathcal{S}}^{+}(i,\mathbf{A}_{i,1}),\ldots,\Gamma_{\mathcal{S}}^{+}(i,\mathbf{A}_{i,K})\}$,
$\mathcal{A}^{-}=\{\Gamma_{\mathcal{S}}^{-}(i,\mathbf{A}_{i,1}),\ldots,\Gamma_{\mathcal{S}}^{-}(i,\mathbf{A}_{i,K})\}$,
$\mathcal{A}^{\pm}=\mathcal{A}^{+}\cup\mathcal{A}^{-}$

and $\mathcal{B}^{+}=\{\Gamma_{\mathcal{S}'}^{+}(j,\mathbf{B}_{j,1}),\ldots,\Gamma_{\mathcal{S}'}^{+}(j,\mathbf{B}_{j,K})\}$,
$\mathcal{B}^{-}=\{\Gamma_{\mathcal{S}'}^{-}(j,\mathbf{B}_{j,1}),\ldots,\Gamma_{\mathcal{S}'}^{-}(j,\mathbf{B}_{j,K})\}$,
$\mathcal{B}^{\pm}=\mathcal{B}^{+}\cup\mathcal{B}^{-}$

Proof Step 2, states that $h(\mathcal{A})=\mathcal{B}$.

And we have from the above Lemma that 
\begin{align*}
 & \mathrm{split}(\mathcal{S},\mathcal{C})=\{\mathcal{C}^{+},\mathcal{C}^{-}\}\\
\Longleftrightarrow & \mathrm{split}(h(\mathcal{S}),h(\mathcal{C}))=\{h(\mathcal{C}^{+}),h(\mathcal{C}^{-})\}
\end{align*}

thus the equality of the sets of separators $h(\mathcal{A})=\mathcal{B}$
obtained in Proof Step 2 implies an equality of the sets of halves:

\[
h(\mathcal{A}^{\pm})=\mathcal{B}^{\pm}
\]

Now, the only halves, among all halves, that do not include any of
the separators are $\Gamma_{\mathcal{S}}^{-}(i,\mathbf{A}_{i,1})$
and $\Gamma_{\mathcal{S}}^{+}(i,\mathbf{A}_{i,K})$ i.e. formally:

\[
\{\mathcal{C}\in\mathcal{A}|\forall\mathcal{H}\in\mathcal{A}^{\pm},\mathcal{C}\cap\mathcal{H}=\emptyset\}=\{\Gamma_{\mathcal{S}}^{-}(i,\mathbf{A}_{i,1}),\Gamma_{\mathcal{S}}^{+}(i,\mathbf{A}_{i,K})\}
\]

this property will naturally translate to their mapping by the diffeomorphism
$h$ (due to the preservation of inclusion an intersections)

hence

\[
\{\mathcal{C}\in h(\mathcal{A})|\forall\mathcal{H}\in h(\mathcal{A}^{\pm}),\mathcal{C}\cap\mathcal{H}=\emptyset\}=\{h(\Gamma_{\mathcal{S}}^{-}(i,\mathbf{A}_{i,1})),h(\Gamma_{\mathcal{S}}^{+}(i,\mathbf{A}_{i,K}))\}
\]

i.e. 
\[
\{\mathcal{C}\in\mathcal{B}|\forall\mathcal{H}\in\mathcal{B}^{\pm},\mathcal{C}\cap\mathcal{H}=\emptyset\}=\{h(\Gamma_{\mathcal{S}}^{-}(i,\mathbf{A}_{i,1})),h(\Gamma_{\mathcal{S}}^{+}(i,\mathbf{A}_{i,K}))\}
\]

but we also have, similarly, 

\[
\{\mathcal{C}\in\mathcal{B}|\forall\mathcal{H}\in\mathcal{B}^{\pm},\mathcal{C}\cap\mathcal{H}=\emptyset\}=\{\Gamma_{\mathcal{S}'}^{-}(i,\mathbf{B}_{i,1})),\Gamma_{\mathcal{S}'}^{+}(i,\mathbf{B}_{i,K}))\}
\]

From this we conclude that: 

\[
\{h(\Gamma_{\mathcal{S}}^{-}(i,\mathbf{A}_{i,1})),h(\Gamma_{\mathcal{S}}^{+}(i,\mathbf{A}_{i,K}))\}=\{\Gamma_{\mathcal{S}'}^{-}(i,\mathbf{B}_{i,1})),\Gamma_{\mathcal{S}'}^{+}(i,\mathbf{B}_{i,K}))\}
\]

Thus we have either one of two cases:

Case 1: $h(\Gamma_{\mathcal{S}}^{-}(i,\mathbf{A}_{i,1}))=\Gamma_{\mathcal{S}'}^{-}(i,\mathbf{B}_{i,1})$
and $h(\Gamma_{\mathcal{S}}^{+}(i,\mathbf{A}_{i,K}))=\Gamma_{\mathcal{S}'}^{+}(i,\mathbf{B}_{i,K})$.
We associate this case with $s_{i}=+1$.

Case 2: $h(\Gamma_{\mathcal{S}}^{-}(i,\mathbf{A}_{i,1}))=\Gamma_{\mathcal{S}'}^{+}(i,\mathbf{B}_{i,K})$
and $h(\Gamma_{\mathcal{S}}^{+}(i,\mathbf{A}_{i,K}))=\Gamma_{\mathcal{S}'}^{-}(i,\mathbf{B}_{i,1})$.
We associate this case with $s_{i}=-1$.

\paragraph{Case 1: $s_{i}=+1$, $h(\Gamma_{\mathcal{S}}^{-}(i,\mathbf{A}_{i,1}))=\Gamma_{\mathcal{S}'}^{-}(i,\mathbf{B}_{i,1})$
and $h(\Gamma_{\mathcal{S}}^{+}(i,\mathbf{A}_{i,K}))=\Gamma_{\mathcal{S}'}^{+}(i,\mathbf{B}_{i,K})$}

The half-spaces in $\mathcal{A}^{\pm}$ that include $\Gamma_{\mathcal{S}}^{-}(i,\mathbf{A}_{i,1})$
are only the $\Gamma_{\mathcal{S}}^{-}$, formally:

\[
\{\mathcal{H}\in\mathcal{A}^{\pm}|\Gamma_{\mathcal{S}}^{-}(i,\mathbf{A}_{i,1})\subset\mathcal{H}\}=\mathcal{A}^{-}
\]

this relationship will be maintained under a diffeomorphism $h$ i.e.

\[
\{\mathcal{H}\in h(\mathcal{A}^{\pm})|h(\Gamma_{\mathcal{S}}^{-}(i,\mathbf{A}_{i,1}))\subset\mathcal{H}\}=h(\mathcal{A}^{-})
\]

thus, since $h(\mathcal{A}^{\pm})=\mathcal{B}^{\pm}$ and $h(\Gamma_{\mathcal{S}}^{-}(i,\mathbf{A}_{i,1}))=\Gamma_{\mathcal{S}'}^{-}(i,B_{i,1})$
this can be rewritten as

\begin{align*}
\{\mathcal{H}\in\mathcal{B}^{\pm}|\Gamma_{\mathcal{S}'}^{-}(i,\mathbf{B}_{i,1})\subset\mathcal{H}\} & =h(\mathcal{A}^{-})\\
\mathcal{B}^{-} & =h(\mathcal{A}^{-})\\
\end{align*}

or, written less compactly: 
\[
\{h(\Gamma_{\mathcal{S}}^{-}(i,\mathbf{A}_{i,1})),\ldots,h(\Gamma_{\mathcal{S}}^{-}(i,\mathbf{A}_{i,K}))\}=\{\Gamma_{\mathcal{S}'}^{-}(j,\mathbf{B}{}_{j,1}),\ldots,\Gamma_{\mathcal{S}'}^{-}(j,\mathbf{B}{}_{j,K})\}
\]

Furthermore, strict inclusion defines an order relationship between
the elements of $\mathcal{A}^{-}$, which will be preserved under the
diffeomorphism, and thus defines a strict ordering between them:

\begin{align*}
 & \Gamma_{\mathcal{S}}^{-}(i,\mathbf{A}_{i,1})\subsetneq\Gamma_{\mathcal{S}}^{-}(i,\mathbf{A}_{i,2})\subsetneq\ldots\subsetneq\Gamma_{\mathcal{S}}^{-}(i,\mathbf{A}_{i,K})\\
\implies & h(\Gamma_{\mathcal{S}}^{-}(i,\mathbf{A}_{i,1}))\subsetneq h(\Gamma_{\mathcal{S}}^{-}(i,\mathbf{A}_{i,2}))\subsetneq\ldots\subsetneq h(\Gamma_{\mathcal{S}}^{-}(i,\mathbf{A}_{i,K}))
\end{align*}

we know that the $h(\Gamma_{\mathcal{S}}^{-}(i,A_{i,k}))$ are the
elements of $\mathcal{B}^{-}$ (as we have just sown that $\mathcal{B}^{-}=h(\mathcal{A}^{-})$),
i.e. the $\Gamma_{\mathcal{S}'}^{-}(j,\mathbf{B}_{j,k})$. Their order
is defined uniquely by strict inclusion as 
\[
\Gamma_{\mathcal{S}'}^{-}(j,\mathbf{B}{}_{j,1})\subsetneq\Gamma_{\mathcal{S}'}^{-}(j,\mathbf{B}{}_{j,2})\subsetneq\ldots\subsetneq\Gamma_{\mathcal{S}'}^{-}(j,\mathbf{B}{}_{j,K})
\]

thus we can conclude not only (as we showed with $\mathcal{B}^{-}=h(\mathcal{A}^{-})$)
that
\[
\{h(\Gamma_{\mathcal{S}}^{-}(i,\mathbf{A}_{i,1})),\ldots,h(\Gamma_{\mathcal{S}}^{-}(i,\mathbf{A}_{i,K}))\}=\{\Gamma_{\mathcal{S}'}^{-}(j,\mathbf{B}{}_{j,1}),\ldots,\Gamma_{\mathcal{S}'}^{-}(j,\mathbf{B}{}_{j,K})\}
\]

but also that their ordering is preserved i.e.

\[
(h(\Gamma_{\mathcal{S}}^{-}(i,\mathbf{A}_{i,1})),\ldots,h(\Gamma_{\mathcal{S}}^{-}(i,\mathbf{A}_{i,K})))=(\Gamma_{\mathcal{S}'}^{-}(j,\mathbf{B}{}_{j,1}),\ldots,\Gamma_{\mathcal{S}'}^{-}(j,\mathbf{B}{}_{j,K}))
\]

or expressed differently: 
\[
\forall k\in\{1,\ldots,K\},h(\Gamma_{\mathcal{S}}^{-}(i,\mathbf{A}_{i,k}))=\Gamma_{\mathcal{S}'}^{-}(j,\mathbf{B}{}_{j,k})
\]

it is straightforward to conclude from this that we also have

\begin{align*}
 & \forall k\in\{1,\ldots,K\},\\
 & h(\Gamma_{\mathcal{S}}^{-}(i,\mathbf{A}_{i,k}))=\Gamma_{\mathcal{S}'}^{-}(j,\mathbf{B}{}_{j,k})\\
 & h(\Gamma_{\mathcal{S}}^{+}(i,\mathbf{A}_{i,k}))=\Gamma_{\mathcal{S}'}^{+}(j,\mathbf{B}{}_{j,k})\\
 & h(\Gamma_{\mathcal{S}}(i,\mathbf{A}_{i,k}))=\Gamma_{\mathcal{S}'}(j,\mathbf{B}{}_{j,k})
\end{align*}

or stated differently, that:

\begin{align*}
 & \forall k\in\{1,\ldots,K\},\forall z\in\mathcal{S}\\
 & z\in\Gamma_{\mathcal{S}}^{-}(i,\mathbf{A}_{i,k})\Longleftrightarrow h(z)\in\Gamma_{\mathcal{S}'}^{-}(j,\mathbf{B}{}_{j,k})\\
 & z\in\Gamma_{\mathcal{S}}^{+}(i,\mathbf{A}_{i,k})\Longleftrightarrow h(z)\in\Gamma_{\mathcal{S}'}^{+}(j,\mathbf{B}{}_{j,k})\\
 & z\in\Gamma_{\mathcal{S}}(i,\mathbf{A}_{i,k})\Longleftrightarrow h(z)\in\Gamma_{\mathcal{S}'}(j,\mathbf{B}{}_{j,k})
\end{align*}

or equivalently

\begin{align*}
 & \forall k\in\{1,\ldots,K\},\forall z'\in\mathcal{S}',\\
 & h^{-1}(z')\in\Gamma_{\mathcal{S}}^{-}(i,\mathbf{A}_{i,k})\Longleftrightarrow z'\in\Gamma_{\mathcal{S}'}^{-}(j,\mathbf{B}{}_{j,k})\\
 & h^{-1}(z')\in\Gamma_{\mathcal{S}}^{+}(i,\mathbf{A}_{i,k})\Longleftrightarrow z'\in\Gamma_{\mathcal{S}'}^{+}(j,\mathbf{B}{}_{j,k})\\
 & h^{-1}(z')\in\Gamma_{\mathcal{S}}(i,\mathbf{A}_{i,k})\Longleftrightarrow z'\in\Gamma_{\mathcal{S}'}(j,\mathbf{B}{}_{j,k})
\end{align*}

which we may also write

\begin{align*}
 & \forall k\in\{1,\ldots,K\},\forall z'\in\mathcal{S}',\\
 & z'_{j}<\mathbf{B}_{j,k}\Longleftrightarrow h^{-1}(z')_{i}<\mathbf{A}_{i,k}\\
 & z'_{j}>\mathbf{B}_{j,k}\Longleftrightarrow h^{-1}(z')_{i}>\mathbf{A}_{i,k}\\
 & z'_{j}=\mathbf{B}{}_{j,k}\Longleftrightarrow h^{-1}(z')_{i}=\mathbf{A}_{i,k}
\end{align*}

which is what we needed to prove in the main grid structure recovery
theorem.

\paragraph{Case 2: axis reversal $s_{i}=-1$, $h(\Gamma_{\mathcal{S}}^{-}(i,\mathbf{A}_{i,1}))=\Gamma_{\mathcal{S}'}^{+}(i,\mathbf{B}_{i,K})$
and $h(\Gamma_{\mathcal{S}}^{+}(i,\mathbf{A}_{i,K}))=\Gamma_{\mathcal{S}'}^{-}(i,\mathbf{B}_{i,1})$}

We can follow the exact same reasoning steps as in case 1, starting
from $h(\Gamma_{\mathcal{S}}^{-}(i,\mathbf{A}_{i,1}))=\Gamma_{\mathcal{S}'}^{+}(i,\mathbf{B}_{i,K})$:
\begin{itemize}
\item to first show that $h(\mathcal{A}^{-})=\mathcal{B}^{+}$ i.e. 
\[
\{h(\Gamma_{\mathcal{S}}^{-}(i,\mathbf{A}_{i,1})),\ldots,h(\Gamma_{\mathcal{S}}^{-}(i,\mathbf{A}_{i,K}))\}=\{\Gamma_{\mathcal{S}'}^{+}(j,\mathbf{B}{}_{j,1}),\ldots,\Gamma_{\mathcal{S}'}^{+}(j,\mathbf{B}{}_{j,K})\}
\]
\item then use the preservation of the order relation defined by inclusion
of halves to establish that
\[
(h(\Gamma_{\mathcal{S}}^{-}(i,\mathbf{A}_{i,1})),\ldots,h(\Gamma_{\mathcal{S}}^{-}(i,\mathbf{A}_{i,K})))=(\Gamma_{\mathcal{S}'}^{+}(j,\mathbf{B}{}_{j,K}),\ldots,\Gamma_{\mathcal{S}'}^{+}(j,\mathbf{B}{}_{j,1}))
\]
\item thus that
\begin{align*}
 & \forall k\in\{1,\ldots,K\},\\
 & h(\Gamma_{\mathcal{S}}^{-}(i,\mathbf{A}_{i,k}))=\Gamma_{\mathcal{S}'}^{+}(j,\mathbf{B}{}_{j,K-k+1})\\
 & h(\Gamma_{\mathcal{S}}^{+}(i,\mathbf{A}_{i,k}))=\Gamma_{\mathcal{S}'}^{-}(j,\mathbf{B}{}_{j,K-k+1})\\
 & h(\Gamma_{\mathcal{S}}(i,\mathbf{A}_{i,k}))=\Gamma_{\mathcal{S}'}(j,\mathbf{B}{}_{j,K-k+1})
\end{align*}
\item conclude that
\begin{align*}
\forall k & \in\{1,\ldots,K\},\forall z'\in\mathcal{S}',\\
 & z'_{j}>\mathbf{B}_{j,k}\Longleftrightarrow h^{-1}(z')_{i}<\mathbf{A}_{i,K-k+1}\\
 & z'_{j}<\mathbf{B}_{j,k}\Longleftrightarrow h^{-1}(z')_{i}>\mathbf{A}_{i,K-k+1}\\
 & z'_{j}=\mathbf{B}_{j,k}\Longleftrightarrow h^{-1}(z')_{i}=\mathbf{A}_{i,K-k+1}
\end{align*}
which is what we needed to prove in the main grid structure recovery
theorem.
\end{itemize}

\section{Background on non-removable discontinuities}
\label{app:nonremovable}

The definition of continuity of a function is leveraged in section \ref{sec:discontinuities-preservation}.

\begin{definition}
\label{def:cont}
    A function $f$ is continuous at a point $x_0$ if
    
    $\forall \epsilon > 0; \ \exists \delta > 0 \ : \ d(x, x_0) < \delta \ \Rightarrow \ \vert f(x) - f(x_0) \vert < \epsilon$

    for $x$ in the domain of $f$ and $d(x, x_0)$ being the distance between points $x$ and $x_0$.
\end{definition}

That is, for any positive real number $\epsilon$, which can be infinitely small, there exists a positive real number $\delta$ such that for $x$ in the interval $x_0 - \delta < x < x_0 + \delta$, the function of $x$ will be at the interval  $f(x_0)-\epsilon < f(x) < f(x_0)+\epsilon$. So for $f(x)$ to be in a small neighborhood around $f(x_0)$, $x$ can be chosen in a small neighborhood around $x_0$.

\begin{definition}
    \citep{marsden1993elementary} A function $f: A \subset M \rightarrow N$ is called \textbf{continuous on} the set $B \subset A$ if $f$ is continuous at each point of B. If we just say that $f$ is \textbf{continuous}, we mean that $f$ is continuous on its domain $A$.
\end{definition}

These definitions are relevant because when a function is continuous, Definition \ref{def:cont} will hold at all the points in its domain. However, there can be cases where a function is not continuous at a point, but it is continuous almost everywhere. 
We define the \textbf{removable discontinuity} of a PDF $p$ at point $x_0$ as a discontinuity that can be removed by mapping it to another PDF $p'$ that has the same probability measure. The idea is that the area under the curve is the same in the equivalent PDF without the discontinuity at any interval, as illustrated in Figure \ref{fig:removable}. More precisely, the PDF $p_x$ represents a probability distribution where we can evaluate the probability at an interval by integrating over it, such as $\text{Pr}[a \leq X \leq b] = \int_a^b p_X(x)dx$ for $a,b$ belonging to a \textit{measurable set} $\mathcal{M}$ \footnote{We refer to \citep{capinski2013measure} Chapter 2, definition 2.3, for coverage on measurable sets.}. Hence, a probability distribution $Q$ maps a measurable set $\mathcal{M}$ to $[0,1]$. $Q:\mathcal{M} \rightarrow [0,1]$. We notice and exemplify in the figure that multiple PDFs can represent the same probability distribution. When PDFs represent the same probability distribution, we will use the terminology that they belong to the same \textit{equivalence class}.

\begin{figure}[H]
    \centering
    \includegraphics[scale=0.6]{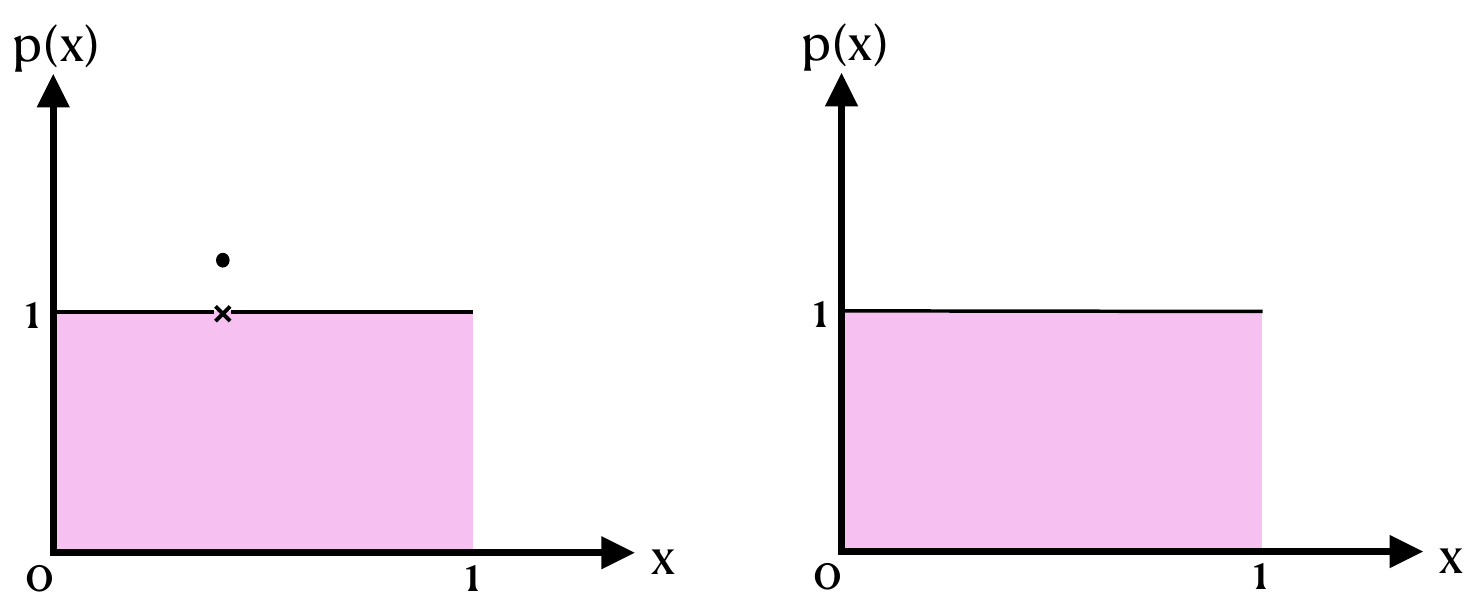}
    \caption{The PDF on the left has a removable discontinuity, but it can be mapped to the PDF on the right, which is identical but continuous everywhere. In whatever interval taken within the domain, the area under the PDF is exactly the same for both of them.}
    \label{fig:removable}
\end{figure}

A \textbf{non-removable discontinuity} is the type of discontinuity that cannot be removed because the discontinuity affects the area below the PDF and therefore all the PDFs in the same equivalence class present a discontinuity at $x_0$, as illustrated in Figure \ref{fig:nonremovable}.

\begin{figure}[H]
    \centering
    \includegraphics[scale=0.6]{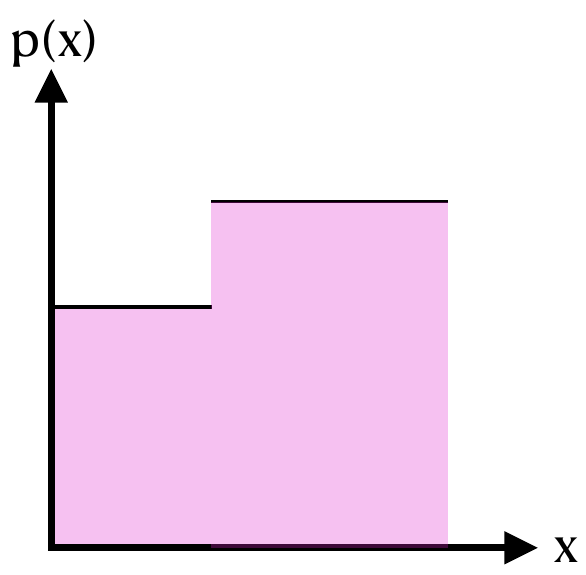}
    \caption{Example of a PDF with a non-removable discontinuity. The area under the PDF is affected by the discontinuity.}
    \label{fig:nonremovable}
\end{figure}

\section{On the necessity of axis-aligned landmarks in the probability density \texorpdfstring{$p_Z$}{ } }
\label{app:necessity_landmarks}

Our justification is built on a powerful result, Theorem 6 from \citet{buchholz2022function}. We restate and revisit the intuitions behind the result below.  We use the notation $\Phi_{*}\mathbb{P}$ to denote the pushforward of $\mathbb{P}$ under $\Phi$. 

\begin{theorem}
    Let $p_Z$ be a twice differentiable probability density with bounded gradient. Suppose that $x = \Phi(z)$ where the distribution $\mathbb{P}$ of $z$ has density $p_Z$ and $\Phi$ is a diffeomorphism with $\det D\Phi(z) = 1$ for $z \in \mathbb{R}^d$. Then there is a family of functions $\Phi_t : \mathbb{R} \times \mathbb{R}^d \rightarrow \mathbb{R}^d$ indexed by $t$ with $\Phi_0 = q$ and $q_t \not= \Phi_0$ for $t \not= 0$ such that $\det D\Phi_t(z) = 1$ and $(\Phi_t)_{*}\mathbb{P} = \Phi_{*} \mathbb{P}$.
\end{theorem}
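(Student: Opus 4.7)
The plan is to reduce the problem to constructing a nontrivial one-parameter family of volume-preserving diffeomorphisms $\Psi_t: \mathbb{R}^d \to \mathbb{R}^d$ that fix the distribution $\mathbb{P}$, i.e.\ $(\Psi_t)_{*}\mathbb{P} = \mathbb{P}$, with $\Psi_0 = \mathrm{id}$ and $\Psi_t \neq \mathrm{id}$ for $t \neq 0$. Once we have such a family, we simply set $\Phi_t := \Phi \circ \Psi_t$ and verify the claims: the chain rule and multiplicativity of the determinant give $\det D\Phi_t(z) = \det D\Phi(\Psi_t(z)) \cdot \det D\Psi_t(z) = 1$, and functoriality of pushforwards gives $(\Phi_t)_{*}\mathbb{P} = \Phi_{*}(\Psi_t)_{*}\mathbb{P} = \Phi_{*}\mathbb{P}$. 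Non-triviality $\Phi_t \neq \Phi_0$ follows from the injectivity of composition with the fixed diffeomorphism $\Phi$.

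To construct $\Psi_t$, I would look for a smooth vector field $V$ on $\mathbb{R}^d$ that is simultaneously (i) divergence-free, so that its flow preserves Lebesgue volume, and (ii) everywhere tangent to the level sets of $p_Z$, i.e.\ $V(z) \cdot \nabla p_Z(z) = 0$, so that the flow maps each level set into itself and therefore leaves $p_Z$ invariant. The flow $\Psi_t$ generated by $V$ will then satisfy $\det D\Psi_t \equiv 1$ (Liouville's theorem) and preserve $\mathbb{P}$ because it preserves both $p_Z$ and Lebesgue measure.

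The concrete construction of $V$ would proceed as follows. In the two-dimensional case, for any smooth scalar function $h: \mathbb{R} \to \mathbb{R}$, set $H := h \circ p_Z$ and take $V := (\partial_{z_2} H, -\partial_{z_1} H) = h'(p_Z)\,(\partial_{z_2} p_Z, -\partial_{z_1} p_Z)$. A direct computation using the symmetry of second partials gives $\nabla \cdot V = 0$, and clearly $V \cdot \nabla p_Z = 0$. For $d > 2$, I would embed this construction into a coordinate 2-plane $(z_i, z_j)$ after multiplying by a smooth compactly supported bump $\chi(z_{\neq i,j})$; equivalently, one can use the local normal-form theorem for submersions to obtain coordinates $(y_1, \ldots, y_d)$ near a point where $\nabla p_Z \neq 0$ in which $p_Z$ depends only on $y_1$, and then transport any nontrivial compactly supported divergence-free field in the $(y_2, \ldots, y_d)$ directions back to $\mathbb{R}^d$. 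Truncation by a smooth bump, combined with the bounded-gradient hypothesis, ensures $V$ is globally Lipschitz with bounded support, so $\Psi_t$ exists and is a $C^{\infty}$ diffeomorphism for all $t \in \mathbb{R}$.

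The main obstacle is guaranteeing non-triviality of $\Psi_t$, i.e.\ producing a region where $V$ is not identically zero. This requires a point where $\nabla p_Z \neq 0$ and the chosen profile $h'$ (or the bump $\chi$) does not vanish there. Since $p_Z$ is a non-constant probability density (it integrates to $1$ on $\mathbb{R}^d$), the open set $\{\nabla p_Z \neq 0\}$ is non-empty, so such a point exists; choosing $\chi$ supported in a small neighborhood of it with $\chi > 0$ at the point yields $V \not\equiv 0$, and consequently $\Psi_t \neq \mathrm{id}$ for all $t \neq 0$ in some interval. This completes the construction and hence the proof. The geometric content is exactly the motivation given in the main paper: whenever $p_Z$ is continuous, one can slide mass along its isosurfaces without changing the observed distribution, so recovering the factors precisely is hopeless — unless, as proposed, non-removable discontinuities in $p_Z$ obstruct these sliding flows.
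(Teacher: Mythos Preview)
Your proposal is correct and follows essentially the same route as the paper's sketch: both generate $\Psi_t$ as the flow of the divergence-free ``Hamiltonian'' vector field with components $(\partial_j p_Z,\,-\partial_i p_Z)$ in a coordinate $2$-plane (the paper's $X^{ij}$), which is tangent to the level sets of $p_Z$ and hence leaves $\mathbb{P}$ invariant. Your write-up is a bit more careful---you make the reduction $\Phi_t = \Phi\circ\Psi_t$, the global-flow existence, and the non-triviality argument explicit---whereas the paper phrases the measure-preservation check via the continuity equation $\partial_t p_t + \mathrm{Div}(p_t X^{ij}) = 0$; but the core construction is identical.
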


Consider the case when $\Phi$ is an identity map. In that case, the above theorem implies that there exists a family of volume preserving transformations different from the identity map such that $(\Phi_t)_{*}\mathbb{P} = \mathbb{P}$. 

We revisit the intuition behind the proof, which will be reused for the rest of this section. We define the flow of a vector field as a map $\Phi:\mathbb{R}\times\mathbb{R}^{d}\rightarrow\mathbb{R}^d$ such that 

\begin{equation}
\Phi_0(z) = z,   \partial_t \Phi_t(z) = X(\Phi_t(z))  \label{eqn: flow_q}
\end{equation}

We write $ \Phi(t,z)$ as $\Phi_t(z)$ in the above expression.  We can interpret $\Phi_t(z)$ as the position of a particle, which started at $z$ at time $t=0$. Then $X(\Phi_t(z))$ is the velocity of the particle at time $t$. Define a vector field $X^{ij}:\mathbb{R}^{d}\rightarrow \mathbb{R}^{d}$ as 

\begin{equation}
    \begin{split}
        X^{ij}_{k} = \begin{cases} 
        \;\;\partial{p}_j \;\;\; k = i \\ 
        -\partial{p}_i \;\;\; k = j \\ 
        0 \;\;\;\;\;\;\;\;\; \text{otherwise}
        \end{cases},
    \end{split}
\end{equation}
where $X^{ij}_k$ is the $k^{th}$ component of the vector field. Observe that $X^{ij}$ is orthogonal to the isolines of the density $p_Z$ along the plane corresponding to components $i$ and $j$. Under the flows described in equation \eqref{eqn: flow_q}, the probability measures evolve in time and satisfy the continuity equation stated next. Formally stated, the density associated with $(q_t)_{*}\mathbb{P}$ satisfies. 

\begin{equation}
    \partial p_t + \mathsf{Div}(p_t X^{ij}) = 0, \;\; p_0 = p
\end{equation}

Observe that $\mathsf{Div}(X^{ij}) = \partial_i \partial_j p -\partial_j\partial_i p =0 $.  Also, observe that $\mathsf{Div}(p X^{ij}) = X_{ij}\cdot \nabla p = 0$. From $\mathsf{Div}(p X^{ij})= 0$, we can infer that the $(q_t)_{*}\mathbb{P} = \mathbb{P}$ and from $\mathsf{Div}(X^{ij}) =0$, $\Phi_t$ is a volume preserving diffeomorphism. Consider the following autoencoder where the decoder is $\tilde{f}_t = f \circ q_t$ and the encoder is $\tilde{g}_t = q_t^{-1} \circ f^{-1}$. Observe that these autoencoders achieve perfect reconstruction. The encoder fails at identifying the underlying latents as the estimated latents are related to the true latents by the map $\Phi_t^{-1}(\cdot)$. Thus we have seen that even if the learner knows the true density $p_Z$, there exists a family of autoencoders that cannot achieve identification. 

Suppose $Z=[Z_1,Z_2]$. Consider that $p_Z$ is a density defined over the unit square centered at origin. In Figure~\ref{fig:appendix_illustration_2d}, we show one such density. Suppose we are interested in separating the points in the four quadrants, where each quadrant provides a distinct quantization for all the values of $z$ assumed in it. We  consider a family of densities $p_Z$, whose isolines crosses $z_1=0$ at least once. We further also assume that these densities $p_Z$ are differentiable over the support and have a bounded gradient. The isoline of this density crosses $z_1=0$. Consider two points shown in pink and blue colors in Figure~\ref{fig:appendix_illustration_2d}.
At $t=0$, the pink point is to the right of $z_1=0$ and the blue point is to the left of $z_1=0$. Under the flow defined in equation \eqref{eqn: flow_q}, we would argue that after some time $\tau$ has elapsed, the pink point moves to the left of $z_1=0$, while the blue point is still to the left of $z_1=0$. Under the map $\Phi_{\tau}(z) = (q_{\tau}^{1}(z),q_{\tau}^{2}(z)) $, the two points are both to the left of $z_1=0$. If all the points $\Phi_{\tau}^{1}(z)<0$ are associated with the same quantization, then the pink point and the blue point are associated with same quantization, while their true quantization is different. We provide further details on the construction. We can assume that the pink point at $t=0$ is very close to $z_1=0$ but on the right of $z_1=0$. Further, we assume that the magnitude of the flow along the negative $z_1$ direction (which is the vector tangent to the isoline) in the neighborhood of the pink point at $t=0$ is bounded below by at least $\epsilon_1$. As a result, the point moves at least $\tau\epsilon$ distance along $z_1$ in time $\tau$. We can assume that the pink point started with $z_1<\epsilon\tau$ and thus it crosses to the left of $z_1=0$ after $\tau$ amount of time has elapsed.  At the same time, consider the blue point to the left of $z_1=0$ at time $t=0$. We assume that the flow along the $z_1$ direction in the $\rho$ radius neighborhood of the blue point at $t=0$ points from right to left, i.e., it is in the negative $z_1$ direction. We choose $\tau$ to be sufficiently small, $\tau < \rho/\gamma$, where $\gamma$ is the largest value that the velocity under the flow can take (this follows from the assumption that the density has a bounded gradient). Under this constraint, the blue point stays to the left of $z_1=0$ after $\tau$ amount of time has elapsed.

\begin{figure}
    \centering
    \includegraphics[trim=2in 4in 2in 2in, width=5in]{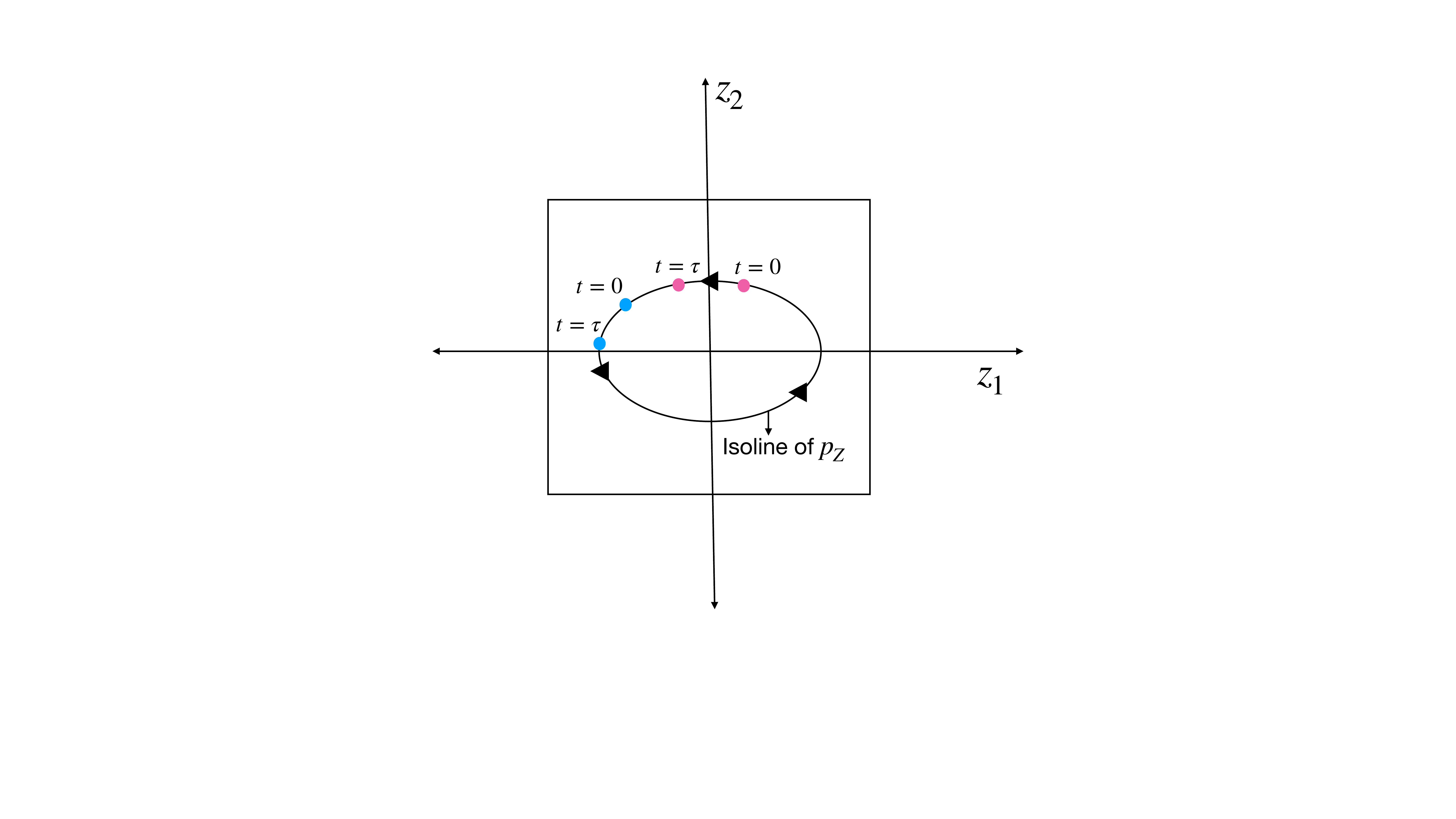}
    \caption{Two dimensional illustration to explain why isolines cannot cross the axis.}
    \label{fig:appendix_illustration_2d}
\end{figure}

The above argument explains that if the isolines of $p_Z$ cross the grids, then quantized identification is not possible even if we know $p_Z$. As a result, we need to focus on the densities $p_Z$ whose isolines are restricted to each of the four quadrants. Consider the family of densities $p_Z$ with axis-aligned discontinuities. Suppose the density in each of the quadrants is continuous, then for this class of densities the isolines cannot cross any of the axis $z_1=0$ and $z_2=0$. For this class of densities, our theory established quantized factor identification guarantees even without requiring knowledge of $p_Z$. Could there be other densities beyond discontinuous densities with axis-aligned landmarks that permit quantized identification? This is a fairly non-trivial and important question left for future work. 

\end{document}